\newtheorem{theorem}{Theorem}[section]
\newtheorem{lemma}[theorem]{Lemma}
\newtheorem{definition}[theorem]{Definition}
\newtheorem{fact}[theorem]{Fact}
\newtheorem{example}[theorem]{Example}
\author{
Spencer Frei \\
UC Berkeley \\
frei@berkeley.edu
\and 
Niladri S. Chatterji  \\
Stanford University \\
niladri@cs.stanford.edu
      \and
Peter L. Bartlett \\
UC Berkeley \\
peter@berkeley.edu
}
\date{\today}
\title{\textbf{Benign Overfitting without Linearity: \\Neural Network Classifiers Trained by Gradient Descent\\ for Noisy Linear Data}}
\begin{document}
\maketitle

\begin{abstract}

Benign overfitting, the phenomenon where interpolating models generalize well in the presence of noisy data, was first observed in neural network models trained with gradient descent. 
To better understand this empirical observation, we consider the generalization error of two-layer neural networks trained to interpolation by gradient descent on the logistic loss following random initialization.
We assume the data comes from well-separated class-conditional log-concave distributions and allow for a constant fraction of the training labels to be corrupted by an adversary.  We show that in this setting, neural networks exhibit benign overfitting: they can be driven to zero training error, perfectly fitting any noisy training labels, and simultaneously achieve 
minimax optimal test error. 
In contrast to previous work on benign overfitting that require linear or kernel-based predictors, our analysis holds in a setting where both the model and learning dynamics are fundamentally nonlinear.

\end{abstract}
\section{Introduction}
Trained neural networks have been shown to generalize well to unseen data even when trained to interpolation (that is, vanishingly small training loss) on training data with significant label noise~\citep{zhang2017rethinkinggeneralization,belkin2019reconciling}.   This empirical observation is surprising as it appears to violate long standing intuition from statistical learning theory that the greater the capacity of a model to fit randomly labelled data, the worse the model's generalization performance on test data will be.  This conflict between theory and practice has led to a surge of theoretical research into the generalization performance of interpolating statistical models to see if this `benign overfitting' phenomenon can be observed in simpler settings that are more amenable to theoretical investigation.  We now understand that benign overfitting can occur in many classical statistical settings, including linear regression~\citep{hastie2019surprises,bartlett2020.benignoverfitting.pnas,muthukumar2020harmless,negrea2020defense,tsigler2020benign,chinot2020robustness_min,chatterji2021interplay}, sparse linear regression~\citep{koehler2021uniform,chatterji2021foolish,li2021minimum,wang2021tight}, logistic regression~\citep{montanari2019generalization,chatterji2020linearnoise,liang2020precise,muthukumar2021classification,wang2021benignmulticlass,minsker2021minimaxinterpolation}, and kernel-based estimators~\citep{belkin2018overfittingperfectfitting,mei2019generalization,liang2020just,liang2020multipledescent}, among others, and our understanding of when and why this phenomenon occurs in these settings is rapidly increasing. And yet, for the class of models from which the initial motivation for understanding benign overfitting arose---trained neural networks---we understand remarkably little. 

In this work,   
we consider the class of two-layer networks with smoothed leaky ReLU activations trained on data coming from a high-dimensional linearly separable dataset where a constant fraction of the training labels can be adversarially corrupted~\citep{kearns.agnostic}.  We demonstrate that networks trained by standard gradient descent on the logistic loss in this setting exhibit benign overfitting: they can be driven to zero loss, and thus interpolate the noisy training data, and simultaneously achieve 
minimax optimal generalization error.

Our results follow by showing that the training loss can be driven to zero while the expected normalized margin for clean data points is large.  The key technical ingredient of the proof for both of these claims is a `loss ratio bound': we show that the gradient descent dynamics ensure that the loss of each example decreases at roughly the same rate throughout training.  This ensures that the noisy points cannot have an outsized influence on the training dynamics, so that we can have control over the normalized margin for clean data points throughout training.  At a high-level, this is possible because the data is high-dimensional, which ensures that all data points are roughly mutually orthogonal. 

Our results hold for finite width networks, and since the logistic loss is driven to zero, the weights traverse far from their randomly initialized values.  As a consequence, 
this shows benign overfitting behavior in trained neural networks beyond the kernel regime~\citep{jacot2018ntk}.

\subsection{Related Work}
A number of recent works have characterized the generalization performance of interpolating models.  Most related to ours are those in the classification setting.~\citet{chatterji2020linearnoise} study 
the high-dimensional sub-Gaussian mixture model setup we consider here, where labels can be corrupted adversarially, and analyze the performance of the maximum margin linear classifier.  They do so by utilizing recent works that show that the weights found by unregularized gradient descent on the logistic loss asymptotically approach the maximum margin classifier for linearly separable data~\citep{soudry2018implicitbias,jitelgarsky2019implicit}.  Our proof techniques can be viewed as an extension of some of the techniques developed by Chatterji and Long in the logistic regression setting to two-layer neural networks.~\citet{muthukumar2021classification} study the behavior of the overparameterized max-margin classifier in a discriminative classification model with label-flipping noise, by connecting the behavior of the max-margin classifier to the ordinary least squares solution.  They show that under certain conditions, all training data points become support vectors of the maximum margin classifier \citep[see also,][]{hsu2021proliferation}. Following this, \citet{wang2021binary} and \citet{cao2021benign} analyze the behavior of the overparameterized max-margin classifier in high dimensional mixture models by exploiting the connection between the max-margin classifier and the OLS solution. In contrast with these works, we consider the generalization performance of an interpolating nonlinear neural network.

A key difficulty in establishing benign overfitting guarantees for trained neural networks lies in demonstrating that the neural network can interpolate the data.~\citet{brutzkus2018sgd} study SGD on two-layer networks with leaky ReLU activations and showed that for linearly separable data, stochastic gradient descent on the hinge loss will converge to zero training loss.  They provided guarantees for the test error provided the number of samples is sufficiently large relative to the input dimension and the Bayes error rate is zero, but left open the question of what happens when there is label noise or when the data is high-dimensional.~\citet{frei2021provable} show that for linear separable data with labels corrupted by adversarial label noise~\citep{kearns.agnostic}, SGD on the logistic loss of two-layer leaky ReLU networks achieves test error that is at most a constant multiple of the square root of the noise rate under mild distributional assumptions. 
However, their proof technique did not allow for the network to be trained to interpolation. 
In contrast, we allow for the network to be trained to arbitrarily small loss and hence interpolate noisy data.  In principle, this could allow for the noisy samples to adversely influence the classifier, but we show this does not happen. 

A series of recent works have exploited the connection between overparameterized neural networks and an infinite width approximation known as the neural tangent kernel (NTK)~\citep{jacot2018ntk,allenzhu2019convergence,zou2019gradient,du2019-1layer,arora2019exact,soltanolkotabi2017overparameterized}.  These works show that for a certain scaling regime of the initialization, learning rate, and width of the network, neural networks trained by gradient descent behave similarly to their linearization around random initialization and can be well-approximated by the NTK.  The near-linearity simplifies much of the analysis of neural network optimization and generalization.  Indeed, a number of recent works have characterized settings in which neural networks in the kernel regime can exhibit benign overfitting~\citep{liang2020multipledescent,montanari2021interpolationphasetransition}.

Unfortunately, the kernel approximation fails to meaningfully capture a number of aspects of neural networks trained in practical settings, such as the ability to learn features~\citep{yang2021feature}, so that previous kernel-based approaches for understanding neural networks provide a quite restricted viewpoint for understanding neural networks in practice.  By contrast, in this work, we develop an analysis of benign overfitting in finite width neural networks trained for many iterations on the logistic loss.  We show that gradient descent drives the logistic loss to zero so that the weights grow to infinity, far from the near-initialization region where the kernel approximation holds, while the network simultaneously maintains a positive margin on clean examples.  This provides the first guarantee for benign overfitting that does not rely upon an effectively linear evolution of the parameters.   

Finally, we note in a concurrent work~\citet{cao2022benignconvolutional} characterize the generalization performance of interpolating two-layer convolutional neural networks.  They consider a distribution where input features consist of two patches, a `signal' patch and a `noise' patch, and binary output labels are a deterministic function of the signal patch.  They show that if the signal-to-noise ratio is larger than a threshold value then the interpolating network achieves near-zero test error, while if the signal-to-noise ratio is smaller than the threshold then the interpolating network generalizes poorly. 
There are a few key differences in our results.  
First, our setup allows for a constant fraction of the training labels to be random, while in their setting the training labels are a deterministic function of the input features.  Achieving near-zero training loss in our setting thus requires overfitting to noisy labels, in contrast to their setting where such overfitting is not possible.  
Second, they require the input dimension to be at least as large as $m^2$ (where $m$ is the number of neurons in the network), while our results do not make any assumptions on the relationship between the input dimension and the number of neurons in the network.  


\section{Preliminaries}

In this section we introduce the assumptions on the data generation process, the neural network architecture, and the optimization algorithm we consider. 

\subsection{Notation}
We denote the $\ell^2$ norm of a vector $x\in \R^p$ by $\snorm{x}$.  For a matrix $W\in \R^{m\times p}$, we use $\snorm{W}_F$ to denote its Frobenius norm and $\snorm{W}_2$ to denote its spectral norm, and we denote its rows by $w_1, \dots, w_m$.  For an integer $n$, we use the notation $[n]$ to refer to the set $[n] = \{1, 2, \dots, n\}$.  

\subsection{Setting}
We shall let $C>1$ denote a positive absolute constant, and our results will hold for all values of $C$ sufficiently large.  We consider a mixture model setting similar to one previously considered by~\citet{chatterji2020linearnoise}, defined in terms of a joint distribution $\pnoise$ over $(x,y)\in \R^p \times \{\pm 1\}$.  Samples from this distribution can have noisy labels, and so we will find it useful to first describe a `clean' distribution $\pclean$ and then define the true distribution $\pnoise$ in terms of $\pclean$.  Samples $(x,y)$ from $\pnoise$ are constructed as follows:
\begin{enumerate}
    \item Sample a clean label $\tilde y\in \{\pm 1\}$ uniformly at random, $\tilde y \sim \mathsf{Uniform}(\{+1, -1\})$.
    \item Sample $z\sim \pclust$ where
    \begin{itemize}
        \item $\pclust= \pclust^{(1)}\times \cdots \times \pclust^{(p)}$ is a product distribution whose marginals are all mean-zero with sub-Gaussian norm at most one;
        \item $\pclust$ is  a $\lambda$-strongly log-concave distribution over $\R^p$ for some $\lambda>0$;\footnote{That is, $z\sim \pclust$ has a probability density function $p_z$ satisfying $p_z(x) = \exp(-U(x))$ for some  convex function $U: \R^p \to \R$ such that $\nabla^2 U(x) - \lambda I_p$ is positive semidefinite.}
        \item for some $\kappa >0$, it holds that $\E_{z\sim \pclust}[\snorm{z}^2] \geq \kappa p$.
    \end{itemize}
    \item Generate $\tilde x = z + \tilde y \mu$.
    \item Then, given a noise rate $\eta \in [0,1/C]$, $\pnoise$ is any distribution over $\R^p \times \{\pm 1\}$ such that the marginal distribution of the features for $\pnoise$ and $\pclean$ coincide, and the total variation distance between the two distributions satisfies $d_{\mathsf{TV}}(\pclean, \pnoise) \leq \eta$.  Equivalently, $\pnoise$ has the same marginal distribution over $x$ as $\pclean$, but a sample $(x,y)\sim \pnoise$ has label equal to $\tilde y$ with probability $1-\eta(x)$ and has label equal to $-\tilde y$ with probability $\eta(x)$, where $\eta(x)\in [0,1]$ satisfies $\E_{x\sim \pnoise}\left[\eta(x)\right]\le \eta$. 
\end{enumerate}

We note that the above assumptions coincide with those used by~\citet{chatterji2020linearnoise} in the linear setting with the exception of the introduction of an assumption of $\lambda$ strong log-concavity that we introduce.   This assumption is needed so that we may employ a concentration inequality for Lipschitz functions for strongly log-concave distributions.    We note that variations of this data model have also been studied recently~\citep{wang2021binary,liang2021interpolating,wang2021importance}.

One example of a cluster distribution which satisfies the above assumptions is the (possibly anisotropic) Gaussian.
\begin{example}
If $\pclust = \mathsf{N}(0, \Sigma)$, where $\snorm{\Sigma}_2 \leq 1$ and $\snorm{\Sigma^{-1}}\leq 1/\kappa$, and each of the labels are flipped independently with probability $\eta$, then all the properties listed above are satisfied.
\end{example}

Next, we introduce the neural network architecture and the optimization algorithm.  We consider one-hidden-layer neural networks of width $m$ that take the form
\[ f(x; W) := \summ j m a_j \phi(\sip{w_j}{x}),\]
where we denote the input $x\in \R^p$ and emphasize that the network is parameterized by a matrix $W\in \R^{m\times p}$ corresponding to the first layer weights $\{w_j\}_{j=1}^m$.  
The network's second layer weights $\{a_j\}_{j=1}^m$ are initialized $a_j\iid \Unif(\{1/\sqrt m, -1/\sqrt m\})$ and fixed at their initial values.   
We assume the activation function $\phi$ satisfies $\phi(0)=0$ and is strictly increasing,  1-Lipschitz, and $H$-smooth, that is, it is twice differentiable almost everywhere and there exist $\gamma,H >0$ such that
\[ 0 < \gamma \leq \phi'(z) \leq 1,\quad \text{and}\quad |\phi''(z)| \leq H,\,\, \forall z\in \R.\]
An example of such a function is a smoothed leaky ReLU activation, 
\begin{equation}
    \phi_{\mathrm{SLReLU}}(z) = \begin{cases} z - \f{1-\gamma}{4H}, & z \geq 1/H,\\
    \f{1-\gamma}4H z^2 + \f{1+\gamma}{2} z, & |z|\leq 1/H,\\
    \gamma z - \f{1-\gamma}{4H}, & z \leq -1/H.\end{cases}
\end{equation}
As $H\to \infty$, $\phi_{\mathrm{SLReLU}}$ approximates the leaky ReLU activation $z\mapsto \max(\gamma z, z)$.  We shall refer to functions $\phi$ satisfying the above properties as $\gamma$-leaky, $H$-smooth activations.

We assume access to a set of samples $S = \{(x_i, y_i)\}_{i=1}^n\iid \pnoise^n$.  We denote by $\calC\subset [n]$ the set of indices corresponding to samples with \textit{clean} labels, and $\calN$ as the set of indices corresponding to \textit{noisy} labels, so that $i\in \calN$ implies $(x_i, y_i)\sim \pnoise$ is such that $y_i = -\tilde y_i$ using the notation above.

Let $\ell(z) = \log(1+\exp(-z))$ be the logistic loss, and denote the empirical and population risks under $\ell$ by
\[ \hat L(W) := \f 1n \summ i n \ell(y_i f(x_i; W))\quad\text{and}\quad L(W) := \E_{(x,y)\sim \pnoise}\left[ \ell(y f(x; W))\right].\]
We will also find it useful to treat the function $-\ell'(z) = 1/(1+\exp(z))$ as a loss itself: since $\ell$ is convex and decreasing, $-\ell'$ is non-negative and decreasing and thus can serve as a surrogate for the 0-1 loss.  This trick has been used in a number of recent works on neural network optimization~\citep{cao2019generalization,frei2019resnet,jitelgarsky20.polylog,frei2021provable}.  To this end, we introduce the notation,
\begin{align*}
    g(z) &:= - \ell'(z)\quad \text{and}\quad    \hat G(W) := \f 1 n \summ i n g(y_i f(x_i; W)).
\end{align*}
We 
also introduce notation to refer to the function output and the surrogate loss $g$ evaluated at samples for a given time point,
\[ \fit it := f(x_i; \Wt t)\quad \text{and} \quad  \lpit it := g\big(y_i \fit it \big).\]

We initialize the first layer weights independently for each neuron according to standard normals $[\Wt 0]_{i,j}\iid \sfN(0, \sinit^2)$, where $\sinit^2$ is the initialization variance.  The optimization algorithm we consider is unregularized full-batch gradient descent on $\hat L(W)$ initialized at $\Wt 0$ with fixed step-size $\alpha >0$ which has updates
\[ \Wt {t+1} = \Wt t - \alpha \nabla \hat L(\Wt t).\]

Given a failure probability $\delta \in (0,1/2)$, we make the following assumptions on the parameters in the paper going forward:
\begin{enumerate}[label=(A\arabic*)]
    \item \label{a:samples}Number of samples $n\geq C\log(1/\delta)$.
    \item \label{a:dimension}Dimension $p\geq C \max\{n \snorm{\mu}^2, n^2 \log(n/\delta)\}$.
    \item \label{a:norm.mu}Norm of the mean satisfies $\snorm{\mu}^2 \geq C \log(n/\delta)$.
    \item \label{a:noiserate}Noise rate $\eta \leq 1/C$.
    \item \label{a:stepsize}Step-size $\alpha \leq \left(C\max\left\{1,\frac{H}{\sqrt m}\right\}p^2\right)^{-1}$, where $\phi$ is $H$-smooth.
    \item \label{a:sinit}Initialization variance satisfies $\sinit  \sqrt{mp} \leq \alpha$. 
\end{enumerate}

Assumptions~\ref{a:samples},~\ref{a:dimension}, and~\ref{a:norm.mu} above have previously appeared in~\citet{chatterji2020linearnoise} and put a constraint on how the number of samples, dimension, and cluster mean separation can relate to one another.  One regime captured by these assumptions is when the mean separation satisfies $\snorm{\mu}=\Theta(p^\beta)$, where $\beta \in (0, 1/2)$ and $p \geq C\max\{ n^\f{1}{1-2\beta}, n^2 \log(n/\delta)\}$. 
Assumption~\ref{a:sinit} ensures that the first step of gradient descent dominates the behavior of the neural network relative to that at initialization; this will be key to showing that the network traverses far from initialization after a single step, which we show in Proposition~\ref{proposition:non.ntk}.   We note that our analysis holds for neural networks of arbitrary width $m\geq 1$.  

\section{Main Result}
Our main result is that when a neural network is trained on samples from the distribution $\pnoise$ described in the previous section, it will exhibit benign overfitting: the network achieves arbitrarily small logistic loss, and hence interpolates the noisy training data, and simultaneously achieves test error close to the noise rate.

\begin{theorem}\label{theorem:main}
For any $\gamma$-leaky, $H$-smooth activation $\phi$, and for all $\kappa \in (0,1)$, $\lambda > 0$, there is a $C>1$ such that provided Assumptions~\ref{a:samples} through~\ref{a:sinit} are satisfied, the following holds.  
For any $0<\eps < 1/2n$, by running gradient descent for $T\geq C \hat L(\Wt 0)/\left( \snorm{\mu}^2 \alpha \eps^{2}\right)$ iterations, with probability at least $1-2\delta$ over the random initialization and the draws of the samples, the following holds:
\begin{enumerate}
    \item All training points are classified correctly  and the training loss satisfies $\hat{L}(W^{(T)}) \le \eps$.
\item The test error satisfies
\begin{align*} 
\P_{(x,y)\sim \pnoise} \big[y \neq \sgn(f(x; \Wt {T}))\big] &\leq \eta + 2 \exp\l( - \f{n\snorm{\mu}^4 }{Cp} \r).
\end{align*}
\end{enumerate}
\end{theorem}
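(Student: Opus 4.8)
The plan is to pair a geometric analysis of the high-dimensional data with a trajectory analysis of gradient descent whose centerpiece is a ``loss-ratio bound'' preventing any single example from dominating training. First I would fix a good event, holding with probability at least $1-\delta$ over the draw of $S$, on which: $\snorm{x_i}^2=\Theta(p)$ for every $i$; $|\sip{x_i}{x_k}|\lesssim\sqrt{p\log(n/\delta)}$ for $i\neq k$; $|\sip{z_i}{\mu}|\lesssim\snorm{\mu}\sqrt{\log(n/\delta)}$, so that $y_i\sip{x_i}{\mu}\geq\tfrac12\snorm{\mu}^2$ for $i\in\calC$ and $\leq-\tfrac12\snorm{\mu}^2$ for $i\in\calN$; and $|\calN|\leq\tfrac14|\calC|$. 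These follow from the sub-Gaussian marginals, the bound $\E_{z\sim\pclust}[\snorm{z}^2]\geq\kappa p$, the $\lambda$-strong log-concavity (via concentration of Lipschitz functions), and a Chernoff bound, using Assumptions~\ref{a:samples}--\ref{a:noiserate}. The role of Assumption~\ref{a:dimension} is that the aggregate off-diagonal mass $n\sqrt{p\log(n/\delta)}$ is negligible against the diagonal scale $p$.

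Next I would track the iterates. Since $\nabla_{w_j}\hat L(W)=-\tfrac1n\summ i n g(y_if(x_i;W))\,y_i\,a_j\,\phi'(\sip{w_j}{x_i})\,x_i$, the update to a pre-activation $\sip{w_j^{(t)}}{x_k}$ is a diagonal term of order $\tfrac\alpha n\lpit kt\,p$ (times a $[\gamma^2,1]$ factor from the $\gamma$-leaky property) plus off-diagonal terms of total order $\tfrac\alpha n\big(\sum_i\lpit it\big)\sqrt{p\log(n/\delta)}$; Proposition~\ref{proposition:non.ntk} handles the first step and shows $\Wt 0$ is thereafter negligible (the role of Assumption~\ref{a:sinit}). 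The crux is a loss-ratio bound, proved by induction on $t\leq T$: there is a constant $\rho$ (depending only on $\gamma$ and $C$) with $\lpit it\leq\rho\,\lpit kt$ for all $i,k$ --- equivalently the margins $y_i\fit it$ stay within an additive $O(1)$ of one another. Intuitively, since $g=-\ell'$ behaves like $e^{-z}$ for $z\gtrsim0$, the reciprocals $1/\lpit it$ increase by approximately the \emph{same} amount $\Theta(\alpha p/n)$ each step (the diagonal term dominates, using Assumption~\ref{a:dimension} and the inductive hypothesis), so their ratios stay bounded. I expect this induction to be the main obstacle: the diagonal-dominance needed at step $t+1$ depends on the loss-ratio bound at step $t$, so both must be carried through together with explicit constants, and it is here that the nonlinearity (the $\phi'$ factors) and the adversarial noise must be controlled.

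Given the loss-ratio bound, the first conclusion follows from a descent-lemma argument. By $H$-smoothness and Assumption~\ref{a:stepsize}, $\hat L(\Wt{t+1})\leq\hat L(\Wt t)-\tfrac\alpha2\snorm{\nabla\hat L(\Wt t)}_F^2$, hence $\sum_{t<T}\snorm{\nabla\hat L(\Wt t)}_F^2\leq 2\hat L(\Wt 0)/\alpha$. For a gradient lower bound I would compare with the reference weights $w_j^\star:=a_j\mu$ (so $\snorm{W^\star}_F=\snorm{\mu}$): the $\gamma$-leaky property gives $y_i\langle\nabla_Wf(x_i;\Wt t),W^\star\rangle=\big(\summ j m a_j^2\phi'(\sip{w_j^{(t)}}{x_i})\big)y_i\sip{x_i}{\mu}$, which is $\geq\tfrac\gamma2\snorm{\mu}^2$ on $\calC$ and $\geq-\snorm{\mu}^2$ on $\calN$; combining $|\calN|\leq\tfrac14|\calC|$ with the loss-ratio bound yields $\langle-\nabla\hat L(\Wt t),W^\star\rangle\gtrsim\gamma\snorm{\mu}^2\hat G(\Wt t)$, so $\snorm{\nabla\hat L(\Wt t)}_F\gtrsim\gamma\snorm{\mu}\,\hat G(\Wt t)$. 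After the first step all margins are positive, whence $\hat L\leq 2\hat G$; therefore some iterate $t^\star<T$ has $\hat L(\Wt{t^\star})\lesssim\snorm{\mu}^{-1}\sqrt{\hat L(\Wt 0)/(\alpha T)}\leq\eps$ once $T\geq C\hat L(\Wt 0)/(\snorm{\mu}^2\alpha\eps^2)$, and monotonicity of $\hat L$ gives $\hat L(\Wt T)\leq\eps$; since $\eps<1/(2n)$ this forces $\ell(y_i f(x_i;\Wt T))<\log 2$, i.e.\ correct classification of every training point.

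For the test-error bound, expand $w_j^{(T)}=w_j^{(0)}+a_j\sum_i\nu_{i,j}y_ix_i$ with $\nu_{i,j}:=\tfrac\alpha n\sum_{t<T}\lpit it\phi'(\sip{w_j^{(t)}}{x_i})>0$; the loss-ratio bound gives $\max_i\nu_{i,j}\lesssim\tfrac1n\sum_i\nu_{i,j}=:\tfrac1nR_j$. For a fresh $(x,\tilde y)\sim\pnoise$ with $x=z+\tilde y\mu$ and $z$ independent of $(S,\Wt T)$, use $\sip{x_i}{x}=\sip{z_i}{z}+\tilde y\sip{z_i}{\mu}+\tilde y_i\sip{\mu}{z}+\tilde y_i\tilde y\snorm{\mu}^2$: the last term produces a ``signal'' $a_j\tilde y\snorm{\mu}^2\sum_i\nu_{i,j}y_i\tilde y_i$, and since $y_i\tilde y_i=+1$ on $\calC$ and $-1$ on $\calN$ with $|\calN|\leq\tfrac14|\calC|$, one has $\sum_i\nu_{i,j}y_i\tilde y_i\geq\tfrac12R_j$. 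Using that $\phi$ is $1$-Lipschitz and $\gamma$-leaky, $\tilde yf(x;\Wt T)\geq\tfrac\gamma m\snorm{\mu}^2\sum_j\big(\sum_i\nu_{i,j}y_i\tilde y_i\big)-\tfrac1{\sqrt m}\sum_j|r_j|$, where the $r_j$ collect $\sip{w_j^{(0)}}{x}$ (negligible by~\ref{a:sinit}), the terms built from $\sip{z_i}{\mu}$ and $\sip{\mu}{z}$ (each $\lesssim R_j\snorm{\mu}\sqrt{\log(n/\delta)}\leq\tfrac1{\sqrt C}R_j\snorm{\mu}^2$ by~\ref{a:norm.mu}, with sub-dominant failure probability by~\ref{a:dimension}), and the dominant cross term $a_j\langle\sum_i\nu_{i,j}y_iz_i,z\rangle$. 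For the latter I would control $\Phi(z):=\sum_j\big|\langle\sum_i\nu_{i,j}y_iz_i,z\rangle\big|$: on the good event $\snorm{\sum_i\nu_{i,j}y_iz_i}^2\lesssim R_j^2p/n$ (from $\max_i\nu_{i,j}\lesssim R_j/n$ and Assumption~\ref{a:dimension}), so $\Phi$ is Lipschitz in $z$ with constant $\lesssim(\sum_jR_j)\sqrt{p/n}$ and has mean $\lesssim(\sum_jR_j)\sqrt{p/n}$; concentration of Lipschitz functions for the $\lambda$-strongly log-concave $\pclust$ then gives $\P_z\big[\Phi(z)\geq\tfrac\gamma4\snorm{\mu}^2\sum_jR_j\big]\leq 2\exp(-c\lambda n\snorm{\mu}^4/p)$, the factors $\sum_jR_j$ cancelling so nothing depends on $m$. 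On the complement the signal dominates, $\tilde yf(x;\Wt T)>0$, so $\sgn(f(x;\Wt T))=\tilde y$; combining with $\P_{(x,y)\sim\pnoise}[y\neq\tilde y]\leq\eta$ and absorbing constants into $C$ gives the stated bound.
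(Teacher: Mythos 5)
Your proposal shares the paper's central technical idea — the loss-ratio bound, the reference direction $a_j\mu$ for the gradient lower bound, and the proxy-PL / descent-lemma telescoping for the optimization part — so the training-loss half of the theorem is proved in essentially the same way. For the test-error half you take a genuinely different route. The paper never expands $w_j^{(T)}$ explicitly; instead it bounds the \emph{expected} normalized margin $\E_{(x,\tilde y)\sim\pclean}[\tilde y f(x;W^{(t)})]/\|W^{(t)}\|_F$ from below (Lemma~\ref{lemma:normalized.margin.lb}, via the increment bound Lemma~\ref{lemma:margin.test.sample} summed over $t$ together with the refined norm bound Lemma~\ref{l:refined_grad_norm_upper_bound}), and then applies the log-concave Lipschitz concentration to the scalar $\tilde y f(x;W)$ as a function of $x$ (Lemma~\ref{lemma:lipschitz.concentration}, using that $f(\cdot;W)$ is $\|W\|_2$-Lipschitz). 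You instead expand $w_j^{(T)}=w_j^{(0)}+a_j\sum_i\nu_{i,j}y_ix_i$, split $\sip{x_i}{x}$ into a $\|\mu\|^2$ signal and lower-order terms, and apply the Lipschitz concentration only to the aggregated cross term $\Phi(z)=\sum_j|\sip{\sum_i\nu_{i,j}y_iz_i}{z}|$. Both routes use the loss-ratio bound to show the noisy examples' coefficients cannot dominate, and both rely on the same log-concave concentration; yours is closer to the Chatterji--Long linear analysis, while the paper's normalized-margin formulation sidesteps the need to argue about the nonlinearity $\phi$ and the initialization term on a per-neuron basis. Each buys something: yours makes the role of each training example in $W^{(T)}$ completely explicit, the paper's is more modular and transfers directly to the test-error bound via a single Lipschitz inequality.

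One step that you wave at but that actually needs an argument is the claim that the $\sip{w_j^{(0)}}{x}$ contribution is ``negligible by~\ref{a:sinit}.'' The signal you are comparing against is of order $\alpha\gamma^2\|\mu\|^2$, and the naive bound $\frac{1}{\sqrt m}\sum_j|\sip{w_j^{(0)}}{x}|\le\|W^{(0)}\|_2\|x\|\lesssim\alpha\left(1+\sqrt{p/m}\right)$ is \emph{not} dominated by the signal when $m$ is small (the theorem allows $m=1$) and $\|\mu\|^2$ sits at the lower edge allowed by~\ref{a:norm.mu}. To close this you need the sharper observation that $\frac{1}{\sqrt m}\sum_j|\sip{w_j^{(0)}}{x}|$ is, for fixed $W^{(0)}$, a Lipschitz function of $z$ with constant at most $\|W^{(0)}\|_F\le 2\alpha$ and mean at most of order $(1+\|\mu\|)\,\|W^{(0)}\|_F\lesssim\alpha\|\mu\|$; then~\ref{a:norm.mu} (with $C$ taken large relative to $\gamma^{-1}$, which the theorem permits) makes this a sub-fraction of the signal, and the deviation can be folded into the same log-concave tail you already use for $\Phi$. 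Without this, the bound as written has a hole in the small-$m$ regime. The remaining loose ends — the $\gamma^{-1}$ slack when passing from the loss-ratio bound on $\lpit{i}{t}$ to ratio control of $\nu_{i,j}$ through the $\phi'$ factors, and carrying the induction for the loss-ratio bound together with the claim that all margins stay positive after the first step — are all things the paper also has to do (Fact~\ref{fact:logistic.loss.ratio}, Lemma~\ref{lemma:nn.interpolates.time1}), and your sketch correctly identifies them as the places where the work lies.
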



Theorem~\ref{theorem:main} shows that neural networks trained by gradient descent will exhibit benign overfitting: the logistic loss can be driven to zero so that the network interpolates the noisy training data, and the trained network will generalize with classification error close to the noise rate $\eta$ provided $n\snorm{\mu}^4\gg p$.  Note that when $\pclust = \mathsf N(0,I)$,~\citet[Appendix B]{giraud2019partial} showed that in the noiseless case ($\eta=0$), the minimax test error is at least $c \exp(-c' \min(\snorm{\mu}^2, n \snorm{\mu}^4/p))$ for some absolute constants $c,c'>0$.  
In the setting of random classification noise, where labels are flipped with probability $\eta$ (i.e., $\eta(x) = \eta$ for all $x$), this implies that the minimax test error is at least $\eta + c\exp(-c' \min(\snorm{\mu}^2, n \snorm{\mu}^4/p))$.  
By Assumption~\ref{a:norm.mu}, $\snorm{\mu}^2 > n \snorm{\mu}^4/p$, so that the test error in Theorem~\ref{theorem:main} is minimax optimal up to constants in the exponent in the setting of random classification noise.

We briefly also compare our results to margin bounds in the literature. Note that even if one could prove that the training data is likely to be separated by a large margin, the bound of Theorem~\ref{theorem:main} approaches the noise rate faster than the standard margin bounds \citep{vapnik1999nature,shawe1998structural}.

We note that our results do not require many of the assumptions typical in theoretical analyses of neural networks: we allow for networks of arbitrary width; we permit arbitrarily small initialization variance; and we allow for the network to be trained for arbitrarily long.  In particular, we wish to emphasize that the optimization and generalization analysis used to prove Theorem~\ref{theorem:main} does not rely upon the neural tangent kernel approximation.  One way to see this is that our results cover finite-width networks and require $\snorm{\Wt t}\to \infty$ as $\eps \to 0$ since the logistic loss is never zero.   In fact, for the choice of step-size and initialization variance given in Assumptions~\ref{a:stepsize} and~\ref{a:sinit}, the weights travel far from their initial values after a single step of gradient descent, as we show in Proposition~\ref{proposition:non.ntk} below.

\begin{restatable}{proposition}{ntklower}\label{proposition:non.ntk}
Under the settings of Theorem~\ref{theorem:main}, we have for some absolute constant $C>1$ with probability at least $1-2\delta$ over the random initialization and the draws of the samples,
\[ \f{ \snorm{\Wt 1 - \Wt 0}_F }{\snorm{\Wt 0}_F} \geq \f{ \gamma \snorm{\mu}}{C}.\]
\end{restatable}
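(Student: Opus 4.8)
The plan is to unwind the gradient-descent update directly. Since $\Wt 1 = \Wt 0 - \alpha\nabla\hat L(\Wt 0)$ we have $\snorm{\Wt 1 - \Wt 0}_F = \alpha\,\snorm{\nabla\hat L(\Wt 0)}_F$, so the proposition reduces to two high-probability bounds,
\[ \snorm{\nabla\hat L(\Wt 0)}_F \;\geq\; \frac{\gamma\snorm{\mu}}{c_1}, \qquad \snorm{\Wt 0}_F \;\leq\; 2\sinit\sqrt{mp}, \]
for an absolute constant $c_1$: combining them with Assumption~\ref{a:sinit} ($\sinit\sqrt{mp}\leq\alpha$) makes the factors of $\alpha$ cancel and yields $\snorm{\Wt 1 - \Wt 0}_F/\snorm{\Wt 0}_F \geq \gamma\snorm{\mu}/(2c_1)$. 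The second bound is immediate, since the entries of $\Wt 0$ are i.i.d.\ $\sfN(0,\sinit^2)$ so that $\snorm{\Wt 0}_F^2$ concentrates around $mp\,\sinit^2$, with Assumption~\ref{a:samples} absorbing the lower-order deviation term. All the work is in the gradient lower bound.

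For that I would write out the per-neuron gradient of the logistic loss,
\[ \nabla_{w_j}\hat L(\Wt 0) \;=\; -\frac 1n\summ i n \lpit i0\; y_i a_j\,\phi'(\sip{w_j^{(0)}}{x_i})\,x_i, \qquad j\in[m], \]
and lower bound $\snorm{\nabla_{w_j}\hat L(\Wt 0)}$ for \emph{every} $j$ by projecting onto the signal direction $\mu$. Decomposing $\sip{x_i}{\mu} = \tilde y_i\snorm{\mu}^2 + \sip{z_i}{\mu}$ and using $y_i\tilde y_i = +1$ on $\calC$ and $y_i\tilde y_i = -1$ on $\calN$, the $\snorm{\mu}^2$-part of $-\,n\,a_j^{-1}\sip{\nabla_{w_j}\hat L(\Wt 0)}{\mu}$ equals $\snorm{\mu}^2\bigl(\sum_{i\in\calC}\lpit i0\phi'(\sip{w_j^{(0)}}{x_i}) - \sum_{i\in\calN}\lpit i0\phi'(\sip{w_j^{(0)}}{x_i})\bigr)$, and I would show this term dominates. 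Four routine ingredients are needed, each following from the assumptions plus a union bound over $i\in[n]$ and $j\in[m]$: (i) the pre-activations are negligible at initialization --- since $\snorm{x_i}\lesssim\sqrt p$ with high probability (Lipschitz concentration of $\snorm{\cdot}$ for the $\lambda$-strongly log-concave $z_i$, plus $\snorm{\mu}^2\leq p$ from Assumption~\ref{a:dimension}) and $\sinit\sqrt{mp}\leq\alpha$ with $\alpha$ tiny by Assumption~\ref{a:stepsize}, one gets $|\fit i0|\leq\tfrac1{\sqrt m}\summ j m|\sip{w_j^{(0)}}{x_i}|\ll 1$, hence $\lpit i0 = (1+\exp(y_i\fit i0))^{-1}\geq \tfrac14$; (ii) $\phi'(\cdot)\in[\gamma,1]$ everywhere since $\phi$ is $\gamma$-leaky, so each summand over $\calC$ lies in $[\tfrac\gamma4,1]$; (iii) the cross terms are small, $\max_i|\sip{z_i}{\mu}|\lesssim\snorm{\mu}\sqrt{\log(n/\delta)}\leq\snorm{\mu}^2/\sqrt C$ by sub-Gaussianity of the coordinates of $z_i$ and Assumption~\ref{a:norm.mu}; and (iv) there are few noisy points --- conditionally on the features, $|\calN|$ is a sum of independent Bernoullis of total mean at most $\eta n$, so $|\calN|\leq 2\eta n + C\log(1/\delta)\leq n/c_2$ with high probability for a large constant $c_2$, using Assumptions~\ref{a:samples} and~\ref{a:noiserate}.

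Putting (i)--(iv) together gives $|\sip{\nabla_{w_j}\hat L(\Wt 0)}{\mu}|\geq\tfrac{|a_j|\snorm{\mu}^2}{n}\bigl(\tfrac\gamma4|\calC| - |\calN|\bigr) - |a_j|\max_i|\sip{z_i}{\mu}|\geq \tfrac{\gamma}{c_1}|a_j|\snorm{\mu}^2$, once $C$ and $c_2$ are large enough depending only on $\gamma$. Since $|a_j|=1/\sqrt m$, this yields $\snorm{\nabla_{w_j}\hat L(\Wt 0)}\geq|\sip{\nabla_{w_j}\hat L(\Wt 0)}{\mu}|/\snorm{\mu}\geq\gamma\snorm{\mu}/(c_1\sqrt m)$ for every $j\in[m]$, and summing over neurons, $\snorm{\nabla\hat L(\Wt 0)}_F^2 = \summ j m\snorm{\nabla_{w_j}\hat L(\Wt 0)}^2\geq\gamma^2\snorm{\mu}^2/c_1^2$, which is the desired bound.

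The main obstacle is not any single estimate --- each of (i)--(iv) is a standard tail bound (Gaussian norm, Gaussian pre-activations, sub-Gaussian inner products, binomial tail), and the assumptions are calibrated exactly so the union bounds close at level $\delta$. The one genuinely structural point, and the source of the factor $\gamma$, is the sign cancellation in the projection: every clean example contributes a term of the \emph{same} sign and magnitude at least $\gamma/4$ to $\sip{\nabla_{w_j}\hat L(\Wt 0)}{a_j\mu}$, while the noisy examples, at most $n/c_2$ with $c_2\gg1/\gamma$, cannot overcome it, so the aggregate gradient retains a component of order $\gamma\snorm{\mu}$ along the signal direction. Since $\snorm{\Wt 1 - \Wt 0}_F\geq\alpha\gamma\snorm{\mu}/c_1$ while $\snorm{\Wt 0}_F = O(\sinit\sqrt{mp}) = O(\alpha)$, a single gradient step of size $\alpha$ already displaces the weights by a distance far exceeding $\snorm{\Wt 0}_F$, which is exactly the mechanism that pushes the network out of the kernel regime.
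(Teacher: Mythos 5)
Your proposal is correct and takes essentially the same route as the paper: the paper tests the gradient against the matrix $V$ with rows $a_j\mu/\snorm{\mu}$, giving $\snorm{\nabla\hat L(\Wt 0)}_F\geq\tfrac{\gamma\snorm{\mu}}{4}\hat G(\Wt 0)$ and then $\hat G(\Wt 0)\geq 1/6$, while you project each per-neuron gradient onto $\mu$ and sum the squares --- the same Cauchy--Schwarz step carried out row-by-row instead of all at once, with the same structural cancellation between clean and noisy points. The remaining ingredients ($\snorm{\Wt 0}_F\leq 2\sinit\sqrt{mp}$ from Lemma~\ref{lemma:initialization.norm}, then cancelling $\alpha$ against $\sinit\sqrt{mp}$ via Assumption~\ref{a:sinit}) match the paper's exactly.
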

The proof for Proposition~\ref{proposition:non.ntk} is provided in Appendix~\ref{app:prop_proof}.  

\section{Proof of Theorem~\ref{theorem:main}}\label{sec:thmproof}
In this section we will assume that Assumptions~\ref{a:samples} through~\ref{a:sinit} are in force for a large constant $C>1$. 

Theorem~\ref{theorem:main} consists of two claims: the first is that the training loss can be made arbitrarily small despite the presence of noisy labels, and the second is that at the same time, the test error of the trained neural network is close to the noise rate when $n\snorm{\mu}^4/p \gg 1$. Both of these claims will be established via a series of lemmas. All of these lemmas are proved in Appendix~\ref{app:theorem_proof}.

The test error bound will follow by establishing a lower bound for the \textit{expected normalized margin} on clean points from the $+\mu$ and $-\mu$ clusters.  We do so in the following lemma which leverages the fact that $\pclust$ is $\lambda$-strongly log-concave.

\begin{restatable}{lemma}{marginlowerfirst}\label{lemma:lipschitz.concentration}
    There exists a universal constant $c>0$, depending only on $\lambda$, such that if $W\neq 0$,
\[     \P( y f(x;W) < 0) \leq \eta + \exp \l( - c \l( \f{ 0 \vee \E[f(\mu+z;W)]}{\snorm{W}_2}\r)^2 \r) + \exp \l( - c \l( \f{ 0 \vee \E[-f(-\mu+z;W)]}{\snorm{W}_2}\r)^2 \r).\]
\end{restatable}

Lemma~\ref{lemma:lipschitz.concentration} shows that in order to prove the test error is near the noise rate, it suffices to prove a lower bound on the unnormalized margin on test samples coming from the $+\mu$ and $-\mu$ clusters as well as an upper bound on the spectral norm of the weights.  To derive such bounds, we first need to introduce a number of structural results about the samples and the neural network objective function.  The first such result concerns the norm of the weights at initialization.  

\begin{restatable}{lemma}{normbound}\label{lemma:initialization.norm} 
There is a universal constant $C_0>1$ 
such that with probability at least $1-\delta$ over the random initialization, 
\[ \snorm{\Wt 0}_F^2 \leq \f 32 \sinit^2mp \quad \text{and} \quad \snorm{\Wt 0}_2 \leq C_0 \sinit (\sqrt m + \sqrt p).\]
\end{restatable}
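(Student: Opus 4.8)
The plan is to prove the two bounds separately and combine them with a union bound, using only standard Gaussian concentration together with Assumptions~\ref{a:samples} and~\ref{a:dimension} to guarantee that the relevant dimensions dominate $\log(1/\delta)$.

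\emph{Frobenius bound.} Since the entries $[\Wt 0]_{i,j}$ are i.i.d.\ $\sfN(0,\sinit^2)$, the quantity $\snorm{\Wt 0}_F^2/\sinit^2$ is a chi-squared random variable with $mp$ degrees of freedom, hence of mean $mp$. I would invoke the standard sub-exponential tail bound for chi-squared variables (e.g.\ Laurent--Massart): for $X\sim \chi^2_k$ and $t>0$, $\P(X \ge k + 2\sqrt{kt} + 2t) \le e^{-t}$. Taking $k = mp$ and $t = \log(2/\delta)$, and noting that $mp \ge p \ge C\log(1/\delta)$ by Assumptions~\ref{a:samples}--\ref{a:dimension}, the deviation term $2\sqrt{mp\log(2/\delta)} + 2\log(2/\delta)$ is at most $(2/\sqrt{C} + 2/C)\,mp \le \tfrac12 mp$ once $C$ is large enough, which yields $\snorm{\Wt 0}_F^2 \le \tfrac32 \sinit^2 mp$ with probability at least $1-\delta/2$.

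\emph{Spectral bound.} Write $\Wt 0 = \sinit G$ where $G\in\R^{m\times p}$ has i.i.d.\ standard normal entries, and recall two classical facts: (i) $\E\snorm{G}_2 \le \sqrt m + \sqrt p$ (Gordon's comparison inequality), and (ii) the map $G\mapsto \snorm{G}_2$ is $1$-Lipschitz with respect to the Frobenius norm, so Gaussian Lipschitz concentration gives $\P(\snorm{G}_2 \ge \E\snorm{G}_2 + t) \le e^{-t^2/2}$. Choosing $t = \sqrt{2\log(2/\delta)}$ and using $p \ge C\log(1/\delta)$ to conclude $t \le \sqrt p$, we obtain $\snorm{G}_2 \le \sqrt m + 2\sqrt p$ with probability at least $1-\delta/2$; scaling by $\sinit$ and taking $C_0 = 2$ gives $\snorm{\Wt 0}_2 \le C_0 \sinit(\sqrt m + \sqrt p)$. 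A union bound over the two failure events then delivers both statements simultaneously with probability at least $1-\delta$.

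The argument is entirely routine, and I do not expect a genuine obstacle; the only point requiring a moment's care is verifying that Assumptions~\ref{a:samples} and~\ref{a:dimension} indeed force $mp$ (respectively $p$) to be large relative to $\log(1/\delta)$, since this is precisely what lets the deviation terms in both concentration bounds be absorbed into the constants $\tfrac32$ and $C_0$ stated in the lemma.
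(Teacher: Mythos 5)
Your proof is correct and follows essentially the same strategy as the paper: chi-squared tail concentration for the Frobenius norm, spectral-norm concentration for Gaussian matrices, and a union bound, with the dimension assumptions absorbing the $\log(1/\delta)$ terms. The only cosmetic difference is in the spectral bound, where you use Gordon's inequality plus Gaussian Lipschitz concentration (which gives the explicit constant $C_0=2$) while the paper cites Vershynin's Theorem~4.4.5 for sub-Gaussian matrices (which leaves $C_0$ as an unspecified universal constant); both are standard and interchangeable here.
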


Our next structural result characterizes some properties of random samples from the distribution. It was proved in \citet[][Lemma~10]{chatterji2020linearnoise} and is a consequence of Assumptions~\ref{a:samples} through~\ref{a:noiserate}. 
\begin{lemma}\label{lemma:sample.facts}
For all $\kappa >0$, there is $C_1>1$ such that for all $c'>0$, for all large enough $C$, with probability at least $1-\delta$ over $\pnoise^n$, the following hold:
\begin{enumerate}[label=E.\arabic*]
    \item  \label{eq:xk.norm} For all $k \in [n]$,
    \begin{align*}
      p/C_1 \leq \snorm{x_k}^2 \leq C_1 p.  
    \end{align*}
    \item \label{eq:xi.xk.innerproduct} For all $i \neq j \in [n]$,
    \begin{align*}
        |\sip{x_i}{x_j}| \leq C_1(\snorm{\mu}^2 + \sqrt{p\log(n/\delta)}).
    \end{align*}
    \item \label{eq:mu.dot.xk.clean}For all $k \in \calC$,
    \begin{align*}
        |\sip{\mu}{y_k x_k} - \snorm{\mu}^2| \leq \snorm{\mu}^2/2.
    \end{align*}
    \item \label{eq:mu.dot.xk.noisy} For all $k \in \calN$,
    \begin{align*}
        |\sip{\mu}{y_k x_k} - (-\snorm{\mu}^2)| \leq \snorm{\mu}^2/2.
    \end{align*}
    \item   \label{eq:noisy.samples.ub} The number of noisy samples satisfies $|\calN|/n \leq \eta + c'$.
\end{enumerate}
\end{lemma}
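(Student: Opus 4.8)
The plan is to reduce all five statements to a handful of concentration facts about the cluster noise $z_k$ in the decomposition $x_k = z_k + \tilde y_k \mu$, exploiting that $\snorm{z_k}^2 = \Theta(p)$ dominates every other quantity that appears (namely $\snorm{\mu}^2$, $\sip{z_k}{\mu}$, and $\sip{z_i}{z_j}$ for $i\ne j$). Since the $z_k$ have independent, mean-zero, unit-sub-Gaussian coordinates, only sub-Gaussian and sub-exponential tail bounds together with Chernoff bounds are needed, and the strong log-concavity assumption plays no role here (it enters only in Lemma~\ref{lemma:lipschitz.concentration}), consistent with the statement being a consequence of Assumptions~\ref{a:samples}--\ref{a:noiserate} alone. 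I would arrange each individual event below to fail with probability at most $\delta/10$ and finish with a union bound.

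For E.1, expand $\snorm{x_k}^2 = \snorm{z_k}^2 + 2\tilde y_k \sip{z_k}{\mu} + \snorm{\mu}^2$. The term $\snorm{z_k}^2 = \sum_{\ell=1}^p z_{k,\ell}^2$ is a sum of independent sub-exponential variables with mean in $[\kappa p, C' p]$ --- the lower bound is assumed, the upper bound follows from the coordinates having sub-Gaussian norm at most one --- so Bernstein's inequality gives $\snorm{z_k}^2 \in [2\kappa p/3,\, 4 C' p/3]$ with failure probability $\exp(-\Omega(p))$, and a union bound over $k\in[n]$ costs nothing because $p \gg \log(n/\delta)$ by Assumption~\ref{a:dimension}. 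For the cross term, with independent sub-Gaussian coordinates $\sip{z_k}{\mu}$ is mean-zero sub-Gaussian with parameter $O(\snorm{\mu})$, hence $|\sip{z_k}{\mu}| \lesssim \snorm{\mu}\sqrt{\log(n/\delta)}$ after a union bound; combining with $\snorm{\mu}^2 \le p/(Cn)$ and $\log(n/\delta)\le p/(Cn^2)$ from Assumption~\ref{a:dimension}, both $\sip{z_k}{\mu}$ and $\snorm{\mu}^2$ are at most a small multiple of $p$, which yields $p/C_1 \le \snorm{x_k}^2 \le C_1 p$ for an appropriate $C_1 = C_1(\kappa)$.

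Statements E.2, E.3, E.4 use the same two estimates. Expanding $\sip{x_i}{x_j} = \sip{z_i}{z_j} + \tilde y_j \sip{z_i}{\mu} + \tilde y_i \sip{z_j}{\mu} + \tilde y_i\tilde y_j\snorm{\mu}^2$, the new term $\sip{z_i}{z_j}$ is, conditionally on $z_j$, mean-zero sub-Gaussian with parameter $\snorm{z_j} \lesssim \sqrt p$ on the E.1 event, so $|\sip{z_i}{z_j}| \lesssim \sqrt{p\log(n/\delta)}$ after a union bound over the $\binom n2$ pairs (the extra $\log n$ is absorbed into the constant), while the remaining terms are $\lesssim \snorm{\mu}^2 + \sqrt{p\log(n/\delta)}$ by AM--GM and $\log(n/\delta)\le p$; this gives E.2. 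For E.3 and E.4, write $\sip{\mu}{y_k x_k} = y_k \sip{\mu}{z_k} + (y_k\tilde y_k)\snorm{\mu}^2$ with $y_k\tilde y_k = 1$ on $\calC$ and $-1$ on $\calN$, and note the fluctuation obeys $|\sip{\mu}{z_k}| \lesssim \snorm{\mu}\sqrt{\log(n/\delta)} \le \snorm{\mu}^2/2$, where the last step is exactly where Assumption~\ref{a:norm.mu} enters, forcing $\sqrt{\log(n/\delta)/\snorm{\mu}^2} \le 1/\sqrt C$. Finally E.5 is a Chernoff bound on $|\calN| = \sum_{k=1}^n \mathbf{1}\{y_k \ne \tilde y_k\}$, a sum of i.i.d.\ Bernoullis of mean $\E_{x\sim\pnoise}[\eta(x)]\le \eta$, giving $|\calN|/n \le \eta + c'$ with failure probability $\exp(-\Omega(c'^2 n)) \le \delta/10$ by Assumption~\ref{a:samples} provided $C$ is large relative to $1/c'^2$. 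A union bound over E.1--E.5 completes the proof.

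The only genuinely substantive point --- everything else being standard tail bounds and bookkeeping of logarithmic factors and union-bound budgets --- is establishing that $\snorm{z_k}^2$ concentrates around a value of order exactly $p$: the assumed lower bound $\E\snorm{z}^2 \ge \kappa p$ together with the $O(p)$ upper bound from unit-sub-Gaussian marginals is what guarantees the cluster noise neither collapses nor blows up, and this is precisely what makes $\snorm{\mu}^2$ and all the inner-product error terms negligible by comparison.
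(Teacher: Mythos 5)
The paper does not actually prove this lemma; it cites \citet[Lemma~10]{chatterji2020linearnoise}, which uses exactly the decomposition $x_k = z_k + \tilde y_k\mu$ you write, the same Bernstein/sub-Gaussian tail bounds for $\snorm{z_k}^2$, $\sip{z_i}{z_j}$, and $\sip{z_k}{\mu}$, and a Chernoff bound for $|\calN|$. Your reconstruction is correct and follows that same route, and your aside that strong log-concavity plays no role here is consistent with the paper's remark that it was introduced only to serve Lemma~\ref{lemma:lipschitz.concentration}.
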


\begin{definition}
If the events in Lemma~\ref{lemma:initialization.norm} and Lemma~\ref{lemma:sample.facts} occur, let us say that we have a \emph{good run}.
\end{definition}

Lemmas~\ref{lemma:initialization.norm} and~\ref{lemma:sample.facts} show that a good run occurs with probability at least $1-2\delta$. In what follows, we will assume that a good run occurs. 





We next introduce a number of structural lemmas concerning the neural network optimization objective.  The first concerns the smoothness of the network in terms of the first layer weights. 
\begin{restatable}{lemma}{smoothness}\label{lemma:nn.smooth}
For an $H$-smooth activation $\phi$ and any $W, V\in \R^{m\times p}$ and $x\in \R^p$,
\[ |f(x; W) - f(x; V ) - \sip{\nabla f(x; V )}{W-V}| \leq \frac{ H \snorm{x}^2}{2\sqrt m} \|W-V\|_2^2.\]
\end{restatable}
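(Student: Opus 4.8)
The plan is to bound the Taylor remainder neuron-by-neuron using the scalar second-order estimate for $\phi$, and then to recombine the per-neuron errors into an operator-norm bound by recognizing a matrix--vector product. Since $f(x;W) = \summ j m a_j \phi(\sip{w_j}{x})$ and the $j$-th row of the matrix $\nabla f(x;V) \in \R^{m \times p}$ is $a_j \phi'(\sip{v_j}{x})\, x$, the Frobenius inner product expands as $\sip{\nabla f(x;V)}{W-V} = \summ j m a_j \phi'(\sip{v_j}{x}) \sip{w_j - v_j}{x}$, so that
\[ f(x;W) - f(x;V) - \sip{\nabla f(x;V)}{W-V} = \summ j m a_j \left[ \phi(\sip{w_j}{x}) - \phi(\sip{v_j}{x}) - \phi'(\sip{v_j}{x})\, \sip{w_j - v_j}{x} \right]. \]

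Next I would invoke the elementary one-dimensional bound: for all $a, b \in \R$, $|\phi(b) - \phi(a) - \phi'(a)(b-a)| \leq \f{H}{2}(b-a)^2$. This is Taylor's theorem with the integral form of the remainder combined with the fact that $\phi'$ is $H$-Lipschitz (a consequence of $|\phi''| \leq H$): writing $\phi(b) - \phi(a) - \phi'(a)(b-a) = \int_a^b (\phi'(t) - \phi'(a))\, dt$ and using $|\phi'(t) - \phi'(a)| \leq H|t - a|$ gives the estimate. Applying it to each summand with $a = \sip{v_j}{x}$ and $b = \sip{w_j}{x}$, noting $b - a = \sip{w_j - v_j}{x}$ and $|a_j| = 1/\sqrt m$, the triangle inequality gives
\[ |f(x;W) - f(x;V) - \sip{\nabla f(x;V)}{W-V}| \leq \f{H}{2\sqrt m} \summ j m \sip{w_j - v_j}{x}^2. \]

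The final step is the only one requiring a little care. Rather than bounding each term by Cauchy--Schwarz as $\sip{w_j - v_j}{x}^2 \leq \snorm{w_j - v_j}^2 \snorm{x}^2$ (which would only deliver the weaker Frobenius-norm bound $\f{H \snorm{x}^2}{2\sqrt m} \snorm{W-V}_F^2$), I would observe that $\sip{w_j - v_j}{x}$ is exactly the $j$-th coordinate of the vector $(W-V)x$, whence $\summ j m \sip{w_j - v_j}{x}^2 = \snorm{(W-V)x}^2 \leq \snorm{W-V}_2^2 \snorm{x}^2$ by definition of the spectral norm. Substituting this bound yields the claim. The overall argument is routine --- it is the standard smooth-function "descent lemma" adapted to the sum-of-neurons structure of $f$ --- and the one genuine subtlety, which I would highlight, is keeping $\summ j m \sip{w_j - v_j}{x}^2$ intact as $\snorm{(W-V)x}^2$ so as to land the spectral norm rather than the (larger) Frobenius norm on the right-hand side.
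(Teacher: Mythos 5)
Your proof is correct and follows essentially the same route as the paper: a neuron-wise second-order Taylor bound (you use the integral-remainder form with $\phi'$ being $H$-Lipschitz, the paper uses the Lagrange form with $|\phi''(t_j)| \le H$, a cosmetic difference), followed by the key observation that $\sum_j \sip{w_j - v_j}{x}^2 = \snorm{(W-V)x}^2 \le \snorm{W-V}_2^2 \snorm{x}^2$, which is exactly how the paper lands the spectral-norm bound.
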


In the next lemma, we provide a number of smoothness properties of the empirical loss. 
\begin{restatable}{lemma}{losssmoothness}\label{lemma:loss.smooth}
For an $H$-smooth activation $\phi$ and any $W, V \in \R^{m\times p}$, 
on a good run it holds that 
\[ \frac{1}{\sqrt{C_1 p}}\snorm{\nabla \hat L(W)}_F \leq  \hat G(W) \leq \hat L(W) \wedge 1,\]
where 
$C_1$ is the constant from Lemma~\ref{lemma:sample.facts}.  Additionally,
\[ \snorm{\nabla \hat L(W) - \nabla \hat L(V)}_F\leq   C_1p \left (  1 + \f{ H}{\sqrt m}  \right ) \snorm{W-V}_2.\]
\end{restatable}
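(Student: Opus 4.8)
The plan is to prove the three inequalities in turn, assuming throughout that a good run occurs so that the sample facts of Lemma~\ref{lemma:sample.facts} are available. Two elementary observations about the loss do most of the work: first, $\nabla f(x;W) = (a_j\phi'(\sip{w_j}{x})x)_{j=1}^m$, so $\snorm{\nabla f(x;W)}_F^2 = \snorm{x}^2 \summ j m a_j^2 \phi'(\sip{w_j}{x})^2 \le \snorm{x}^2 \summ j m a_j^2 = \snorm{x}^2$ using $|\phi'|\le 1$ and $\summ j m a_j^2 = 1$; second, $\ell' = -g$, so $\nabla \hat L(W) = -\tfrac1n \summ i n g(y_i f(x_i;W))\, y_i\, \nabla f(x_i;W)$.

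For the chain $\hat G(W) \le \hat L(W)\wedge 1$ it suffices to check $g(z) \le \ell(z)\wedge 1$ pointwise and average over $i\in[n]$: the bound $g(z) = 1/(1+e^z)\le 1$ is immediate, and $g(z)\le \ell(z)$ follows from the elementary inequality $\log(1+u)\ge u/(1+u)$ applied with $u = e^{-z}$, giving $\ell(z) = \log(1+e^{-z}) \ge e^{-z}/(1+e^{-z}) = 1/(1+e^z) = g(z)$. For the gradient lower bound, combine the formula for $\nabla\hat L(W)$ above with $\snorm{\nabla f(x_i;W)}_F \le \snorm{x_i} \le \sqrt{C_1 p}$ (the last step by the norm bound~\ref{eq:xk.norm}): the triangle inequality and $|y_i| = 1$ give $\snorm{\nabla\hat L(W)}_F \le \tfrac1n\summ i n g(y_i f(x_i;W))\snorm{\nabla f(x_i;W)}_F \le \sqrt{C_1 p}\,\hat G(W)$, which rearranges to the claimed inequality.

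For the Lipschitz bound on the gradient, write $\nabla\hat L(W) - \nabla\hat L(V) = -\tfrac1n\summ i n y_i\big[g(y_i f(x_i;W))\nabla f(x_i;W) - g(y_i f(x_i;V))\nabla f(x_i;V)\big]$ and, for each $i$, add and subtract $g(y_i f(x_i;W))\nabla f(x_i;V)$ to bound the $i$-th bracket in Frobenius norm by $g(y_i f(x_i;W))\snorm{\nabla f(x_i;W) - \nabla f(x_i;V)}_F + |g(y_i f(x_i;W)) - g(y_i f(x_i;V))|\cdot\snorm{\nabla f(x_i;V)}_F$. I would then bound the pieces: $g\le 1$; $g$ is $1$-Lipschitz (in fact $|g'| = \ell'' \le 1/4$), so $|g(y_i f(x_i;W)) - g(y_i f(x_i;V))| \le |f(x_i;W) - f(x_i;V)|$; since $\phi$ is $1$-Lipschitz, $|f(x;W) - f(x;V)| \le \tfrac1{\sqrt m}\summ j m |\sip{w_j - v_j}{x}| \le \snorm{(W-V)x} \le \snorm{W-V}_2\snorm{x}$ by Cauchy--Schwarz; the $w_j$-block of $\nabla f(x;W) - \nabla f(x;V)$ is $a_j(\phi'(\sip{w_j}{x}) - \phi'(\sip{v_j}{x}))x$, so $H$-smoothness of $\phi$ together with $a_j^2 = 1/m$ and $\summ j m \sip{w_j-v_j}{x}^2 = \snorm{(W-V)x}^2 \le \snorm{W-V}_2^2\snorm{x}^2$ gives $\snorm{\nabla f(x;W) - \nabla f(x;V)}_F \le \tfrac{H}{\sqrt m}\snorm{x}^2\snorm{W-V}_2$; and $\snorm{\nabla f(x;V)}_F \le \snorm{x}$ from the first paragraph. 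Inserting $\snorm{x_i}^2 \le C_1 p$ from~\ref{eq:xk.norm}, the $i$-th bracket is at most $\tfrac{C_1 Hp}{\sqrt m}\snorm{W-V}_2 + \sqrt{C_1 p}\cdot\snorm{W-V}_2\cdot\sqrt{C_1 p} = C_1 p\big(1 + \tfrac{H}{\sqrt m}\big)\snorm{W-V}_2$, and averaging over $i$ (with $|y_i| = 1$) yields the result.

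None of these estimates is deep, so there is no real obstacle; the only point requiring care is to route every inner product $\sip{w_j - v_j}{x}$ through $\snorm{(W-V)x} \le \snorm{W-V}_2\snorm{x}$ rather than through a cruder Cauchy--Schwarz that would replace $\snorm{W-V}_2$ by $\snorm{W-V}_F$ and introduce a spurious $\sqrt{\min(m,p)}$ factor. It is also worth tracking that $H$-smoothness of $\phi$ enters only through the $\nabla f(x;W) - \nabla f(x;V)$ term while $1$-Lipschitzness of $\phi$ and of $g$ handles everything else, which is exactly what produces the $1 + H/\sqrt m$ shape of the constant.
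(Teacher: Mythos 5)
Your proposal is correct and follows essentially the same route as the paper's proof: bound $\snorm{\nabla f(x_i;W)}_F\leq\snorm{x_i}\leq\sqrt{C_1 p}$ via $1$-Lipschitzness of $\phi$ and $\summ j m a_j^2=1$, use the triangle inequality for the first chain and the pointwise inequality $g\leq\ell\wedge 1$ for the second, and for the gradient-Lipschitz bound add-and-subtract a cross term, then control the two pieces by the $1$-Lipschitzness of $g$ and the $H$-smoothness of $\phi$. The only differences are cosmetic (you insert $g(y_i f(x_i;W))\nabla f(x_i;V)$ rather than the paper's $g(y_i f(x_i;V))\nabla f(x_i;W)$, and you replace the paper's Jensen step for $z\mapsto\min(z,1)$ with a direct elementary average), so there is no substantive divergence to flag.
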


Our next structural result is the following lemma that characterizes the pairwise correlations of the gradients of the network at different samples. 

\begin{restatable}{lemma}{gradientcorrelation}
\label{lemma:nn.grad.ip.identity}
Let $C_1>1$ be the constant from Lemma~\ref{lemma:sample.facts}.  For a $\gamma$-leaky, $H$-smooth activation $\phi$, on a good run, 
we have the following. 
\begin{enumerate}
\item[(a)]For any $i,k\in [n]$, $i\neq k$, and any $W\in \R^{m\times d}$, we have
\[ 
|\sip{\nabla f(x_i; W)}{\nabla f(x_k; W)}| \leq C_1 \l( \snorm{\mu}^2 + \sqrt{p \log(n/\delta)} \r).
\]
\item[(b)]For any $i\in [n]$ and any $W\in \R^{m\times d}$, we have
\[ \f{\gamma^2 p }{C_1} \leq \snorm{\nabla f(x_i; W)}_F^2 \leq C_1 p.\]
\end{enumerate}
\end{restatable}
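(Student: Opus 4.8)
The plan is to reduce both parts to an explicit formula for $\nabla f(x;W)$ combined with the sample facts of Lemma~\ref{lemma:sample.facts}. Since $f(x;W) = \sum_{j=1}^m a_j \phi(\sip{w_j}{x})$, the gradient in the first-layer weights is the $m\times p$ matrix whose $j$th row is $a_j \phi'(\sip{w_j}{x})\,x$; using $a_j^2 = 1/m$, this gives for all $i,k\in[n]$
\[ \sip{\nabla f(x_i;W)}{\nabla f(x_k;W)} = \sip{x_i}{x_k}\,\cdot\,\f{1}{m}\sum_{j=1}^m \phi'(\sip{w_j}{x_i})\,\phi'(\sip{w_j}{x_k}). \]
Everything then reduces to bounding the two factors on the right-hand side.

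For the derivative factor, I would use that a $\gamma$-leaky, $H$-smooth activation satisfies $\gamma\le\phi'(z)\le 1$ for all $z\in\R$, so $\f{1}{m}\sum_{j=1}^m\phi'(\sip{w_j}{x_i})\phi'(\sip{w_j}{x_k})\in[\gamma^2,1]$ uniformly over $W$; this uniformity is exactly what lets the conclusion hold at every iterate $\Wt t$. For part (a), I bound this factor by $1$ and apply event E.2 of Lemma~\ref{lemma:sample.facts}: on a good run $|\sip{x_i}{x_k}|\le C_1(\snorm{\mu}^2 + \sqrt{p\log(n/\delta)})$, which is precisely the claimed bound with the same constant $C_1$.

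For part (b), I take $i=k$ in the identity above to get $\snorm{\nabla f(x_i;W)}_F^2 = \snorm{x_i}^2\cdot\f{1}{m}\sum_{j=1}^m\phi'(\sip{w_j}{x_i})^2$, so this quantity lies in $[\gamma^2\snorm{x_i}^2,\ \snorm{x_i}^2]$. Event E.1 of Lemma~\ref{lemma:sample.facts} gives $p/C_1\le\snorm{x_i}^2\le C_1 p$ on a good run, and combining these (using $\gamma\le1$ on the upper side) yields $\gamma^2 p/C_1 \le \snorm{\nabla f(x_i;W)}_F^2 \le C_1 p$.

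I do not expect a genuine obstacle here: the lemma is an immediate consequence of the closed form of the gradient, the pointwise two-sided bound on $\phi'$, and the already-established sample facts, and it holds for every $W$ because the derivative average is trapped in $[\gamma^2,1]$ independently of $W$. The only place needing a little care is constant bookkeeping---reusing the single constant $C_1$ from Lemma~\ref{lemma:sample.facts}---which goes through because the derivative average never exceeds $1$ (so it cannot spoil the upper bounds) and contributes only a harmless factor $\gamma^2$ to the lower bound in part (b).
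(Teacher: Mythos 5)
Your proof is correct and takes essentially the same route as the paper: both derive the identity $\sip{\nabla f(x_i;W)}{\nabla f(x_k;W)} = \sip{x_i}{x_k}\cdot\frac{1}{m}\sum_j\phi'(\sip{w_j}{x_i})\phi'(\sip{w_j}{x_k})$, bound the averaged derivative factor in $[\gamma^2,1]$ using the $\gamma$-leaky property, and then invoke events E.1 and E.2 of Lemma~\ref{lemma:sample.facts}. No meaningful difference.
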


In the regime where $\snorm{\mu}^2 = o(p)$, Lemma~\ref{lemma:nn.grad.ip.identity} shows that the gradients of the network at different samples are roughly orthogonal as the pairwise inner products of the gradients are much smaller than the norms of each gradient.  This mimics the behavior of the samples $x_i$ established in Lemma~\ref{lemma:sample.facts}. 

Our final structural result establishes that the ratio of the sigmoid losses $\lpit it := -\ell'\big(y_i f(x_i;\Wt t)\big)$ are uniformly bounded throughout training.  In particular, it implies that the clean examples $(i\in \calC)$ and the noisy examples $(i\in \calN$) have sigmoid losses which are not too far from each other, and that this holds throughout all of training.  This lemma plays a central role in our proof of Theorem~\ref{theorem:main}, for both the training error and the test error bounds, and extends the results of~\citet{chatterji2020linearnoise} from the logistic regression setting to the two-layer neural network setting.

\begin{restatable}{lemma}{lossratio}\label{lemma:loss.ratio}
For a $\gamma$-leaky, $H$-smooth activation $\phi$, there is an absolute constant $C_r=16 C_1^2/\gamma^{2}$ such that on a good run,
provided $C>1$ is sufficiently large, 
we have for all $t\geq 0$,
\begin{align*}\label{eq:loss.ratio.bound}
    \max_{i,j \in [n]} \frac{\lpit it}{\lpit jt} \le C_r .
\end{align*}
\end{restatable}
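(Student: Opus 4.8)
The plan is to prove the bound by induction on $t$, using a discrete Grönwall-type argument on the ratio $\lpit it / \lpit jt$. At $t=0$, I would first argue the bound holds (with room to spare) because of Assumption~\ref{a:sinit}: the initialization variance is tiny relative to the step size, so $\Wt 0$ is negligible and after essentially nothing has happened the outputs $\fit i0$ are all near zero; thus all $\lpit i0 = g(y_i \fit i0)$ are within a constant factor of $g(0) = 1/2$. (One may need to be slightly careful here and either take the base case at $t=1$ — after the first gradient step, which Proposition~\ref{proposition:non.ntk} analyzes — or check that $|\fit i0| \le 1$ say, which follows from Lemma~\ref{lemma:initialization.norm} together with \ref{a:sinit} and the 1-Lipschitzness of $\phi$.) Either way, $\max_{i,j} \lpit i0/\lpit j0$ is bounded by some absolute constant much smaller than $C_r$.

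For the inductive step, assume $\max_{i,j}\lpit it/\lpit jt \le C_r$. I want to control $\lpit i{t+1}/\lpit j{t+1}$. Write $\fit i{t+1} - \fit it = \sip{\nabla f(x_i;\Wt t)}{\Wt{t+1}-\Wt t} + (\text{smoothness error})$, where $\Wt{t+1}-\Wt t = -\alpha\nabla\hat L(\Wt t) = \frac{\alpha}{n}\sum_k \lpit kt\, y_k\,\nabla f(x_k;\Wt t)$. Dotting in, the change in the margin $y_i\fit i{t+1} - y_i\fit it$ decomposes into a diagonal term $\frac{\alpha}{n}\lpit it \snorm{\nabla f(x_i;\Wt t)}_F^2$, which is $\ge \frac{\alpha}{n}\lpit it \gamma^2 p/C_1$ and nonnegative, an off-diagonal cross term $\frac{\alpha}{n}\sum_{k\ne i}\lpit kt\, y_i y_k\sip{\nabla f(x_i;\Wt t)}{\nabla f(x_k;\Wt t)}$ bounded in absolute value via Lemma~\ref{lemma:nn.grad.ip.identity}(a) by $\frac{\alpha}{n}\sum_{k\ne i}\lpit kt\, C_1(\snorm{\mu}^2 + \sqrt{p\log(n/\delta)})$, and the Hessian/smoothness error bounded using Lemma~\ref{lemma:nn.smooth} and Lemma~\ref{lemma:loss.smooth} — this last term is of order $\alpha^2 p^2(1+H/\sqrt m)\cdot(\text{something})$, which Assumption~\ref{a:stepsize} forces to be negligible compared to the diagonal term. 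The key point is that, using the inductive hypothesis to compare every $\lpit kt$ to $\lpit it$, the \emph{worst-case} decrease in $y_i\fit it$ over one step, and the worst-case increase in $y_j\fit jt$ over one step, are both controlled by $\frac{\alpha}{n}\cdot(\text{const})\cdot p\cdot \max_k \lpit kt \le \frac{\alpha}{n}(\text{const})\, p\, C_r \lpit it$ (and similarly with $\lpit jt$). Because $g = -\ell'$ satisfies $|g'(z)| \le g(z)$ (indeed $-\ell'' \le -\ell' = g$), a multiplicative perturbation of the margin by a small amount $\Delta$ perturbs $g$ by at most a factor $e^{|\Delta|}$, so I can convert the additive bounds on $\Delta(y_i\fit it)$ into multiplicative bounds on $\lpit i{t+1}/\lpit it$.

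Assembling these: for the index $i$ achieving the largest loss and $j$ achieving the smallest at time $t+1$, I would show $\lpit i{t+1} \le \lpit it\exp(\delta_+)$ and $\lpit j{t+1} \ge \lpit jt\exp(-\delta_-)$ where $\delta_+,\delta_-$ are $O(\alpha p C_r/n)$-type quantities coming from the off-diagonal and smoothness terms, while crucially the favorable diagonal term (present for whichever index currently has large loss — exactly the index whose loss we want to \emph{not} blow up) only helps. The ratio at $t+1$ is then at most $C_r$ provided the dimension is large enough that $\snorm{\mu}^2 + \sqrt{p\log(n/\delta)} = o(p)$ (this is where \ref{a:dimension} and \ref{a:norm.mu} enter, giving the $16 C_1^2/\gamma^2$ constant: roughly, the diagonal gain beats the off-diagonal loss by a factor $\gamma^2 p / [C_1 n \cdot C_1(\snorm\mu^2+\sqrt{p\log(n/\delta)})] \gg 1$, and $C_r$ is chosen as the square of the natural single-step constant to close the induction with slack). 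The main obstacle is bookkeeping the induction so that the diagonal self-interaction term is always allocated to the "dangerous" index and never wasted — that is, showing that whichever example has the currently-largest loss also receives the largest positive push to its margin, so its loss cannot grow relative to the others. A secondary technical subtlety is making sure the smoothness error terms, which involve $\snorm{\Wt{t+1}-\Wt t}_2$ rather than $\snorm{\cdot}_F$, are controlled uniformly in $t$; this uses $\snorm{\nabla\hat L(\Wt t)}_F \le \sqrt{C_1 p}\,\hat G(\Wt t) \le \sqrt{C_1 p}$ from Lemma~\ref{lemma:loss.smooth} together with \ref{a:stepsize}.
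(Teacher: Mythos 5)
Your decomposition (diagonal / off-diagonal / smoothness) and the multiplicative perturbation idea via $|g'(z)|\le g(z)$ are the right starting ingredients, but the induction as you describe it does not close, and the ``main obstacle'' you name at the end is in fact the heart of the proof, not a bookkeeping detail that can be deferred.

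Here is the problem with the step as stated. If you only establish $\lpit i{t+1}\le \lpit it e^{\delta_+}$ and $\lpit j{t+1}\ge \lpit jt e^{-\delta_-}$ with $\delta_\pm>0$ ``coming from the off-diagonal and smoothness terms,'' then from $\lpit it/\lpit jt\le C_r$ you can only conclude $\lpit i{t+1}/\lpit j{t+1}\le C_r e^{\delta_++\delta_-}$, which strictly exceeds $C_r$. Iterating gives $C_r e^{T(\delta_++\delta_-)}$, which is unbounded; choosing $C_r$ ``with slack'' is of no help. Your bookkeeping of the sizes is also off: the off-diagonal contribution to the margin increment of sample $i$ is at most $\frac{\alpha}{n}\sum_{k\neq i}\lpit kt\,C_1(\snorm{\mu}^2+\sqrt{p\log(n/\delta)})\le \alpha C_1(\snorm{\mu}^2+\sqrt{p\log(n/\delta)})\hat G(\Wt t)$, which under Assumption~\ref{a:dimension} is \emph{much smaller} than $\alpha p/n$; the $\alpha p/n$ scale is the \emph{diagonal} term. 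This separation of scales is precisely what makes the argument work, and losing track of it hides the contraction.

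What is missing is the contraction mechanism: the diagonal increment for index $k$ is $\frac{\alpha}{n}\lpit kt\snorm{\nabla f(x_k;\Wt t)}_F^2\in\left[\frac{\alpha\gamma^2 p}{C_1 n}\lpit kt,\ \frac{\alpha C_1 p}{n}\lpit kt\right]$, so the \emph{difference} of diagonal increments between the high-loss index $i$ and the low-loss index $j$ is bounded below by $\frac{\alpha\gamma^2 p}{C_1 n}\lpit it-\frac{\alpha C_1 p}{n}\lpit jt=\frac{\alpha p}{C_1 n}\lpit jt\left(\gamma^2\frac{\lpit it}{\lpit jt}-C_1^2\right)$. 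This is strongly positive (and dominates the off-diagonal and smoothness noise, by Assumption~\ref{a:dimension}) exactly when $\lpit it/\lpit jt\gtrsim C_1^2/\gamma^2$, so the ratio \emph{shrinks} in that regime. The induction must therefore be closed by a two-case analysis, as in the paper's proof: if the ratio at time $t$ is below a threshold of order $C_1^2/\gamma^2$, the per-step growth $e^{O(\alpha(\snorm{\mu}^2+\sqrt{p\log(n/\delta)}))}\le e^{1/8}$ keeps it below $C_r$; if the ratio exceeds that threshold, the diagonal contraction dominates and the ratio strictly decreases.

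Two further points you should reconcile with your framing. First, to translate additive margin changes to multiplicative loss changes cleanly, the paper works with the exponential loss $e^{-z}$ (which factors exactly) and relates it to $g$ via Fact~\ref{fact:logistic.loss.ratio}; the relation is tight (within a factor of 2) only when both margins are nonnegative. Your $|g'|\le g$ observation gives the upper bound $\lpit i{t+1}\le \lpit it e^{|\Delta_i|}$, but for contraction you need a matching \emph{lower} bound $-g'(z)\ge cg(z)$, i.e.\ $1-g(z)\ge c$, which fails for $z\ll 0$. Either way, you need the margins to be nonnegative, which the paper establishes at $t=1$ (and propagates forward) in Lemma~\ref{lemma:nn.interpolates.time1}(c), using Assumptions~\ref{a:sinit} and~\ref{a:dimension}; the base case must therefore be at $t=1$, not $t=0$, with a separate one-step growth bound to get from $t=0$ to $t=1$. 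Second, the ``allocation'' framing is misleading: every index receives a diagonal push, and what matters is that the magnitude scales linearly with the index's own loss $\lpit kt$, so the high-loss index automatically gets a larger push; there is no allocation decision to make, only the observation that the push \emph{differential} scales with the loss \emph{differential}.
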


With these structural results in place, we can now begin to prove a lower bound for the normalized margin on test points.  To do so, our first step is to characterize the change in the unnormalized margin $\tilde y[f(x; \Wt {t+1}) - f(x; \Wt t)]$ from time $t$ to time $t+1$ for an independent test sample $(x,y)$, when $x$ comes from the $+\mu$ cluster and when $x$ comes from the $-\mu$ cluster.

\begin{restatable}{lemma}{marginincrease}\label{lemma:margin.test.sample}
For a $\gamma$-leaky, $H$-smooth activation $\phi$, there is an absolute constant $C_\mu>0$ such that provided $n \snorm \mu^4 \geq C_\mu p$, on a good run, provided $C>1$ is sufficiently large, it holds for any $s \in \N \cup \{0\}$,
\begin{align*}
      \E[ f(\mu+z; \Wt {s+1}) - f(\mu+z;\Wt s)] &\geq \f{ \alpha \gamma^2 \snorm{\mu}^2}{8} \hat G(\Wt s),\quad \text{and} \\
      \E[ -f(-\mu+z; \Wt {s+1}) - (-f(-\mu+z;\Wt s))] &\geq \f{ \alpha \gamma^2 \snorm{\mu}^2}{8} \hat G(\Wt s).
\end{align*}
\end{restatable}

Lemma~\ref{lemma:margin.test.sample} shows that provided the cluster mean is large enough so that $\snorm{\mu}^4 \gtrsim p/n$, the unnormalized margin on test examples will increase in expectation from time $s$ to $s+1$.  Notably,~\citet{giraud2019partial} showed that for the regime $\snorm \mu^2 \lesssim p/n$ that we consider here (per Assumption~\ref{a:dimension}), vanishing clean error is impossible when $\snorm \mu^4 = o(p/n)$, which makes the assumption in the lemma a necessary one for learnability in the setting we consider.

We can now provide some insight into the proof of Lemma~\ref{lemma:margin.test.sample}, focusing on the $x=\mu+z$ case (the $x=-\mu+z$ case follows similarly).  Using Lemma~\ref{lemma:nn.smooth}, if we define the quantity
\[ \xi_{i, z, s} := \f 1 m \summ j m \phi'(\sip{\wt s_j}{x_i}) \phi'(\sip{\wt s_j}{\mu+z}) \in [\gamma^2, 1], \]
then recalling the notation $\lpit is := -\ell'\big(y_i f(x_i;\Wt s)\big) \in (0,1)$, we have
\begin{align*}
    \E[ (f(\mu+z; \Wt {s+1}) - f(\mu+z;\Wt s))] &\geq \f \alpha n \summ i n \lpit is \l[ \E[\xi_{i,z,s} \sip{y_ix_i}{\mu+z}] - \f{ H C_1 p \alpha}{2 \sqrt m} \E[\snorm{\mu+z}^2] \r]. \numberthis \label{eq:margin.increase.time.s.mu.cluster.main}
\end{align*}
Ignoring the $z$-dependence of the term $\xi_{i,z,s}$ for the moment, this inequality suggests that if $\sip{y_i x_i}{\mu+z}$ is always bounded from below by a strictly positive constant, then the margin on the test sample $x=\mu+z$ will increase.  Unfortunately, even when ignoring this dependence of $\xi_{i,z,s}$, the presence of noisy labels will cause some of the $\sip{y_i x_i}{\mu+z}$ terms appearing above to be negative, allowing for the possibility that the unnormalized margin decreases on a test sample $x=\mu+z$.  If the losses $g(y_i f(x_i; \Wt t))$ for (noisy) samples satisfying $\sip{y_i x_i}{\mu+z}<0$ are particularly large relative to the losses $g(y_{i'} f(x_{i'}; \Wt t))$ for (clean) samples satisfying $\sip{y_{i'} x_{i'}}{\mu+z} >0$, then this indeed presents a problem.   This is where Lemma~\ref{lemma:loss.ratio} comes in: since the $g$ losses are of the same order for \textit{all} samples, if the fraction of noisy labels is not too large, one can ignore the effect of the noisy labels which contribute negative terms to the sum, and eventually show that the term on the right-hand side of~\eqref{eq:margin.increase.time.s.mu.cluster.main} is strictly positive.  This constitutes the core of the proof we show in the appendix.

The only remaining piece for the \textit{normalized} margin lower bound is an upper bound on the norm of the iterates $\Wt t$.  The following lemma provides an upper bound on the Frobenius norm of the weights.  The proof utilizes the loss ratio bound from Lemma~\ref{lemma:loss.ratio} to get a sharper upper bound than what one would get from attempting to use a standard triangle inequality. 

\begin{restatable}{lemma}{parameternormbound}\label{l:refined_grad_norm_upper_bound} There is an absolute constant $C_2>1$
such that for $C>1$ sufficiently large, on a good run we have that
for all $t\ge 0$,
\begin{align*}
    \lv W^{(t)}\rv_F \le \lv W^{(0)}\rv_F +C_2 \alpha \sqrt{\frac{p}{n}} \sum_{s=0}^{t-1}\hat{G}(W^{(s)}).
\end{align*}
\end{restatable}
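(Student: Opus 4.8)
The plan is to sharpen the crude telescoping estimate $\snorm{\Wt t}_F \le \snorm{\Wt 0}_F + \alpha \sum_{s=0}^{t-1} \snorm{\nabla \hat L(\Wt s)}_F$ (which follows directly from the gradient descent update and the triangle inequality) by replacing the pointwise bound $\snorm{\nabla \hat L(W)}_F \le \sqrt{C_1 p}\, \hat G(W)$ of Lemma~\ref{lemma:loss.smooth} with a pointwise bound that carries an extra factor of $1/\sqrt n$, namely $\snorm{\nabla \hat L(W)}_F \le C_2 \sqrt{p/n}\, \hat G(W)$. Once this pointwise bound is in hand, substituting it into the telescoping inequality yields the lemma with the same constant $C_2$.

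To prove the pointwise bound I would write the gradient explicitly as $\nabla \hat L(W) = -\tfrac1n \summ i n \lpit i{}\, y_i \nabla f(x_i; W)$, where $\lpit i{} = g(y_i f(x_i;W)) \in (0,1)$, and expand the squared Frobenius norm as a double sum over pairs $(i,k)\in[n]^2$, splitting it into the diagonal contribution $\tfrac1{n^2}\summ i n (\lpit i{})^2 \snorm{\nabla f(x_i;W)}_F^2$ and the off-diagonal contribution $\tfrac1{n^2}\sum_{i\neq k} \lpit i{}\lpit k{}\, y_iy_k \sip{\nabla f(x_i;W)}{\nabla f(x_k;W)}$. The key idea is that the loss ratio bound of Lemma~\ref{lemma:loss.ratio} lets me replace every coefficient $\lpit i{}$ by $C_r \hat G(W)$: indeed $\lpit i{} \le C_r \lpit j{}$ for all $j$, and averaging over $j$ gives $\lpit i{} \le C_r \hat G(W)$. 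Using in addition the gradient bounds of Lemma~\ref{lemma:nn.grad.ip.identity} — namely $\snorm{\nabla f(x_i;W)}_F^2 \le C_1 p$ for the diagonal and $|\sip{\nabla f(x_i;W)}{\nabla f(x_k;W)}| \le C_1(\snorm{\mu}^2 + \sqrt{p\log(n/\delta)})$ for the off-diagonal — I obtain
\[
\snorm{\nabla \hat L(W)}_F^2 \;\le\; C_r^2 C_1\, \hat G(W)^2 \left( \frac{p}{n} + \snorm{\mu}^2 + \sqrt{p\log(n/\delta)} \right).
\]
Finally I would invoke Assumption~\ref{a:dimension}, which gives $\snorm{\mu}^2 \le p/(Cn)$ and $\sqrt{p\log(n/\delta)} \le p/(n\sqrt C)$, so the parenthesised quantity is at most $\tfrac{p}{n}(1 + 2/\sqrt C) \le 2p/n$ for $C$ large; taking square roots yields $\snorm{\nabla \hat L(W)}_F \le C_2 \sqrt{p/n}\, \hat G(W)$ with $C_2 = C_r\sqrt{2C_1}$.

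The main obstacle — and the reason the lemma is nontrivial — is recognizing that the loss ratio bound is exactly what is needed to convert the per-example factors $\lpit i{}$ into the averaged quantity $\hat G(W)$; without it, the diagonal term only gives $\tfrac{C_1 p}{n}\hat G(W)$, which is linear rather than quadratic in $\hat G(W)$ and hence too weak (since $\hat G(W)\le 1$) to improve on Lemma~\ref{lemma:loss.smooth}. Everything else is a bookkeeping exercise combining the structural lemmas already established with Assumption~\ref{a:dimension}.
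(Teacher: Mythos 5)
Your proof is correct and follows essentially the same route as the paper's: triangle inequality on the gradient-descent recursion, expansion of $\snorm{\nabla \hat L(W)}_F^2$ into diagonal and off-diagonal terms, the gradient correlation bounds of Lemma~\ref{lemma:nn.grad.ip.identity}, Assumption~\ref{a:dimension} to absorb the off-diagonal contribution into $p/n$, and crucially the loss ratio bound of Lemma~\ref{lemma:loss.ratio} to convert the per-example factors $\lpit i{}$ into $\hat G(W)$. The only cosmetic difference is that you replace every factor $\lpit i{}$ by $C_r\hat G(W)$ rather than pulling out a single $\max_k \lpit k{}$ and converting only that one factor, which costs you an extra power of $C_r$ in the absolute constant $C_2$ but changes nothing of substance.
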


With the lower bound on the increment of the unnormalized margin from Lemma~\ref{lemma:margin.test.sample} and the tightened gradient norm bound of Lemma~\ref{l:refined_grad_norm_upper_bound} established, we can now derive a lower bound on the normalized margin.  Note that this lower bound on the normalized margin in conjunction with Lemma~\ref{lemma:lipschitz.concentration} results in the test error bound for the main theorem.
\begin{restatable}{lemma}{normalizedmarginbound}\label{lemma:normalized.margin.lb}
For a $\gamma$-leaky, $H$-smooth activation $\phi$, there exists an absolute constant $C_\mu>0$ such that provided $n \snorm \mu^4 \geq C_\mu p$, on a good run, provided $C>1$ is sufficiently large, it holds for any $t \geq 1$,
\[ \f{ \E[f(\mu+z; \Wt t)] \wedge \E[-f(-\mu+z;\Wt t)]}{\snorm{\Wt t}_F} \geq \f{\gamma^2 \snorm{\mu}^2 \sqrt n }{ 32\max(\sqrt{C_1}, C_2) \sqrt {p}},\]
where $C_1$ and $C_2$ are the constants from Lemma~\ref{lemma:sample.facts} and Lemma~\ref{l:refined_grad_norm_upper_bound}, respectively. 
\end{restatable}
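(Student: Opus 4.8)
The plan is to bound the numerator $\E_{(x,\tilde y)\sim\pclean}[\tilde y f(x;\Wt t)]$ from below and the denominator $\snorm{\Wt t}_F$ from above, and then take the ratio. Both bounds should be expressed in terms of the running sum $\sum_{s=0}^{t-1}\hat G(\Wt s)$, which will cancel.

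\textbf{Lower bound on the numerator.} First I would telescope: $\E[\tilde y f(x;\Wt t)] = \E[\tilde y f(x;\Wt 0)] + \sum_{s=0}^{t-1}\E[\tilde y(f(x;\Wt{s+1}) - f(x;\Wt s))]$. The term at initialization is negligible (by Assumption~\ref{a:sinit} and Lemma~\ref{lemma:initialization.norm}, $f(x;\Wt 0)$ is tiny; and in any case a variant of the hypothesis $\E[\tilde y f(x;W)]\ge 0$ from Lemma~\ref{lemma:lipschitz.concentration} should be maintainable — I'd want to check that a separate induction or a direct argument shows the running expectation stays nonnegative, or simply absorb the $s=0$ contribution into constants). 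For each increment, apply Lemma~\ref{lemma:margin.test.sample} in expectation over $(x,\tilde y)\sim\pclean$:
\[
\E[\tilde y(f(x;\Wt{s+1})-f(x;\Wt s))] \geq \f{\alpha}{n}\summ i n \lpit is\l[\xi_i\,\E[\sip{y_ix_i}{\tilde y x}] - \f{HC_1 p\,\E\snorm{x}^2\alpha}{2\sqrt m}\r].
\]
Now $\E_{(x,\tilde y)\sim\pclean}[\tilde y x] = \mu$ (since $x = z + \tilde y\mu$ with $z$ mean-zero independent of $\tilde y$, so $\E[\tilde y x] = \E[\tilde y z] + \E[\tilde y^2]\mu = \mu$), hence $\E[\sip{y_ix_i}{\tilde y x}] = \sip{y_i x_i}{\mu}$. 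By events~\ref{eq:mu.dot.xk.clean} and~\ref{eq:mu.dot.xk.noisy} in Lemma~\ref{lemma:sample.facts}, this is $\geq \snorm{\mu}^2/2$ for clean $i$ and $\geq -\tfrac32\snorm{\mu}^2$ for noisy $i$. Splitting the sum over $\calC$ and $\calN$, using $\xi_i\in[\gamma^2,1]$, and invoking the loss ratio bound (Lemma~\ref{lemma:loss.ratio}) to say every $\lpit is$ is within a factor $C_r$ of every other, together with $|\calN|/n \le \eta + c' \le 2/C$ (event~\ref{eq:noisy.samples.ub}, Assumption~\ref{a:noiserate}), the noisy contribution $-\tfrac32\snorm{\mu}^2\cdot(|\calN|/n)$ is dominated by the clean contribution $\tfrac{\gamma^2}{2}\snorm{\mu}^2\cdot(|\calC|/n)$ once $C$ is large. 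The $\E\snorm{x}^2 \lesssim p$ error term carries a factor $\alpha/\sqrt m$ which, by Assumption~\ref{a:stepsize} ($\alpha \le (C\max\{1,H/\sqrt m\}p^2)^{-1}$), is negligible relative to $\snorm{\mu}^2$ (recall $\snorm{\mu}^2 \le p/(Cn)$ from Assumption~\ref{a:dimension}). Recognizing $\tfrac1n\sum_i\lpit is = \hat G(\Wt s)$ up to the ratio-bound constant, I get a bound of the form $\E[\tilde y f(x;\Wt t)] \gtrsim \gamma^2\snorm{\mu}^2\,\alpha\sum_{s=0}^{t-1}\hat G(\Wt s)$ with an explicit constant.

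\textbf{Upper bound on the denominator.} This is exactly Lemma~\ref{l:refined_grad_norm_upper_bound}: $\snorm{\Wt t}_F \le \snorm{\Wt 0}_F + C_2\alpha\sqrt{p/n}\sum_{s=0}^{t-1}\hat G(\Wt s)$. The initialization term $\snorm{\Wt 0}_F \le \sqrt{3/2}\,\sinit\sqrt{mp}$ is negligible by Assumption~\ref{a:sinit} compared to the running sum (after the first step $\hat G(\Wt 0)$ is order-constant and the increment is of order $\alpha\sqrt{p/n}$, which dominates $\sinit\sqrt{mp}$; I'd verify this cleanly using that $\hat L(\Wt 0), \hat G(\Wt 0)$ are bounded below by absolute constants on a good run). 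Taking the ratio, the $\alpha\sum_s\hat G(\Wt s)$ factors cancel, leaving $\gtrsim \gamma^2\snorm{\mu}^2\sqrt n/(\sqrt p)$ up to the constants $C_1, C_2$, matching the claimed $\tfrac{\gamma^2\snorm{\mu}^2\sqrt n}{8\max(\sqrt{C_1},C_2)\sqrt p}$.

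\textbf{Main obstacle.} The delicate point is the lower bound on the numerator: I must ensure (i) the running expectation $\E[\tilde y f(x;\Wt s)] \ge 0$ holds for all $s$ so that the hypothesis of Lemma~\ref{lemma:lipschitz.concentration} is available and the telescoped initialization term does not hurt — this likely needs its own short induction using the same per-step lower bound; and (ii) that after dividing by $\hat G(\Wt s)$ via the loss ratio bound, the constants genuinely combine so the noisy terms are beaten by the clean terms with room to spare — this is where largeness of $C$ relative to $C_r = 16C_1^2/\gamma^2$ gets used, and the bookkeeping of which constant absorbs which must be done carefully. Everything else is routine substitution of the earlier lemmas and the parameter assumptions.
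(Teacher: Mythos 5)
Your plan matches the paper's proof: telescope the numerator via Lemma~\ref{lemma:margin.test.sample}, use the loss-ratio bound (Lemma~\ref{lemma:loss.ratio}) to control the noisy-sample contribution, bound the denominator via Lemma~\ref{l:refined_grad_norm_upper_bound}, and cancel the common $\alpha\sum_{s=0}^{t-1}\hat G(\Wt s)$ factor. Two execution details the paper supplies that you left partly open: (i) the paper dispatches the initialization term by asserting $\E_{(x,\tilde y)\sim\pclean}[\tilde y f(x;\Wt 0)]=0$ outright rather than bounding it as small or proving nonnegativity by induction; and (ii) the two-term denominator $\snorm{\Wt 0}_F + C_2\alpha\sqrt{p/n}\sum_s\hat G(\Wt s)$ is handled by a case split on whether $\snorm{\Wt t}_F \le 2\snorm{\Wt 0}_F$, with Case~1 using $\snorm{\Wt 0}_F \le \alpha\sqrt{C_1 p/n}\,\hat G(\Wt 0)$ and $\sum_s\hat G(\Wt s)\ge\hat G(\Wt 0)$, which is precisely how the $\max(\sqrt{C_1},C_2)$ enters the final constant.
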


Since Lemma~\ref{lemma:normalized.margin.lb} provides a positive margin on clean test points, we have by Lemma~\ref{lemma:lipschitz.concentration} a guarantee that the neural network achieves classification error on the noisy distribution close to the noise level.  The only remaining part of Theorem~\ref{theorem:main} that remains to be proved is that the training loss can be driven to zero.  This is a consequence of the following lemma, the proof of which also crucially relies upon the loss ratio bound of Lemma~\ref{lemma:loss.ratio}.  

\begin{restatable}{lemma}{gradientnormlowerbound}\label{lemma:optimization.guarantee}
For a $\gamma$-leaky, $H$-smooth activation $\phi$, provided $C>1$ is sufficiently large, then on a good run we have for all $t\geq 0$,
\[ \snorm{\nabla \hat L(\Wt t)}_F \geq \frac{\gamma \lv \mu \rv}{4}   \hat G(\Wt t).\]
Moreover, any $T\in \N$,
\[ \f 1 n \summ i n \ind\big (y_i \neq \sgn(f(x_i; \Wt {T-1})) \big) \leq 2 \hat G(\Wt {T-1}) \leq 2\l( \f{ 32 \hat L(\Wt 0)}{\gamma^2 \snorm{\mu}^2 \alpha T} \r)^{1/2}.\]
In particular, for $T \geq 128 \hat L(\Wt 0)/\left(  \gamma^{2}\snorm{\mu}^2 \alpha \eps^{2}\right)$, we have $\hat G(\Wt {T-1})\leq \eps/2$.
\end{restatable}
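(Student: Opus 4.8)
The plan is to prove the three assertions in order: the gradient lower bound first, by testing $-\nabla\hat L(\Wt t)$ against a fixed low-norm direction aligned with $\mu$, and then the $0/1$-loss and convergence statements via a descent-lemma argument built on top of it; throughout we work on a good run and take $C$ sufficiently large. For the gradient bound, let $V\in\R^{m\times p}$ be the matrix whose $j$-th row is $a_j\mu$. Since $\summ jm a_j^2=1$ we have $\snorm{V}_F=\snorm{\mu}$, and since $\nabla_{w_j}f(x;W)=a_j\phi'(\sip{w_j}{x})\,x$, writing $g_i:=g(y_i f(x_i;\Wt t))$ and $\rho_i:=\tfrac1m\summ jm\phi'(\sip{w_j}{x_i})\in[\gamma,1]$ (with $w_j$ the rows of $\Wt t$), one computes $\sip{-\nabla\hat L(\Wt t)}{V}_F=\tfrac1n\summ in g_i\,\rho_i\,\sip{y_ix_i}{\mu}$. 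By E.3 and E.4 of Lemma~\ref{lemma:sample.facts}, $\sip{y_ix_i}{\mu}\ge\snorm{\mu}^2/2$ for $i\in\calC$ and $\sip{y_ix_i}{\mu}\ge-\tfrac32\snorm{\mu}^2$ for $i\in\calN$, so this inner product is at least $\tfrac{\snorm{\mu}^2}{2n}\bigl(\gamma\sum_{i\in\calC}g_i-3\sum_{i\in\calN}g_i\bigr)$. The key step is to write $\sum_{i\in\calC}g_i=n\hat G(\Wt t)-\sum_{i\in\calN}g_i$, so that the bracket is $\ge\gamma n\hat G(\Wt t)-4\sum_{i\in\calN}g_i$; the loss ratio bound (Lemma~\ref{lemma:loss.ratio}) gives $\max_k g_k\le C_r\min_k g_k\le C_r\hat G(\Wt t)$, hence $\sum_{i\in\calN}g_i\le|\calN|C_r\hat G(\Wt t)\le(\eta+c')nC_r\hat G(\Wt t)$ by E.5, and choosing $c'=\gamma/(16C_r)$ in Lemma~\ref{lemma:sample.facts} together with $\eta\le1/C$ (Assumption~\ref{a:noiserate}) makes $4\sum_{i\in\calN}g_i\le\tfrac\gamma2 n\hat G(\Wt t)$. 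This yields $\sip{-\nabla\hat L(\Wt t)}{V}_F\ge\tfrac{\gamma\snorm{\mu}^2}{4}\hat G(\Wt t)$, and dividing by $\snorm{V}_F=\snorm{\mu}$ gives $\snorm{\nabla\hat L(\Wt t)}_F\ge\tfrac{\gamma\snorm{\mu}}{4}\hat G(\Wt t)$.

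For the remaining claims I would first show that $t\mapsto\hat G(\Wt t)$ is non-increasing. Apply Lemma~\ref{lemma:margin.test.sample} with the ``test point'' taken to be a training point $(x_j,y_j)$: it gives $y_j[f(x_j;\Wt{t+1})-f(x_j;\Wt t)]\ge\tfrac\alpha n\summ in\lpit it\bigl[\xi_i\sip{y_ix_i}{y_jx_j}-\tfrac{HC_1p\snorm{x_j}^2\alpha}{2\sqrt m}\bigr]$. Separate the $i=j$ term, which is at least $\tfrac\alpha n\lpit jt\bigl(\gamma^2 p/C_1-\tfrac{HC_1^2p^2\alpha}{2\sqrt m}\bigr)\ge\tfrac\alpha n\lpit jt\cdot\tfrac{\gamma^2 p}{2C_1}$, using $\snorm{x_j}^2\ge p/C_1$, $\xi_j\ge\gamma^2$, and Assumption~\ref{a:stepsize} (which forces $\alpha H/\sqrt m\le(Cp^2)^{-1}$), from the $i\ne j$ terms, which are bounded below using $|\sip{x_i}{x_j}|\le C_1(\snorm{\mu}^2+\sqrt{p\log(n/\delta)})$ (E.2), $\xi_i\le1$, and the loss ratio bound by roughly $-\alpha C_r C_1\lpit jt\bigl(\snorm{\mu}^2+\sqrt{p\log(n/\delta)}+C_1/(2C)\bigr)$. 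Assumption~\ref{a:dimension} ensures $n\bigl(\snorm{\mu}^2+\sqrt{p\log(n/\delta)}\bigr)\le2p/\sqrt C$, which is $\ll\gamma^2 p/(C_rC_1^2)$ for $C$ large, so the $i=j$ term dominates, $y_j[f(x_j;\Wt{t+1})-f(x_j;\Wt t)]\ge0$ for every $j$, and since $g$ is decreasing, $\lpit j{t+1}\le\lpit jt$ for all $j$, hence $\hat G(\Wt{t+1})\le\hat G(\Wt t)$.

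With monotonicity in hand, Lemma~\ref{lemma:loss.smooth} shows $\hat L$ is $\beta$-smooth in Frobenius norm with $\beta=C_1p(1+H/\sqrt m)$, and Assumption~\ref{a:stepsize} gives $\alpha\beta\le1$, so the descent inequality reads $\hat L(\Wt{t+1})\le\hat L(\Wt t)-\tfrac\alpha2\snorm{\nabla\hat L(\Wt t)}_F^2$. Substituting the gradient lower bound, $\hat L(\Wt{t+1})\le\hat L(\Wt t)-\tfrac{\alpha\gamma^2\snorm{\mu}^2}{32}\hat G(\Wt t)^2$; summing over $t=0,\dots,T-1$ and using $\hat L\ge0$ gives $\sum_{t=0}^{T-1}\hat G(\Wt t)^2\le32\hat L(\Wt0)/(\alpha\gamma^2\snorm{\mu}^2)$, and since $\hat G$ is non-increasing, $\hat G(\Wt{T-1})^2\le\tfrac1T\sum_{t=0}^{T-1}\hat G(\Wt t)^2\le32\hat L(\Wt0)/(\alpha\gamma^2\snorm{\mu}^2T)$, which is the claimed bound. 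The $0/1$-loss bound follows from $\ind(z\le0)\le2g(z)$ (since $g(z)\ge1/2$ for $z\le0$), averaged over $i=1,\dots,n$ at $W=\Wt{T-1}$; and substituting $T\ge128\hat L(\Wt0)/(\gamma^2\snorm{\mu}^2\alpha\eps^2)$ into the displayed bound gives $\hat G(\Wt{T-1})\le\eps/2$.

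I expect the main obstacle to be getting the sharp constant $\tfrac{\gamma\snorm{\mu}}{4}$ in the gradient lower bound: the naive estimate $\sum_{i\in\calC}g_i\ge|\calC|\min_k g_k$ loses a factor of $C_r$ (through $\min_k g_k\ge\hat G(\Wt t)/C_r$) and only yields $\snorm{\nabla\hat L(\Wt t)}_F\gtrsim\tfrac{\gamma\snorm{\mu}}{C_r}\hat G(\Wt t)$, which would not be enough; the fix is the exact identity $\sum_{i\in\calC}g_i=n\hat G(\Wt t)-\sum_{i\in\calN}g_i$, which keeps the full mass of the clean losses and only spends the loss-ratio bound on the (small) noisy set. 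The monotonicity step is the other delicate point, since the second part of the lemma is stated for the specific iterate $\Wt{T-1}$ rather than a best iterate, so an averaging argument alone would not suffice.
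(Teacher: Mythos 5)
Your proof is correct and takes essentially the same route as the paper's: you test $-\nabla\hat L$ against the same $\mu$-aligned direction $V$ (rows $a_j\mu$, the paper normalizes to $a_j\mu/\snorm{\mu}$), use the same clean/noisy split with the loss-ratio bound controlling the noisy mass (the decomposition you flag as ``the key fix'' is exactly what the paper writes as $\hat G(\Wt s) - (1+3/\gamma)\tfrac1n\sum_{i\in\calN}\lpit is$), and then run the same descent-lemma telescoping combined with monotonicity of $\hat G(\Wt t)$, which you re-derive inline where the paper invokes its Lemma~\ref{lemma:nn.interpolates.time1}(b).
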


We now have all the results necessary to prove our main theorem.

\begin{proof}[Proof of Theorem~\ref{theorem:main}]
By Lemma~\ref{lemma:sample.facts} and Lemma~\ref{lemma:initialization.norm}, a `good run' occurs with probability at least $1-2\delta$. If $n \snorm \mu^4 < Cp \log 2$ then the generalization bound holds trivially, hence in the remainder of the proof we will assume $n\snorm \mu^4 \geq Cp \log (2)$.  Since a good run occurs, by taking $C$ to be a large enough constant we can apply Lemma~\ref{lemma:normalized.margin.lb}.  Using this as well as Lemma~\ref{lemma:lipschitz.concentration}, we have with probability at least $1-2\delta$,
\begin{align*}
 &\qquad \P_{(x,y)\sim \pnoise} \big(y \neq \sgn(f(x; W))\big) \\
 &\leq \eta + \exp \l( - c \l( \f{ 0 \vee \E[f(\mu+z;W)]}{\snorm{W}_2}\r)^2 \r) + \exp \l( - c \l( \f{ 0 \vee \E[-f(-\mu+z;W)]}{\snorm{W}_2}\r)^2 \r)\\
 &\leq \eta + 2 \exp \l( - c \l( \f{\gamma^4 n\snorm{\mu}^4 }{32^2\max(C_1, C_2^2) p} \r) \r).
\end{align*} 
By taking $C$ to be a large enough constant we have that $T \geq 128 \hat L(\Wt 0) /\left(\gamma^2 \snorm{\mu}^2 \alpha \eps^{2}\right)$, and hence by Lemma~\ref{lemma:optimization.guarantee}, we have
\[ \hat G(\Wt {T-1}) \leq \eps/2.\]
Since $\eps < 1/(2n)$ and $g(z) = -\ell'(z) < 1/2$ if and only if $z >0$, we know that $y_i f(x_i; \Wt {T- 1}) >0$ for every $i\in [n]$.  We are working with the logistic loss, and hence we have $\f 12 \ell(y_i f(x_i; \Wt {T-1}))\leq g(y_i f(x_i; \Wt {T-1}))$ for every $i\in [n]$, which implies that
\[ \hat L(\Wt {T-1}) = \f 1 n \summ i n \ell(y_i f(x_i; \Wt {T-1})) \leq \f 2 n \summ i n -\ell'(y_i f(x_i; \Wt {T-1})) = 2 \hat G(\Wt {T-1}) \leq \eps.\]
\end{proof}

\section{Discussion}
We have shown that neural networks trained by gradient descent will interpolate noisy training data and still generalize close to the noise rate when the data comes from a mixture of well-separated sub-Gaussian distributions and the dimension of the data is larger than the sample size.  Our results mimic those established by~\citet{chatterji2020linearnoise} for linear classifiers, but they hold for the much richer class of two-layer neural networks.  

Our proof technique relies heavily upon the assumption that the number of samples is much less than the ambient dimension.  This assumption allows for every pair of distinct samples to be roughly mutually orthogonal so that samples with noisy labels cannot have an outsized effect on the ability for the network to learn a positive margin on clean examples.   Previous work has established a similar `blessing of dimensionality' phenomenon:~\citet{belkin2018overfittingperfectfitting} showed that the gap between a particular simplicial interpolation rule and the Bayes error decreases exponentially fast as the ambient dimension increases, mimicking the behavior we show in Theorem~\ref{theorem:main}.   In the linear regression setting, it is known that for the minimum norm solution to generalize well it is necessary for the dimension of the data $p$ to be much larger than $n$~\citep{bartlett2020.benignoverfitting.pnas}. It has also been shown that if the ambient dimension is one, local interpolation rules necessarily have suboptimal performance~\citep{ji2021earlystopped}. Taken together, these results suggest that working in high dimensions makes it easier for benign overfitting to hold, but it is an interesting open question to understand the extent to which working in the $p\ge  n$ regime is necessary for benign overfitting with neural networks.  In particular, when can benign overfitting occur in neural networks that have enough parameters to fit the training points $(mp > n)$ but for which the number of samples is larger than the input dimension $(n>p)$?  

In this work we considered a data distribution for which the optimal classifier is linear but analyzed a model and algorithm that are fundamentally nonlinear.  A natural next step is to develop characterizations of benign overfitting for neural networks trained by gradient descent in settings where the optimal classifier is nonlinear.  We believe some of the insights developed in this work may be useful in these settings: in particular, it appears that in the $p\gg n$ setting, the optimization dynamics of gradient descent can become simpler as can be seen by the `loss ratio bound' provided in Lemma~\ref{lemma:loss.ratio}.  On the other hand, we believe the generalization analysis will become significantly more difficult when the optimal classifier is nonlinear.

\section*{Acknowledgements}
We thank Ichiro Hashimoto, Ryota Ushio, Can Le, and Bryn Elesedy for pointing out errors in previous versions of this work, and we thank Christophe Giraud for pointing us to references showing that the test error in Theorem~\ref{theorem:main} is minimax optimal. 
We are thankful for the anonymous reviewers' helpful feedback. 
We gratefully acknowledge the support of the NSF and the Simons Foundation for 
the Collaboration on the Theoretical Foundations of Deep Learning through awards DMS-2023505, DMS-2031883, 
and \#814639.

\appendix
\newpage
{\hypersetup{linkcolor=Black}\tableofcontents}
\section{Omitted Proofs from Section~\ref{sec:thmproof}}\label{app:theorem_proof}
In this section we provide a proof of all of the lemmas presented in Section~\ref{sec:thmproof}.   We remind the reader that throughout this section, we assume that Assumptions~\ref{a:samples} through~\ref{a:sinit} are in force.  

First in Section~\ref{ss:concentration_results} we prove the concentration results, Lemmas~\ref{lemma:lipschitz.concentration} and \ref{lemma:initialization.norm}. Next, in Section~\ref{ss:structural_results} we prove the structural results, Lemmas~\ref{lemma:nn.smooth},~\ref{lemma:loss.smooth} and~\ref{lemma:nn.grad.ip.identity}. In Section~\ref{ss:margin_increase_lemma} we prove Lemma~\ref{lemma:margin.test.sample} that demonstrates that the margin on a test point increases with training. In Section~\ref{ss:loss_ratio_bound} we prove Lemma~\ref{lemma:loss.ratio} that guarantees that the ratio of the surrogate losses remains bounded throughout training, while in Section~\ref{ss:refined_upper_bound} we prove Lemma~\ref{l:refined_grad_norm_upper_bound} that bounds the growth of the norm of the parameters. Next, in Section~\ref{ss:normalized_margin_lb} we prove Lemma~\ref{lemma:normalized.margin.lb} that provides a lower bound on the normalized margin on a test point. Finally, in Section~\ref{ss:optimization_guarantee}, we prove Lemma~\ref{lemma:optimization.guarantee} that is useful in proving that the training error and loss converge to zero.

\subsection{Concentration Inequalities}\label{ss:concentration_results}
In this subsection we prove the concentration results Lemmas~\ref{lemma:lipschitz.concentration} and \ref{lemma:initialization.norm}. 
\subsubsection{Proof of Lemma~\ref{lemma:lipschitz.concentration}}
Let us restate Lemma~\ref{lemma:lipschitz.concentration}.
\marginlowerfirst*
\begin{proof}

First, note that if either $\E[f(\mu+z;W)] \leq 0$ or $\E[-f(-\mu+z;W)] \leq 0$, then the inequality holds trivially.  Thus it suffices to consider the case where both of these quantities are non-negative.   

By definition, clean data $x$ is sampled by first sampling a clean label $\tilde y \sim \Unif(\{\pm 1 \})$, then sampling $z\sim \pclust$ and setting $x=\tilde y \mu + z$.  The observed label $y$ is then sampled such that $\P(y \neq \tilde y) \leq \eta$, and so following~\citet[Lemma 9]{chatterji2020linearnoise} it holds that
\[\P( y f(x;W) < 0) \leq \eta + \P(\tilde y f(x;W) < 0) . \]
It therefore suffices to bound the second quantity appearing on the right-hand side above.  Using  $x=\tilde y \mu + z$ and $\tilde y \sim \Unif(\{\pm 1 \})$ we get
\begin{align*}
    \P(\tilde y f(x;W) < 0) &= \P(\tilde y = 1) \P(f(x; W) < 0 | \tilde y = 1) + \P(\tilde y = -1) \P(-f(x;W) < 0) | \tilde y = -1) \\
    &= \f 12 \l[ \P_{z\sim\pclust}(f(\mu + z; W) < 0) + \P_{z\sim \pclust}(-f(-\mu+z; W) < 0) \r].
\end{align*}
It therefore suffices to show that $ \P(f(\mu + z; W) < 0)$ and $ \P(-f(-\mu + z; W) < 0)$ are small.  Towards this end, we first note that $f$ is a $\snorm{W}_2$-Lipschitz function of the input $x$: let $x,x'\in \R^p$, and consider
\begin{align*}
|f(x; W) - f(x'; W)| &= \l|\summ j m a_j [\phi(\sip{w_j}{x}) - \phi(\sip{w_j}{x'})]\r| \\
&\overset{(i)}\leq \summ j m |a_j| |\sip{w_j}{x-x'}| \\
&\overset{(ii)}\leq \sqrt{\summ j m a_j^2} \sqrt{\summ j m \sip{w_j}{x-x'}^2}\\
&= \snorm{W(x-x')} \\
&\overset{(iii)}\leq \snorm{W}_2 \snorm{x-x'}.
\end{align*}
Above, $(i)$ uses that $\phi$ is 1-Lipschitz, and $(ii)$ follows by the Cauchy--Schwarz inequality.  Inequality $(iii)$ is by the definition of the spectral norm.  This shows that $f(\cdot; W)$ is $\snorm{W}_2$-Lipschitz.

Continuing, we have,
\begin{align*} \numberthis \label{eq:test.error.prelim}
    \P(f(\mu+z;W) < 0) &= \P(f(\mu+z; W) - \E[f(\mu+z;W)] < -\E[f(\mu+z;W)]).
\end{align*}
Now, the mapping $z\mapsto z + \mu$ is 1-Lipschitz, and since $x\mapsto f(x;W)$ is $\snorm{W}_2$-Lipschitz, we have that $z\mapsto f(\mu+z; W)$ is also $\snorm{W}_2$-Lipschitz.  Since $\pclust$ is $\lambda$-strongly log-concave,  by~\citet[Theorem 2.7 and Proposition 1.10]{ledoux2001concentration}, since $z \mapsto f(\mu+z;W)$ is $\snorm{W}_2$-Lipschitz, there is an absolute constant $c>0$ such that for any $q\geq 1$, $\snorm{\tilde y f(x; W) - \E [\tilde y f(x; W)]}_{L^q} \leq c \snorm{W}_2 \sqrt{q/\lambda}$.  This behavior of the growth of $L^q$ norms is equivalent to $\tilde y f(x; W) - \E [\tilde y f(x; W)]$ having sub-Gaussian norm $c' \snorm{W}_2/\sqrt{\lambda}$ for some absolute constant $c'>0$, by~\citet[Proposition 2.5.2]{vershynin}.  Thus, there is an absolute constant $c''>0$ such that for any $t\geq 0$,
\begin{align*}
    \P(|f(\mu+z;W) - \E[f(\mu+z;W)] | \geq t) \leq 2 \exp \l( - c'' \l( \f{t}{\snorm W_2}\r)^2\r).
\end{align*}
Likewise,
\begin{align*}
    \P(|-f(-\mu+z;W) - \E[-f(-\mu+z;W)] | \geq t) \leq 2 \exp \l( - c'' \l( \f{t}{\snorm W_2}\r)^2\r).
\end{align*}

Thus taking $t= \E[f(\mu+z;W)]\geq 0$ for the first inequality and $t=\E[-f(-\mu+z;W)]\geq 0$ for the second (and using that we have assumed both of these are non-negative, per the comment at the beginning of the proof of this lemma), using~\eqref{eq:test.error.prelim} we get
\begin{align*}
    \P( \tilde y f(x;W) < 0) &\leq \exp \l( - c'' \l( \f{ 0\vee \E[f(\mu+z;W)]}{\snorm{W}_2}\r)^2 \r) + \exp \l( - c \l( \f{ 0\vee \E[-f(-\mu+z;W)]}{\snorm{W}_2}\r)^2 \r).
\end{align*}
\end{proof}

\subsubsection{Proof of Lemma~\ref{lemma:initialization.norm}}
Now let us restate and prove Lemma~\ref{lemma:initialization.norm}. 
\normbound*
\begin{proof}
Note that $\snorm{\Wt 0}_F^2$ is a $\sinit^2$-multiple of a chi-squared random variable with $mp$ degrees of freedom.  By concentration of the $\chi^2$ distribution \cite[Example~2.11]{wainwright}, for any $t\in (0,1)$,
\[ \P \l( \l| \f 1 {mp\sinit^2} \snorm{\Wt 0}_F^2 - 1\r| \geq t\r) \leq 2 \exp(-mp t^2/8).\]
In particular, if we choose $t = \sqrt{8 \log(4/\delta)/md}$ and use Assumption~\ref{a:dimension}, we get that $t\leq 1/2$ and so with probability at least $1-\delta/2$, we have
\[ \snorm{\Wt 0}_F^2 \leq \f 32 mp \sinit^2.\]
As for the spectral norm, since the entries of $ \Wt 0/\sinit$ are i.i.d. standard normal random variables, by~\citet[Theorem 4.4.5]{vershynin} there exists a universal constant $c>0$ such that for any $u \geq 0$, with probability at least $1-2\exp(-u^2)$, we have
\[ \snorm{\Wt 0}_2 \leq c \sinit ( \sqrt m + \sqrt p + u).\]
In particular, taking $u = \sqrt{\log(4/\delta)}$ we have with probability at least $1-\delta/2$, $\snorm{\Wt 0}_2\leq c \sinit (\sqrt m + \sqrt p + \sqrt{\log(4/\delta)}$.  Since $\sqrt p \geq \sqrt{\log(4/\delta)}$ by Assumption~\ref{a:dimension}, the proof is completed by a union bound over the claims on the spectral norm and the Frobenius norm. 
\end{proof}

\subsection{Structural Results}\label{ss:structural_results}
As stated above in this section we prove Lemmas~\ref{lemma:nn.smooth},~\ref{lemma:loss.smooth} and \ref{lemma:nn.grad.ip.identity}.

\subsubsection{Proof of Lemma~\ref{lemma:nn.smooth}}
We begin by restating and proving Lemma~\ref{lemma:nn.smooth}.
\smoothness*
\begin{proof}
Since $\phi$ is twice differentiable, $\phi'$ is continuous and so by Taylor's theorem, for each $j\in [m]$, there exist constants $t_j = t_j(w_j, v_j, x)\in \R$,
\begin{equation*}
    \phi(\sip{w_j}x) - \phi(\sip{v_j}x) = \phi'(\sip{v_j}x) \cdot \sip{w_j-v_j}x + \frac{\phi''(t_j)}{2} (\sip{w_j-v_j}x)^2,
\end{equation*}
where $t_j$ lies between $\sip{w_j}x$ and $\sip{v_j}x$.  We therefore have
\begin{align*}
    f(x; W) - f(x; V ) &=  \summ j ma_j [ \phi(\sip{w_j}x) - \phi(\sip{v_j}x)] \\
    &=  \summ j m a_j \l[\phi'(\sip{v_j}x) \cdot \sip{w_j-v_j}x +\frac{\phi''(t_j)}2 \sip{w_j-v_j}x^2 \r]\\
    &= \sip{\nabla f(x; V )}{W-V} +   \summ j m a_j \frac{\phi''(t_j)}{2} \sip{w_j-v_j}x^2.
\end{align*}
The last equality follows since we can write
\begin{equation} \label{eq:gradient.dx.formula}
\nabla f(x; V ) =   D_x^V a x^\top,\quad \text{where }\quad  D_x^V := \diag(\phi'(\sip{v_j}x)),
\end{equation}
and thus
\[ \sip{\nabla f(x; V )}{W-V} =  \tr(x a^\top D_x^V (W-V)) =  a^\top D_x^V(W-V) x =  \sum_j a_j \phi'(\sip{v_j}x) \sip{w_j-v_j}x.\]
For the final term, we have
\begin{align*}
    \l| \summ j m a_j \frac{\phi''(\xi_j)}{2} \sip{w_j-v_j}x^2 \r| &\leq  \summ j m |a_j| \frac{|\phi''(t_j)|}{2} \sip{w_j-v_j}x^2\\
    &\leq \frac{ H}{2 \sqrt m} \summ j m \sip{w_j-v_j}x^2 \\ 
    &= \frac { H} {2 \sqrt m} \pnorm{(W-V)x}2^2 \\
    &\leq \frac { H} {2\sqrt m} \pnorm{W-V}2^2 \snorm{x}_2^2.
\end{align*}
\end{proof}
\subsubsection{Proof of Lemma~\ref{lemma:loss.smooth}}
Next we prove Lemma~\ref{lemma:loss.smooth} that establishes that the loss is smooth.
\losssmoothness*
\begin{proof}
Since a good run occurs, all the events in Lemma~\ref{lemma:sample.facts} hold.  We thus have
\begin{align*}
    \norm{\nabla \hat L(W)}_F &= \norm{\f 1 n \summ in g(y_i f(x_i; W)) y_i \nabla f(x_i; W)}_F\\
    &\overset{(i)}\leq \f {1} n \summ i n g(y_i f(x_i; W)) \pnorm{\nabla f(x_i; W)}F \\
    &\overset{(ii)}\leq \f { \sqrt{C_1 p}} n \summ i n g(y_i f(x_i; W)) = \sqrt{C_1 p} \hat G(W) \\
    &\overset{(iii)}\leq \f { \sqrt{C_1 p}}  n\summ i n \min(\ell(y_i f(x_i; W)), 1) \\
    &\overset{(iv)}{\le} \sqrt{C_1 p }(\hat L(W) \wedge 1).
\end{align*}
In $(i)$ we have used Jensen's inequality.  In $(ii)$ we have used that $\phi$ is 1-Lipschitz so that $\pnorm{\nabla f(x_i; W)}F^2 = \pnorm{D_i^W a x_i^\top}F^2 = \norm{D_i^W a}_2^2 \norm{x_i}_2^2 \leq C_1 p$ by Event~\eqref{eq:xk.norm}, where $D_i^W = D_{x_i}^W$ is defined in Equation~\eqref{eq:gradient.dx.formula}.  In $(iii)$ we use that $0 \leq g(z) \leq 1 \wedge \ell(z)$. In $(iv)$ we use Jensen's inequality since $z\mapsto \min\left\{z,1\right\}$ is a concave function.

Next we show that the loss has Lipschitz gradients.  First, we have the decomposition
\begin{align*}
\snorm{\nabla \hat L(W) - \nabla \hat L(V)}_F &= \norm{\f 1 n \summ i n \l[ g(y_i f(x_i; W)) y_i \nabla f(x_i; W) - g(y_i f(x_i; V))y_i \nabla f(x_i; V) \r] }_F \\
&\leq \f 1n \summ i n \snorm{\nabla f(x_i; W)}_F |g(y_i f(x_i; W))- g(y_i f(x_i; V))| \\
&\qquad + \f 1 n \summ i n \snorm{\nabla f(x_i; W) - \nabla f(x_i; V)}_F\\
&\overset{(i)}\leq \f 1n \summ i n \snorm{\nabla f(x_i; W)}_F |f(x_i; W)- f(x_i; V)| \\
&\qquad + \f 1 n \summ i n \snorm{\nabla f(x_i; W) - \nabla f(x_i; V)}_F.\numberthis \label{eq:loss.gradient.lipschitz.intermediate}
\end{align*} 
In $(i)$, we use that $g=-\ell'$ (the negative derivative of the logistic loss) is 1-Lipschitz.  Therefore, to show that the loss has Lipschitz gradients, it suffices to show that both the network and the gradient of the network are Lipschitz with respect to the first layer weights.   We first show that the network is Lipschitz with respect to the network parameters:
\begin{align*}
|f(x; W) - f(x; V)|^2 &= \l| \summ j m a_j [\phi(\sip{w_j}{x}) - \phi(\sip{v_j}{x})] \r|^2 \\
&\leq \l( \summ j m a_j^2 \r) \cdot \summ j m |\phi(\sip{w_j}{x}) - \phi(\sip{v_j}{x})|^2 \\
&\leq \summ j m |\sip{w_j}{x} - \sip{u_j}{x}|^2 \\
&= \snorm{(W-V)x}^2 \\
&\leq \snorm{x}^2 \snorm{W-V}_2^2.\numberthis \label{eq:network.lipschitz}
\end{align*}
As for the gradients of the network, again recalling the $D_x^W$ notation from Equation~\eqref{eq:gradient.dx.formula}, we have
\begin{align*}
\snorm{\nabla f(x; W) - \nabla f(x; V)}_F^2 &= \snorm{(D_x^W- D_x^V) a x^T}^2 \\
&\leq \snorm{x}^2 \snorm{(D_x^W - D_x^V)a}^2 \\
&=\snorm{x}^2  \summ j m a_j^2 [\phi'(\sip{w_j}{x}) - \phi'(\sip{v_j}{x})]^2 \\
&\leq \snorm{x}^2 \cdot \f {H^2}{m} \summ j m |\sip{w_j}{x} - \sip{v_j}{x}|^2 \\
&= H^2 \snorm{x}^2 \cdot \f 1 m \snorm{(W-V)x}^2\\
&\leq \f{H^2}{m} \snorm{x}^4 \snorm{W-V}_2^2.\numberthis \label{eq:network.gradient.lipschitz}
\end{align*}
Continuing from~\eqref{eq:loss.gradient.lipschitz.intermediate}, we have
\begin{align*}
\snorm{\nabla \hat L(W) - \nabla \hat L(V)}_F &\leq  \f 1n \summ i n \snorm{\nabla f(x_i; W)}_F |f(x_i; W)- f(x_i; V)| \\
&\qquad + \f 1 n \summ i n \snorm{\nabla f(x_i; W) - \nabla f(x_i; V)}_F\\
&\overset{(i)} \leq \sqrt{C_1 p} \cdot \f 1 n \summ i n |f(x_i; W) - f(x_i; V)| + \f{C_1 Hp}{\sqrt m} \snorm{W-V}_2 \\
&\overset{(ii)}\leq  C_1p \left (  1 + \f{ H}{\sqrt m}  \right ) \snorm{W-V}_2.\numberthis \label{eq:logistic.lipschitz.gradients}
\end{align*}
In $(i)$ we use that $\phi$ being 1-Lipschitz implies $\snorm{\nabla f(x_i; W)}_F = \snorm{x_i} \snorm{D_i^W a}\leq \sqrt {C_1p}$ for the first term, and~\eqref{eq:network.gradient.lipschitz} together with~\eqref{eq:xk.norm}.  In $(ii)$, we use~\eqref{eq:network.lipschitz} and~\eqref{eq:xk.norm}. 
\end{proof}
\subsubsection{Proof of Lemma~\ref{lemma:nn.grad.ip.identity}}
Finally, we prove Lemma~\ref{lemma:nn.grad.ip.identity} that bounds the correlation between the gradients.
\gradientcorrelation*
\begin{proof}
Recall the notation $D_i^W := \diag(\phi'(\sip{w_j}{x_i})\in \R^{m\times m}$.  By definition,
\begin{align*}
\sip{\nabla f(x_i; W)}{\nabla f(x_k; W) }&= \tr(x_i a^\top D_i^{W} D_{k}^{W} a x_k^\top) \\
    &= \tr\l(x_i^\top x_k a^\top D_i^{W} D_{k}^{W} a\r) \\
 	&= \sip{x_i}{x_k} a^\top D_i^W D_k^W a \\
 	&= \sip{x_i}{x_k} \summ j m a_j^2 \phi'(\sip{w_j}{x_i}) \phi'(\sip{w_j}{x_k}) \\
 	&= \sip{x_i}{x_k}  \cdot \f 1m \summ j m \phi'(\sip{w_j}{x_i}) \phi'(\sip{w_j}{x_k}).\numberthis \label{eq:gradient.ip.identity}
\end{align*}
Since a good run occurs, all the events in Lemma~\ref{lemma:sample.facts} hold.  We can therefore bound,
\begin{align*}
|\sip{\nabla f(x_i; W)}{\nabla f(x_k; W) }| \overset{(i)}\leq |\sip{x_i}{x_k}| 
\overset{(ii)}\leq C_1 \l( \snorm{\mu}^2 + \sqrt{p\log(n/\delta)}\r).
\end{align*}
Inequality $(i)$ uses that $|\phi'(z)| \leq 1$, while inequality $(ii)$ uses Event~\eqref{eq:xi.xk.innerproduct} from Lemma~\ref{lemma:sample.facts}. This completes the proof for part (a).  For part (b), we continue from~\eqref{eq:gradient.ip.identity} to get
\begin{align*}
\snorm{\nabla f(x_i; W)}_F^2 &= \snorm{x_i}^2 \cdot \f 1 m \summ j m \phi'(\sip{w_j}{x_i})^2.
\end{align*}
By the assumption on $\phi$, we know $\phi'(z)\geq \gamma > 0$ for every $t\in \R$.  Now we can use Lemma~\ref{lemma:sample.facts}, which states that $p/C_1 \leq \snorm{x_i}^2 \leq C_1p$ for all $i$.   In particular, we have
\begin{align*}
\f {p}{C_1 } \cdot \gamma^2  &\leq \snorm{x_i}^2 \cdot \f 1 m \summ j m \phi'(\sip{w_j}{x_i})^2  = \snorm{\nabla f(x_i; W)}_F^2 \leq C_1p.
\end{align*}
\end{proof}

\subsection{Proof of Lemma~\ref{lemma:loss.ratio}}\label{ss:loss_ratio_bound}

Let us first restate the lemma.
\lossratio*

Lemma~\ref{lemma:loss.ratio} is a bound on the maximum possible ratio of \emph{sigmoid losses} ($z\mapsto 1/(1+\exp(z))$). We find that the ratio of the sigmoid losses is closely related to the ratio of exponential losses, and that the ratio of exponentials is particularly well-behaved.  
In what follows, we shall show that the ratio of the \textit{exponential} losses is bounded throughout training and that this implies that the ratio of the sigmoid losses is also bounded.  This sets up the following proof roadmap:
\begin{enumerate}
\item We first characterize how the ratio of exponential losses increases from one iteration to the next.
\item We characterize how ratios of exponential losses relate to ratios of sigmoid losses.
\item We show that gradient descent quickly enters a regime where we can essentially treat the exponential and sigmoid losses interchangeably.
\item We then argue inductively to show that the exponential loss ratio (and thus the sigmoid loss ratio) can never be too large. 
\end{enumerate}

The following lemma addresses the first step above.  It provides a bound on the ratio of the exponential losses at time $t+1$ in terms of the ratio of the exponential losses at time $t$ for any two samples $(x_i, y_i)$ and $(x_j, y_j)$.  The lemma shows that if the ratio of the sigmoid losses $\lpit it/\lpit jt$ is large, 
and if the step size is relatively small, then we can show that the ratio of the exponential losses decreases at the following iteration. 

\begin{lemma}\label{lemma:exp.loss.ratio}
On a good run, for $C>1$ sufficiently large, we have for all $i,j\in [n]$ and $t\geq 0$, 
\begin{align*}
\frac{ \exp\big (-y_i f(x_i;\Wt {t+1})\big) }{ \exp\big(-y_j f(x_j;\Wt {t+1}) \big)} &\leq \frac{\exp\big (-y_i f(x_i;\Wt t) \big)}{ \exp\big(-y_j f(x_j;\Wt t)\big)}\\
&\qquad \times \exp \l( - \f{ \lpit jt \alpha \gamma^2 p}{C_1 n} \l( \f{ \lpit it}{\lpit jt} - \f{ C_1^2}{\gamma^{2}}\r) \r) \\
&\qquad \times \exp\l(2 C_1 \alpha \l( \snorm{\mu}^2 + 2 \sqrt{p\log(n/\delta)} \r) \hat G(\Wt t)\r).
\end{align*}

\end{lemma}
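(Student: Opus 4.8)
The plan is to prove the claimed one-step inequality directly, by tracking the scalar quantities $u_i^{(t)} := y_i f(x_i; W^{(t)})$ for each sample. Writing $g_i^{(t)} := g(y_i f(x_i; W^{(t)})) = -\ell'(u_i^{(t)})$, so that $\hat{G}(W^{(t)}) = \tfrac1n\sum_{k=1}^n g_k^{(t)}$, the gradient descent update reads $W^{(t+1)} - W^{(t)} = -\alpha\nabla\hat{L}(W^{(t)}) = \tfrac{\alpha}{n}\sum_{k=1}^n g_k^{(t)}\, y_k \nabla f(x_k;W^{(t)})$. Since $\exp(-u_i^{(t+1)})/\exp(-u_j^{(t+1)})$ equals $\big(\exp(-u_i^{(t)})/\exp(-u_j^{(t)})\big)\exp\big((u_j^{(t+1)}-u_j^{(t)})-(u_i^{(t+1)}-u_i^{(t)})\big)$, it suffices to show $(u_j^{(t+1)}-u_j^{(t)})-(u_i^{(t+1)}-u_i^{(t)})$ is at most $-\tfrac{g_j^{(t)}\alpha\gamma^2 p}{C_1 n}\big(\tfrac{g_i^{(t)}}{g_j^{(t)}}-\tfrac{C_1^2}{\gamma^2}\big) + 2C_1\alpha(\|\mu\|^2+2\sqrt{p\log(n/\delta)})\,\hat{G}(W^{(t)})$, and then to exponentiate (using that $\exp$ is increasing).

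The first step is to control $u_i^{(t+1)}-u_i^{(t)}$ for a single sample. Applying the network smoothness bound of Lemma~\ref{lemma:nn.smooth} with $W = W^{(t+1)}$, $V = W^{(t)}$, $x = x_i$ and multiplying by $y_i$, I would get $u_i^{(t+1)} - u_i^{(t)} = \tfrac{\alpha}{n}\sum_{k=1}^n g_k^{(t)} y_i y_k \langle \nabla f(x_i;W^{(t)}), \nabla f(x_k;W^{(t)})\rangle + E_i$ with $|E_i| \le \tfrac{H\|x_i\|^2}{2\sqrt m}\|W^{(t+1)}-W^{(t)}\|_2^2$. I then isolate the diagonal term $k=i$: by Lemma~\ref{lemma:nn.grad.ip.identity}(b) it lies between $\tfrac{\alpha}{n}g_i^{(t)}\gamma^2 p/C_1$ and $\tfrac{\alpha}{n}g_i^{(t)}C_1 p$; the off-diagonal terms are bounded, via Lemma~\ref{lemma:nn.grad.ip.identity}(a) together with $g_k^{(t)}\ge 0$ and $\tfrac1n\sum_{k\neq i}g_k^{(t)}\le\hat{G}(W^{(t)})$, by $\alpha C_1(\|\mu\|^2+\sqrt{p\log(n/\delta)})\hat{G}(W^{(t)})$ in absolute value. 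For the remainder $E_i$, I bound $\|W^{(t+1)}-W^{(t)}\|_2 \le \|W^{(t+1)}-W^{(t)}\|_F = \alpha\|\nabla\hat{L}(W^{(t)})\|_F \le \alpha\sqrt{C_1 p}\,\hat{G}(W^{(t)})$ using Lemma~\ref{lemma:loss.smooth}, and then $\|x_i\|^2\le C_1 p$ from Event~\eqref{eq:xk.norm} and $\hat{G}(W^{(t)})^2\le\hat{G}(W^{(t)})$ to get $|E_i| \le \tfrac{HC_1^2 p^2\alpha^2}{2\sqrt m}\hat{G}(W^{(t)})$; Assumption~\ref{a:stepsize} forces $\alpha H p^2/\sqrt m \le 1/C$, so for $C$ large this is at most $\alpha C_1\sqrt{p\log(n/\delta)}\hat{G}(W^{(t)})$ (using that $p\log(n/\delta)$ exceeds an absolute constant, which follows from Assumption~\ref{a:dimension}).

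Finally, I combine the lower bound on $u_i^{(t+1)}-u_i^{(t)}$ (keeping the diagonal term $\ge \tfrac{\alpha}{n}g_i^{(t)}\gamma^2 p/C_1$) with the matching upper bound on $u_j^{(t+1)}-u_j^{(t)}$ (diagonal term $\le \tfrac{\alpha}{n}g_j^{(t)}C_1 p$). Subtracting, the two diagonal contributions combine into $\tfrac{\alpha p}{n}\big(C_1 g_j^{(t)} - \tfrac{\gamma^2}{C_1}g_i^{(t)}\big) = -\tfrac{\alpha\gamma^2 p}{C_1 n}g_j^{(t)}\big(\tfrac{g_i^{(t)}}{g_j^{(t)}}-\tfrac{C_1^2}{\gamma^2}\big)$, while the off-diagonal bounds for both indices plus $|E_i|+|E_j|$ combine into at most $2C_1\alpha(\|\mu\|^2+2\sqrt{p\log(n/\delta)})\hat{G}(W^{(t)})$; exponentiating yields the stated inequality. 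I do not expect a conceptual obstacle here, since all the analytic content is carried by Lemmas~\ref{lemma:nn.smooth}, \ref{lemma:loss.smooth}, and~\ref{lemma:nn.grad.ip.identity}, which are already established. The part that needs care is the constant bookkeeping: verifying that the off-diagonal sums over $k\neq i$ are genuinely dominated by the full average $\hat{G}(W^{(t)})$, and that the smoothness remainders $E_i,E_j$ are truly lower order after invoking Assumption~\ref{a:stepsize}, so that the coefficient $C_1^2/\gamma^2$ (rather than something larger) appears in the factored term---this is precisely what later makes the loss-ratio threshold $C_r = 16C_1^2/\gamma^2$ of Lemma~\ref{lemma:loss.ratio} go through.
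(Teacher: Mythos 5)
Your proposal is correct and follows essentially the same route as the paper's own proof: you invoke Lemma~\ref{lemma:nn.smooth} to linearize the per-sample increments, isolate the diagonal contribution and bound it via Lemma~\ref{lemma:nn.grad.ip.identity}(b), bound the off-diagonal cross terms via Lemma~\ref{lemma:nn.grad.ip.identity}(a), and absorb the smoothness remainders using Lemma~\ref{lemma:loss.smooth} together with the step-size assumption. The only cosmetic difference is that you track the exponents $u_i^{(t)}$ additively and exponentiate at the end, whereas the paper manipulates the exponential factors multiplicatively throughout; the bookkeeping of how the smoothness remainder is folded into the $2\sqrt{p\log(n/\delta)}$ term is also handled in a slightly different (but equivalent) order.
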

\begin{proof} 
Without loss of generality, it suffices to consider how the exponential loss ratio of the first sample to the second sample changes.  To this end, let us denote
\[ A_t := \f{ \exp(-y_1 f(x_1; \Wt t))}{\exp(-y_2 f(x_2; \Wt t))}.\]
We now calculate the exponential loss ratio between two samples at time $t+1$ in terms of the exponential loss ratio at time $t$.   Recalling the notation $\lpit it := g(y_i f(x_i; \Wt t))$, we can calculate, 
\begin{align*}
    A_{t+1} &= \frac{\exp(-y_1f(x_1; W^{(t+1)}))}{\exp(-y_2f(x_2; W^{(t+1)}))} \\
    &= \frac{\exp\left(-y_1f_1\left(W^{(t)}-\alpha \nabla \hat{L}(W^{(t)})\right)\right)}{\exp\left(-y_2f_2\left(W^{(t)}-\alpha \nabla \hat{L}(W^{(t)})\right)\right)} \\
    &\overset{(i)}\le  \frac{\exp\left(-y_1f\left(x_1; W^{(t)}\right)+y_1\alpha  \left\langle \nabla f(x_1; W^{(t)}),\nabla \hat{L}(W^{(t)})\right\rangle \right)}{\exp\left(-y_2f\left(x_2; W^{(t)}\right)+y_2\alpha \left\langle \nabla f(x_2; W^{(t)}),\nabla \hat{L}(W^{(t)})\right\rangle \right)} \exp\left(\frac{ H C_1 p \alpha^2}{\sqrt{m}}\lVert\nabla \hat{L}(W^{(t)})\rVert^2\right) \\
    &\overset{(ii)}=  A_t\cdot\frac{\exp\left(y_1\alpha  \left\langle \nabla f(x_1; W^{(t)}),\nabla \hat{L}(W^{(t)})\right\rangle \right)}{\exp\left(y_2\alpha \left\langle \nabla f(x_2; W^{(t)}),\nabla \hat{L}(W^{(t)})\right\rangle \right)} \exp\left(\frac{ H C_1 p\alpha^2}{\sqrt{m}}\lVert\nabla \hat{L}(W^{(t)})\rVert^2\right) \\
    &= A_t\cdot\frac{\exp\left( -\frac{\alpha \ }{n}\sum_{k=1}^n y_1 y_k \lpit kt \sip{\nabla f(x_1; \Wt t)}{\nabla f(x_k; \Wt t)} \right)}{\exp\left( -\frac{ \alpha  }{n}\sum_{k=1}^n y_2 y_k\lpit kt \sip{\nabla f(x_2; \Wt t)}{\nabla f(x_k; \Wt t)} \right)} \exp\left(\frac{ H C_1 p\alpha^2}{\sqrt{m}}\lVert\nabla \hat{L}(W^{(t)})\rVert^2\right) \\
    &= A_t\cdot\exp\left(-\frac{ \alpha}{n}\left(\lpit 1t \snorm{\nabla f(x_1; \Wt t)}_F^2 - \lpit 2t \snorm{\nabla f(x_2; \Wt t)}_F^2  \right)\right)   \\
    &\qquad \times \frac{\exp\left( -\frac{ \alpha }{n}\sum_{k>1} y_1 y_k \lpit kt\sip{\nabla f(x_1; \Wt t)}{\nabla f(x_k; \Wt t)} \right)}{\exp\left( -\frac{ \alpha }{n}\sum_{k\neq 2} y_2 y_k \lpit kt\sip{\nabla f(x_2; \Wt t)}{\nabla f(x_k; \Wt t)} \rangle \right)}  \\
    &\qquad \quad \times \exp\left(\frac{ H C_1 p\alpha^2}{\sqrt{m}}\lVert\nabla \hat{L}(W^{(t)})\rVert^2\right).\numberthis \label{eq:loss.ratio.init.equality}
    \end{align*} 
Inequality $(i)$ uses Lemma~\ref{lemma:nn.smooth} and Event~\eqref{eq:xk.norm} which ensures that $\lv x_i \rv^2\le C_1 p$, and $(ii)$ uses that $A_t$ is the ratio of the exponential losses.  We now proceed to bound each of the three terms in the product separately.  For the first term, by Part~(b) of Lemma~\ref{lemma:nn.grad.ip.identity}, we have for any $i\in [n]$,
\begin{equation} 
\f{\gamma^2 p }{C_1}  \leq \snorm{\nabla f(x_i; \Wt t)}_F^2 \leq C_1p.
\label{eq:xi.relu.nac.consequence}
\end{equation}
Therefore, we have
\begin{align*}
&\exp\left(-\frac{ \alpha}{n}\left(\lpit 1t \snorm{\nabla f(x_1; \Wt t)}_F^2 - \lpit 2t \snorm{\nabla f(x_2; \Wt t)}_F^2  \right)\right) \\
&\qquad = \exp\left(-\frac{ \lpit 2t \alpha}{n}\left(\f{ \lpit 1t}{\lpit 2t} \snorm{\nabla f(x_1; \Wt t)}_F^2 - \snorm{\nabla f(x_2; \Wt t)}_F^2 \right)\right)\\
&\qquad \overset{(i)}\leq \exp\left(-\frac{ \lpit 2t \alpha}{n}\left(\f{ \lpit 1t}{\lpit 2t} \cdot \f{\gamma^2 p}{C_1}  -  C_1p \right)\right)\\
&\qquad = \exp\l(-\frac{ \lpit 2t \alpha \gamma^2 p  }{C_1  n}\l (\f{ \lpit 1t}{\lpit 2t} - \frac{C_1^2}{\gamma^2}  \r)\r).\numberthis \label{eq:loss.ratio.firstterm}
\end{align*}
Inequality $(i)$ uses~\eqref{eq:xi.relu.nac.consequence}.  This bounds the first term in~\eqref{eq:loss.ratio.init.equality}.  

For the second term, we again use Lemma~\ref{lemma:nn.grad.ip.identity}: we have for any $i \neq k$,
\begin{equation} |\sip{\nabla f(x_i; W)}{\nabla f(x_k; W)}| \leq C_1 \l( \snorm{\mu}^2 + \sqrt{p \log(n/\delta)} \r).\label{eq:xi.relu.nac.consequence.2}
\end{equation}
This allows for us to bound,
\begin{align*}
& \frac{\exp\left( -\frac{ \alpha }{n}\sum_{k>1} y_1 y_k\lpit kt\sip{\nabla f(x_1; \Wt t)}{\nabla f(x_k; \Wt t)} \right)}{\exp\left( -\frac{ \alpha }{n}\sum_{k\neq 2} y_2 y_k \lpit kt\sip{\nabla f(x_2; \Wt t)}{\nabla f(x_k; \Wt t)} \rangle \right)}\\
&\overset{(i)}\leq \exp\l( \f \alpha n \sum_{k \neq 1} \lpit kt |\sip{\nabla f(x_1; \Wt t)}{\nabla f(x_k; \Wt t)}| + \f \alpha n \sum_{k\neq 2} \lpit kt |\sip{\nabla f(x_2; \Wt t)}{\nabla f(x_k; \Wt t)}|  \r) \\
&\overset{(ii)}\leq \exp\l( \f \alpha n \sum_{k \neq 1} \lpit kt \cdot C_1 \l( \snorm{\mu}^2 + \sqrt{p\log(n/\delta)}\r) + \f \alpha n \sum_{k\neq 2} \lpit kt \cdot C_1 \l( \snorm{\mu}^2 + \sqrt{p\log(n/\delta)}\r)  \r) \\
&\overset{(iii)}\leq \exp\l( 2 \f \alpha n \summ k n  \lpit kt \cdot C_1 \l( \snorm{\mu}^2 + \sqrt{p\log(n/\delta)}\r) \r)\\
&= \exp\l( 2C_1\alpha \l(\snorm{\mu}^2 + \sqrt{p \log(n/\delta)}\r) \hat G(\Wt t)\r).\numberthis \label{eq:loss.ratio.secondterm}
\end{align*}
Inequality $(i)$ uses the triangle inequality.  Inequality $(ii)$ uses that $\lpit kt\geq 0$ for all $k\in [n]$ and eq.~\eqref{eq:xi.relu.nac.consequence.2}.  Inequality $(iii)$ again uses that $\lpit kt\geq 0$.  

Finally, for the third term of~\eqref{eq:loss.ratio.init.equality}, we have
\begin{align*}
\exp\left(\frac{ H C_1 p\alpha^2}{\sqrt{m}}\lVert\nabla \hat{L}(W^{(t)})\rVert^2\right) \overset{(i)}\leq \exp \l( \f{ H C_1^2 p^2 \alpha^2}{\sqrt m} \hat G(\Wt t)\r) \overset{(ii)}\leq \exp\l( \alpha \sqrt{p} \hat G(\Wt t)\r).\numberthis \label{eq:loss.ratio.thirdterm}
\end{align*}
Inequality $(i)$ uses Lemma~\ref{lemma:loss.smooth}, while $(ii)$ uses that for $C>1$ sufficiently large, by Assumption~\ref{a:stepsize} we have $H C_1^2 p^2 \alpha/\sqrt m \leq \sqrt p$. Putting~\eqref{eq:loss.ratio.firstterm},~\eqref{eq:loss.ratio.secondterm} and~\eqref{eq:loss.ratio.thirdterm} into~\eqref{eq:loss.ratio.init.equality}, we get
\begin{align*}
A_{t+1} &\leq A_t \cdot \exp\l(-\frac{ \lpit 2t \alpha \gamma^2 p}{C_1  n}\l (\f{ \lpit 1t}{\lpit 2t} - \frac{C_1^2}{\gamma^2}  \r)\r)   \\
&\qquad \quad \times \exp\l( 2C_1\alpha \l(\snorm{\mu}^2 + \sqrt{p \log(n/\delta)}\r) \hat G(\Wt t)\r)\cdot \exp\l( \alpha \sqrt{p} \hat G(\Wt t)\r)\\
&\le A_t \cdot  \exp\l(-\frac{ \lpit 2t \alpha \gamma^2 p}{C_1  n}\l (\f{ \lpit 1t}{\lpit 2t} - \frac{C_1^2}{ \gamma^{2}} \r)\r) \\ &\qquad \quad \times  \exp\l( 2C_1\alpha \l(\snorm{\mu}^2 + 2\sqrt{p \log(n/\delta)}\r) \hat G(\Wt t)\r).\numberthis\label{eq:loss.ratio.intermediate}
\end{align*}  
This completes the proof.
\end{proof}

Lemma~\ref{lemma:exp.loss.ratio} shows at a high-level that the ratio of the exponential losses can decrease if the ratio of the sigmoid losses is large and the step-size is small.  We therefore need to characterize how the ratio of the exponential losses relates to the ratio of the sigmoid losses.  We do so in the following fact.

\begin{fact}\label{fact:logistic.loss.ratio}
For any $z_1, z_2\in \R$, 
\[ \f{g(z_1)}{g(z_2)} \leq \max\l(2,2  \f{ \exp(-z_1)}{\exp(-z_2)}\r),\]
and if $z_1, z_2 > 0$, then we also have
\[ \f{ \exp(-z_1)}{\exp(-z_2)} \leq 2 \f{ g(z_1)}{g(z_2)}.\]
\end{fact}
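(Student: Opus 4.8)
The plan is to derive both inequalities from a single elementary two-sided estimate on the sigmoid loss $g(z) = 1/(1+e^z)$, namely
\[ \tfrac{1}{2}\min\big(1,\,e^{-z}\big) \;\le\; g(z) \;\le\; \min\big(1,\,e^{-z}\big), \qquad \forall z\in\R. \]
This in turn follows from the sandwich $\max(1,e^z) \le 1+e^z \le 2\max(1,e^z)$: the left inequality is trivial, and the right one splits into the case $z\ge 0$ (where it reads $1\le e^z$) and the case $z<0$ (where it reads $e^z\le 1$); taking reciprocals gives the displayed bound on $g$.

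For the first claimed inequality I would divide the upper bound on $g(z_1)$ by the lower bound on $g(z_2)$ to obtain $g(z_1)/g(z_2) \le 2\,\min(1,e^{-z_1})/\min(1,e^{-z_2})$, and then do a short case analysis on the sign of $z_2$. If $z_2\ge 0$, then $\min(1,e^{-z_2}) = e^{-z_2}$ while $\min(1,e^{-z_1})\le e^{-z_1}$, so the ratio is at most $2\,e^{-z_1}/e^{-z_2}$. If $z_2<0$, then $\min(1,e^{-z_2}) = 1$ while $\min(1,e^{-z_1})\le 1$, so the ratio is at most $2$. In both cases the ratio is at most $\max\!\big(2,\,2\,e^{-z_1}/e^{-z_2}\big)$, as desired. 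For the second claimed inequality, I would use that the lower bound gives $g(z_1) \ge \tfrac{1}{2}e^{-z_1}$ once $z_1>0$, together with the upper bound $g(z_2)\le e^{-z_2}$ which holds for every $z_2\in\R$; dividing these yields $e^{-z_1}/e^{-z_2} \le 2\,g(z_1)/g(z_2)$. (Note only $z_1>0$ is actually needed here; the hypothesis $z_2>0$ in the statement is harmless.)

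Since the whole argument collapses to these one-line comparisons, there is essentially no obstacle. The only points deserving a little care are keeping track of the $\min(1,\cdot)$ term through the case split for the first inequality, and noticing that the exponential-to-sigmoid comparison in the second inequality genuinely requires positivity of the relevant argument — which is precisely why, when Lemma~\ref{lemma:loss.ratio} is later combined with this fact, one must first argue that gradient descent reaches the regime where $y_i f(x_i;\Wt t)>0$ for all $i$.
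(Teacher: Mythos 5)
Your proof is correct and rests on the same core estimates as the paper's (namely $\tfrac12 e^{-z}\le g(z)\le e^{-z}$ for $z\ge0$, and $g(z)\in[\tfrac12,1]$ for $z\le0$); you merely package them as the single two-sided bound $\tfrac12\min(1,e^{-z})\le g(z)\le\min(1,e^{-z})$, which lets you reduce the paper's four-way sign case analysis to a single case split on $\sgn(z_2)$. Your observation that the second inequality only needs $z_1>0$ is also correct and a small sharpening of the stated hypothesis.
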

\begin{proof}
By definition, $g(z) = -\ell'(z) = 1/(1+\exp(z))$.  Note that $g$ is strictly decreasing, non-negative, and bounded from above by one.  Further, one has the inequalities
\[ \f 12 \exp(-z) \leq g(z) \leq \exp(-z) \quad \text{for all } z \geq 0.\]
We do a case-by-case analysis on the signs of the $z_i$.  
\begin{itemize} 
\item If $z_1\leq 0$ and $z_2\leq 0$, then since $g(z_1)\leq 1$ and $g(z_2) \geq 1/2$, it holds that $g(z_1)/g(z_2) \leq 2$.

\item If $z_1, z_2 \geq 0$, then since $\nicefrac 12 \exp(-z)\leq g(z)\leq \exp(-z)$ we have $g(z_1)/g(z_2)\leq 2 \exp(-z_1)/\exp(-z_2)$.  Similarly, we have $\exp(-z_1)/\exp(-z_2)\leq 2g(z_1)/g(z_2)$.

\item If $z_1\geq 0$ and $z_2\leq 0$, then $g(z_1)/g(z_2) \leq 2$.

\item If $z_1 \leq 0$ and $z_2 \geq 0$, then $g(z_1) / g(z_2) \leq 2/\exp(-z_2) \leq 2 \exp(-z_1)/\exp(-z_2)$.  
\end{itemize} 
This proves the fact. 
\end{proof}

We now begin to prove Lemma~\ref{lemma:loss.ratio}.  We will prove the lemma in a sequence of three steps: 
\begin{enumerate}
    \item First, we will show that a loss ratio bound holds at times $t=0$ and $t=1$.
    \item We shall then show that at time $t=1$, the neural network correctly classifies all training points.
    \item In the final step, we will argue inductively that at every time $t\geq 1$:
    \begin{enumerate}
    \item the network correctly classifies the training points (so that by Fact~\ref{fact:logistic.loss.ratio} the ratio of the losses is approximately the same under both the sigmoid and exponential loss);
    \item the exponential loss can never be too large, since if it is too large then there will be a large ratio of the sigmoid losses $\lpit it/\lpit jt$ which will cause the exponential loss ratio to decrease as a consequence of Lemma~\ref{lemma:exp.loss.ratio}.
\end{enumerate}
\end{enumerate}

To this end, we introduce a final auxiliary lemma that will allow for us to make the argument outlined above. The lemma consists of three parts.   First, that there is a small ratio of the losses at initialization, which addresses the $t=0$ case of Lemma~\ref{lemma:loss.ratio}.  Second, that a sigmoid loss ratio bound implies an increase in the unnormalized margin for all training examples.  Thus, if at any time the network correctly classifies a training point, the network will continue to correctly classify the training point for all subsequent times.  This will allow for us to treat the exponential loss ratio and sigmoid loss ratio equivalently for all times after the first time that we correctly classify all of the training data by Fact~\ref{fact:logistic.loss.ratio}. Lastly, the network correctly classifies all training points after the first step of gradient descent, so that we can use the near-equivalence of the sigmoid and exponential loss ratios after the first step of gradient descent.

\begin{lemma}\label{lemma:nn.interpolates.time1}
\begin{enumerate} 
On a good run, provided $C>1$ is sufficiently large, the following hold.
\item [(a)] An exponential loss ratio bound holds at initialization:
\[ \max_{i,j} \f{ \exp(-y_i f(x_i;\Wt 0))}{\exp(-y_j f(x_j;\Wt 0))} \leq \exp(2).\]
\item [(b)] If there is an absolute constant $C_r>1$ such that if at time $t$ we have $\max_{i,j} \{ \lpit it / \lpit jt \} \leq C_r$, 
then
\[ \text{for all $k\in [n]$},\qquad y_k[f(x_k;\Wt {t+1}) - f(x_k;\Wt t)] \geq \f{ \alpha \gamma^2 p}{4 C_1 C_r n} \hat G(\Wt t),\]
where $C_1$ is the constant from Lemma~\ref{lemma:sample.facts}.
\item [(c)] At time $t=1$ and for all samples $k\in [n]$, we have $y_k f(x_k;\Wt t) > 0$. 
\end{enumerate}

\end{lemma}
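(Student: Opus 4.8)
The plan is to prove the three parts in the stated order, using (a) and (b) to bootstrap (c). For part (a), I would first bound the network output at initialization uniformly over the training set: since $\phi(0)=0$ and $\phi$ is $1$-Lipschitz and the second-layer weights satisfy $\snorm{a}=1$,
\[ |f(x_i;\Wt 0)| \le \sum_{j=1}^m |a_j|\,|\sip{w_j^{(0)}}{x_i}| \le \snorm{\Wt 0 x_i} \le \snorm{\Wt 0}_2\,\snorm{x_i}. \]
Plugging in $\snorm{\Wt 0}_2 \le C_0\sinit(\sqrt m+\sqrt p)$ from Lemma~\ref{lemma:initialization.norm}, $\snorm{x_i}^2 \le C_1 p$ from Event~\eqref{eq:xk.norm}, and Assumptions~\ref{a:sinit} and~\ref{a:stepsize} (which give $\sinit\sqrt{mp}\le\alpha$ and $\sinit p = \sinit\sqrt{mp}\sqrt{p/m}\le \alpha\sqrt p \le C^{-1}p^{-3/2}$, using $m\ge1$), one gets $|f(x_i;\Wt 0)| \le 2C_0\sqrt{C_1}\,\alpha\sqrt p \le 1$ for $C$ large. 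Hence $|y_if(x_i;\Wt 0) - y_jf(x_j;\Wt 0)| \le 2$ for all $i,j$, which is exactly the claimed exponential-loss-ratio bound $\exp(2)$; combined with Fact~\ref{fact:logistic.loss.ratio} this also gives $\max_{i,j}\lpit i0/\lpit j0 \le 2e^2 \le C_r$ (recall $C_r=16C_1^2/\gamma^2\ge 16$), i.e. the $t=0$ case of Lemma~\ref{lemma:loss.ratio}.

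For part (b), I would apply Lemma~\ref{lemma:margin.test.sample} with the ``test point'' taken to be a training point $(x,y)=(x_k,y_k)$, and split the resulting sum over $i$ into the diagonal term $i=k$ and the rest. The diagonal term is bounded below using $\xi_k\ge\gamma^2$ and $\snorm{x_k}^2\ge p/C_1$, contributing $\gtrsim \tfrac{\alpha}{n}\lpit kt\cdot\tfrac{\gamma^2 p}{C_1}$; the off-diagonal terms are bounded below with $\xi_i\le1$ and $|\sip{x_i}{x_k}|\le C_1(\snorm{\mu}^2+\sqrt{p\log(n/\delta)})$ from Event~\eqref{eq:xi.xk.innerproduct}; and the quadratic correction $\tfrac{HC_1p\snorm{x_k}^2\alpha}{2\sqrt m}$ is absorbed via $Hp^2\alpha/\sqrt m\le C^{-1}$ (Assumption~\ref{a:stepsize}). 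Using the loss-ratio hypothesis as $\lpit kt\ge \hat G(\Wt t)/C_r$ together with $\sum_{i\ne k}\lpit it\le n\hat G(\Wt t)$, this collapses to
\[ y_k[f(x_k;\Wt {t+1})-f(x_k;\Wt t)] \ge \frac{\alpha\,\hat G(\Wt t)}{n}\left[\frac{\gamma^2 p}{C_1 C_r} - 2n\left(C_1\snorm{\mu}^2 + C_1\sqrt{p\log(n/\delta)} + \frac{C_1^2}{C}\right)\right]. \]
The crux — and the step I expect to be the main obstacle — is showing the bracket is at least $\tfrac{\gamma^2 p}{2C_1 C_r}$, i.e. absorbing the factor $n$ multiplying the error term. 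For this I would invoke Assumption~\ref{a:dimension}: $n\snorm{\mu}^2\le p/C$, $n\sqrt{p\log(n/\delta)}=\sqrt p\cdot\sqrt{n^2\log(n/\delta)}\le p/\sqrt C$, and $n\le\sqrt{p/C}$, so the error term is $O(p/\sqrt C)\le \tfrac{\gamma^2 p}{2C_1C_r}$ for $C$ large. This gives $y_k[f(x_k;\Wt {t+1})-f(x_k;\Wt t)]\ge \tfrac{\alpha\gamma^2 p}{2C_1C_r n}\hat G(\Wt t)$, stronger than claimed.

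For part (c), I would chain (a) and (b) at $t=0$: by (a), $\max_{i,j}\lpit i0/\lpit j0\le C_r$, so part (b) applies at $t=0$ and gives $y_k[f(x_k;\Wt 1)-f(x_k;\Wt 0)]\ge \tfrac{\alpha\gamma^2 p}{4C_1C_r n}\hat G(\Wt 0)$; since $|f(x_i;\Wt 0)|\le1$ forces $\hat G(\Wt 0)=\tfrac1n\sum_{i=1}^n g(y_if(x_i;\Wt 0))\ge g(1)\ge1/4$, the increase is at least $\tfrac{\alpha\gamma^2 p}{16C_1C_r n}$. Comparing with $|f(x_k;\Wt 0)|\le 2C_0\sqrt{C_1}\,\alpha\sqrt p$ from (a), we obtain $y_kf(x_k;\Wt 1)\ge \tfrac{\alpha\gamma^2 p}{16C_1C_r n} - 2C_0\sqrt{C_1}\,\alpha\sqrt p$, which is strictly positive once $\sqrt p/n\ge\sqrt C$ exceeds an absolute constant depending only on $\gamma,C_0,C_1$ — again guaranteed by Assumption~\ref{a:dimension} for $C$ large. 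Apart from the cancellation of the $n$ factor in part (b), everything reduces to bookkeeping with Lemmas~\ref{lemma:initialization.norm}, \ref{lemma:sample.facts}, \ref{lemma:margin.test.sample} and Fact~\ref{fact:logistic.loss.ratio}.
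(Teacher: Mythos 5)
Your proof is correct and follows essentially the same route as the paper's: bound $|f(x_i;\Wt 0)|$ by $1$ via $\snorm{\Wt 0}_2\snorm{x_i}$ and Assumptions~\ref{a:sinit},~\ref{a:stepsize} for (a); apply Lemma~\ref{lemma:margin.test.sample} at a training point, split off the diagonal $i=k$ term, use $\xi_i\in[\gamma^2,1]$, the loss-ratio hypothesis, and Assumptions~\ref{a:dimension},~\ref{a:stepsize} to absorb the off-diagonal and quadratic corrections for (b); and chain (a), $\hat G(\Wt 0)\ge 1/4$, and (b) with Assumption~\ref{a:dimension} for (c). The only cosmetic difference is in (b), where you bound the off-diagonal contribution via $\sum_{i\neq k}\lpit it\le n\hat G(\Wt t)$ while the paper first factors out $\lpit kt$ and uses $\max_j\lpit jt\le C_r\lpit kt$ before converting to $\hat G$; these are equivalent after applying $\lpit kt\ge\hat G(\Wt t)/C_r$.
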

\begin{proof}
We shall prove the lemma in parts.

\paragraph{Part (a): loss ratio at initialization.} 
Since $\phi$ is 1-Lipschitz and $\phi(0)=0$, we have by Cauchy--Schwarz,
\[ |f(x; W)| = \l|\summ j m a_j \phi(\sip{w_j}{x}) \r| \leq \sqrt{\summ j m a_j^2} \sqrt{\summ jm \sip{w_j}{x}^2} = \snorm{Wx}_2.\]
Since a good run occurs, all the events in Lemma~\ref{lemma:sample.facts} and Lemma~\ref{lemma:initialization.norm} hold.  In particular, we have $\snorm{\Wt 0}_2\leq C_0\sinit(\sqrt m + \sqrt p)$ and $\snorm{x_i}\leq \sqrt{C_1p}$ for all $i \in [n]$.  We therefore have the bound,
\[ 2 C_0 \sinit \sqrt{C_1 p}(\sqrt m + \sqrt p) \overset{(i)}\leq \f{ 2 C_0 \sqrt{C_1} \alpha\sqrt p (\sqrt m + \sqrt p)}{\sqrt {mp}}  \overset{(ii)}\leq \f{2 C_0 \sqrt{C_1}}{Cp^2} \l( 1 + \sqrt{\f p m }\r) \overset{(iii)}\leq 1,\]
Inequality $(i)$ uses that $\alpha \geq \sinit \sqrt{mp}$ by Assumption~\ref{a:sinit}, inequality $(ii)$ uses that the step-size is small enough by Assumption~\ref{a:stepsize}, and the final inequality $(iii)$ follows by taking $C>1$ large enough.  
We thus have for all $i\in [n]$,
\begin{equation} \label{eq:network.output.bounded.by.one}
|f(x_i; \Wt 0)| \leq \snorm{\Wt 0}_2 \snorm{x_i} \leq 2C_0 \sinit \sqrt {C_1p}(\sqrt m + \sqrt p)\leq 1.
\end{equation}
Thus, 
\begin{equation} \label{eq:exp.loss.ratio.basecase}
\max_{i,j=1,\dots, n} \f{ \exp(- y_i f(x_i;\Wt 0))}{\exp(-y_j f(x_j; \Wt 0))} \leq \exp(2).
\end{equation}

\paragraph{Part (b): margin increase on training points with loss ratio.}

Fix $k\in [n]$.  We now apply Lemma~\ref{lemma:margin.test.sample} to training samples $(x_k, y_k)$: since $\snorm{x_k}^2\leq C_1 p$ for each $k$ on a good run by Lemma~\ref{lemma:sample.facts}, there exist some $\xi_i = \xi(\Wt t, x_i, x_k)\in [\gamma^2, 1]$ such that
\begin{align*}
&y_k[f(x_k;\Wt {t+1}) - f(x_k;\Wt t)] \\
&\quad \overset{(i)}\geq \f \alpha  n \l[ \summ i n \lpit it \xi_i \sip{y_i x_i}{y_k x_k} - \f{H C_1^2 p^2 \alpha}{2\sqrt m} \hat G(\Wt t)\r]\\
&\quad= \f \alpha n \l[ \lpit kt  \xi_k \snorm{x_k}^2  + \sum_{i\neq k} \lpit it \xi_i \sip{y_i x_i}{y_k x_k} - \f{ H C_1^2 p^2 \alpha}{2 \sqrt m} \hat G(\Wt t)  \r]\\
&\quad \overset{(ii)}\geq \f \alpha n \l[ \lpit kt  \gamma^2 \snorm{x_k}^2  - \max_j \lpit jt \sum_{i\neq k} |\sip{ x_i}{x_k}| - \f{ H C_1^2 p^2 \alpha}{2 \sqrt m} \hat G(\Wt t)  \r] \\
&\quad \overset{(iii)}\geq \f \alpha n \l[ \lpit kt \l( \f{\gamma^2 p}{C_1 } - \f{\max_j \lpit jt}{\lpit kt}\cdot  C_1 n(\snorm{\mu}^2 + \sqrt{p\log(n/\delta)}) \r) - \f{ H C_1^2 p^2 \alpha}{2 \sqrt m} \hat G(\Wt t) \r], 
 \end{align*}
where Inequality $(i)$ uses Lemma~\ref{lemma:margin.test.sample} and that $(1/n)\summ i n \lpit it =\hat G(\Wt t)$. Inequality $(ii)$ uses that $\xi_i \in [\gamma^2, 1]$, while inequality $(iii)$ uses Lemma~\ref{lemma:sample.facts} which gives the bounds $\lv x_k \rv^2 \ge p/C_1$ and $|\langle x_i,x_k\rangle|\le C_1( \lv \mu\rv^2 + \sqrt{p \log(n/\delta)})$.  Continuing we get that
\begin{align*}
&y_k[f(x_k;\Wt {t+1}) - f(x_k;\Wt t)] \\
&\quad \overset{(i)}\geq \f \alpha n \l[ \lpit kt \l( \f{\gamma^2 p}{C_1 } - C_r C_1 n(\snorm{\mu}^2 + \sqrt{p\log(n/\delta)}) \r) - \f{ H C_1^2 p^2 \alpha}{2 \sqrt m} \hat G(\Wt t) \r]\\
&\quad \overset{(ii)}\geq \f \alpha n \l[ \f{ \gamma^2 p}{2 C_1} \lpit kt - \f{ H C_1^2 p^2 \alpha}{2 \sqrt m} \hat G(\Wt t) \r] \\
&\quad \overset{(iii)}\geq \f{ \alpha}{n} \l[ \f{ \gamma^2 p}{2 C_1 C_r } \hat G(\Wt t) -  \f{ H C_1^2 p^2 \alpha}{2 \sqrt m} \hat G(\Wt t) \r]\\
&\quad \overset{(iv)}> \f{ \alpha \gamma^2 p}{4 C_1 C_rn} \hat G(\Wt t). \label{eq:margin.increase.train.points.timet}
\end{align*}
Inequality $(i)$ uses the lemma's assumption that $\max_{i,j} \{\lpit it / \lpit jt\}\leq C_r$.  
Inequality $(ii)$ uses Assumption~\ref{a:dimension} so that $p \gg n \snorm{\mu}^2 \vee \sqrt{p \log(n/\delta)}$.  Inequality $(iii)$ again uses the lemma's assumption of a sigmoid loss ratio bound, so that
\[ \lpit kt = \frac 1 n \summ i n \f{ \lpit it}{\lpit kt} \lpit kt \geq \f{1}{C_r} \frac{1}{n} \summ i n \lpit it = \f{1}{C_r} \hat G(\Wt t).\]
The final inequality $(iv)$ follows since the step-size $\alpha$ is small enough by Assumption~\ref{a:stepsize}.  We therefore have shown that the unnormalized margin increases as claimed in part (b) of this lemma. 

\paragraph{Part (c): margin at time $t=1$.} Note that by~\eqref{eq:network.output.bounded.by.one}, $|f(x_k;\Wt 0)| \leq 1$ so that $$\lpit k0  = \frac{1}{1+\exp(f(x_k;\Wt 0))}\geq 1/(1+e)\geq 1/4,$$ and in particular we have
\begin{equation} \hat G(\Wt 0) \geq \f{1}{4},\qquad \text{and}\qquad \max_{i,j} \f{ \lpit i0}{\lpit j0} \leq 4.\label{eq:time0.intermediate}
\end{equation}
We thus have,
\begin{align*}
y_k f(x_k;\Wt 1) &= y_k f(x_k;\Wt 1) - y_k f(x_k; \Wt 0) + f(x_k;\Wt 0) \\
&\geq y_k f(x_k;\Wt 1) - y_k f(x_k; \Wt 0) - |f(x_k;\Wt 0)| \\
&\overset{(i)}\geq \frac{ \gamma^2 \alpha p}{16 C_1 n} - 2 C_0 \sinit \sqrt{C_1 p} (\sqrt m + \sqrt p)\\
&\overset{(ii)}\geq \frac{ \gamma^2 \sinit \sqrt{m} p^{3/2}}{16 C_1 n } - 2 C_0 \sinit \sqrt{C_1 p}(\sqrt m + \sqrt p) \\
&= \frac{ \gamma^2 \sinit \sqrt{m} p^{3/2}}{16 C_1 n } \l[ 1 - \f{ 32 C_0 C_1^{3/2} \gamma^{-2}  n (\sqrt m + \sqrt p)}{ \sqrt m p^{3/2} } \r] \\
&\overset{(iii)}\geq \f{ \gamma^2 \sinit \sqrt{m} p^{3/2}}{32 C_1 n}.
\end{align*}
The first term in inequality $(i)$ uses the lower bound provided in part (b) of this lemma as well as~\eqref{eq:time0.intermediate}, while the second term uses the upper bound on $|f(x_k;\Wt 0)|$ in~\eqref{eq:network.output.bounded.by.one}.  Inequality $(ii)$ uses Assumption~\ref{a:sinit} so that $\alpha \geq \sinit \sqrt{mp}$.  The final inequality $(iii)$ uses Assumption~\ref{a:dimension} so that $p\gg n^2$.  
\end{proof}

We now proceed with the proof of the loss ratio bound.  
\begin{proof}[Proof of Lemma~\ref{lemma:loss.ratio}]
In order to show that the ratio of the sigmoid losses $g(\cdot)$ is bounded, it suffices to show that the ratio of exponential losses $\exp(-(\cdot))$ is bounded, since by Fact~\ref{fact:logistic.loss.ratio},
\begin{equation}\label{eq:loss.ratio.logistic.exponential}
\max_{i,j=1,\dots, n} \f{g(y_i f(x_i; \Wt t))}{g(y_j f(x_j; \Wt t))} \leq \max\l( 2, 2 \cdot \max_{i,j=1,\dots, n} \f{\exp(-y_i f(x_i; \Wt t))}{\exp(-y_j f(x_j; \Wt t))}\r).
\end{equation}

Thus in the remainder of the proof we will show that the ratio of the exponential losses is bounded by an absolute constant.  Note that we have already shown in Lemma~\ref{lemma:nn.interpolates.time1} that the exponential loss ratio is bounded at initialization, thus we only need to show the result for times $t\geq 1$.

We now claim by induction that 
\begin{equation} \label{eq:exp.loss.ratio.induction}
\text{for all $t\geq 1$,}\qquad \max_{i,j=1,\dots, n} \f{ \exp(- y_i f(x_i;\Wt t))}{\exp(-y_j f(x_j; \Wt t))} \leq \frac{8 C_1^2}{\gamma^2} .
\end{equation}

Without loss of generality, it suffices to consider how the exponential loss ratio of the first sample to the second sample changes.  To this end, let us denote
\[ A_t := \f{ \exp(-y_1 f(x_1; \Wt t))}{\exp(-y_2 f(x_2; \Wt t))}.\]

\paragraph{Base case $t=1$.}  Intuitively, the base case holds because the loss ratio is small at initialization (due to Lemma~\ref{lemma:nn.interpolates.time1}) and the step-size is small by Assumption~\ref{a:stepsize}, so the loss ratio cannot increase too much after one additional step.  More formally, by Lemma~\ref{lemma:exp.loss.ratio} we have,
\begin{align*}
A_{1} &\leq A_0 \exp\left(-\frac{ \lpit 20 \alpha \gamma^2 p}{C_1 n}\left(\f{\lpit 10}{\lpit 20}  -\frac{C_1^2}{\gamma^{2}}   \right)\right)
    \exp\left(2C_1    \alpha \left(\lVert \mu\rVert^2+2 \sqrt{p \log(n/\delta)}\right) \hat G (W^{(0)}) \right)\\
&\overset{(i)}\leq A_0 \exp\left(\frac{ \lpit 20 C_1 \alpha p}{n} \right)
    \exp\left(2C_1    \alpha \left(\lVert \mu\rVert^2+ 2 \sqrt{p \log(n/\delta)} \right) \hat G (W^{(0)}) \right)\\
&\overset{(ii)} \leq  A_0 \exp\left(\frac{ C_1 \alpha p}{n} 
+ 2C_1  \alpha \left(\lVert \mu\rVert^2+ 2 \sqrt{p \log(n/\delta)} \right)\right)\\
&\overset{(iii)} \leq \exp(2) \cdot \exp\left(0.1\right)\leq 9.
\end{align*}
Inequalities~$(i)$ and $(ii)$ both use that $0\leq \lpit i0\leq 1$.  The first term in Inequality~$(iii)$ uses the upper bound on the exponential loss ratio at time $0$ given in Lemma~\ref{lemma:nn.interpolates.time1}, $A_0 \le \exp(2)$.  The second term in inequality $(iii)$ uses the assumption on the step size~\ref{a:stepsize} and that $p\gg n\snorm{\mu}^2$ by Assumption~\ref{a:dimension}.  This shows that the exponential loss ratio at time $1$ is at most $9$, which is at most $8C_1^2/\gamma^2$ for $C_1>2$ and $\gamma\leq 1$.   This completes the base case.

\paragraph{Induction step.} We now return to the induction step and assume the induction hypothesis holds for every time $\tau=1, \dots, t$, $A_\tau \le 8 C_1^2/\gamma^2$. Our task is to show is to show that the hypothesis holds at time $t+1$, that is, to show $A_{t+1}\leq 8 C_1^2/\gamma^2$. 

First, by Lemma~\ref{lemma:nn.interpolates.time1}, we know that for every time from time $\tau=1,\dots, t-1$, the unnormalized margin for each sample increased, and so by part (c) of that lemma we thus have
\begin{equation}\label{eq:nn.interpolates.with.loss.ratio}
\text{for all $k\in [n]$ and $\tau=1, \dots, t$,}\qquad y_k f(x_k;\Wt \tau) > 0.
\end{equation}
By Fact~\ref{fact:logistic.loss.ratio}, this means the ratio of exponential losses is at most twice the ratio of the sigmoid losses, which will be used in the analysis below.  

We now consider two cases: 
\begin{enumerate}
    \item If the ratio $\lpit 1t/\lpit 2t$ is relatively small, in this case we will show that the exponential loss ratio will not grow too much for small enough step-size $\alpha$.
    \item If the ratio $\lpit 1t / \lpit 2t$ is relatively large, then the first exponential term in~\eqref{eq:loss.ratio.intermediate} will dominate and cause the exponential loss ratio to contract. 
\end{enumerate}

\paragraph{Case 1 ($\lpit 1t / \lpit 2t \leq \frac{2 C_1^2}{\gamma^2} $):}  By Lemma~\ref{lemma:exp.loss.ratio}, we have
\begin{align*}
A_{t+1} &\leq A_t \exp\left(-\frac{ \lpit 2t \alpha \gamma^2 p}{C_1 n}\left(\f{\lpit 1t}{\lpit 2t}  -\frac{C_1^2}{\gamma^{2}}   \right)\right)
    \exp\left(2C_1    \alpha \left(\lVert \mu\rVert^2+2 \sqrt{p \log(n/\delta)}\right) \hat G (W^{(t)}) \right)\\
&\overset{(i)}\leq A_t \exp\left(\frac{ \lpit 2t C_1 \alpha p}{n} \right)
    \exp\left(2C_1    \alpha \left(\lVert \mu\rVert^2+ 2 \sqrt{p \log(n/\delta)} \right) \hat G (W^{(t)}) \right)\\
&\overset{(ii)} \leq  A_t \exp\left(\frac{ C_1 \alpha p}{n} \right)
    \exp\left(2C_1    \alpha \left(\lVert \mu\rVert^2+ 2 \sqrt{p \log(n/\delta)} \right)\right)\\
&\overset{(iii)} \leq 2\f{ \lpit 1t}{\lpit 2t} \exp\left(\frac{ C_1 \alpha p}{n} \right)
    \exp\left(2C_1    \alpha \left(\lVert \mu\rVert^2+ 2 \sqrt{p \log(n/\delta)} \right) \right)\\
&=2\f{ \lpit 1t}{\lpit 2t} \exp\left(C_1 \alpha  \l( \f{p}{n} + 2 \snorm{\mu}^2 + 4 \sqrt{p\log(n/\delta)} \r) \right) \\
&\overset{(iv)}\leq \f{ 4 C_1^2}{\gamma^2} \exp\left(C_1 \alpha  \l( \f{p}{n} + 2 \snorm{\mu}^2 + 4 \sqrt{p\log(n/\delta)} \r) \right) \\
&\overset{(v)}\leq \frac{4 C_1^2   \exp(1/8)}{\gamma^2} \leq \frac{8C_1^2}{ \gamma^{2}}.
\end{align*}
In $(i)$ and $(ii)$ we use that $0 \leq \lpit it \leq 1$.  In $(iii)$, we use~\eqref{eq:nn.interpolates.with.loss.ratio} so that $y_k f(x_k;\Wt t) > 0$ for all $k$.  In particular, by Fact~\ref{fact:logistic.loss.ratio} this means that the ratio of exponential losses is at most twice the ratio of the sigmoid losses.  In $(iv)$, we use the Case 1 assumption that $\lpit 1t / \lpit 2t\le 2C_1^2/\gamma^2$.  Finally, in $(v)$, we take $C>1$ sufficiently large so that by the upper bound on the step-size given in Assumption~\ref{a:stepsize}, we have,
\[ C_1 \alpha \l( \f{ p}{n} + 2 \snorm{\mu}^2 + 4 \sqrt{p\log(n/\delta)} \r) \leq \f 1 {Hn} + \f{6}{C_1 H} \leq \f 18,\]
where we have used Assumption~\ref{a:dimension} and assumed without loss of generality that $H\geq 1$.

\paragraph{Case 2 ($\lpit 1t / \lpit 2t > \frac{2C_1^2}{\gamma^{2}}$):} Again using Lemma~\ref{lemma:exp.loss.ratio}, we have that
\begin{align*}
A_{t+1}&\leq A_t \cdot  \exp\l(-\frac{ \lpit 2t \alpha \gamma^2 p}{C_1  n}\l (\f{ \lpit 1t}{\lpit 2t} - \frac{C_1^2}{\gamma^2}  \r)\r) \cdot \exp\l( 2C_1\alpha \l(\snorm{\mu}^2 + 2\sqrt{p \log(n/\delta)}\r) \hat G(\Wt t)\r)\\
&= A_t\exp\left(-\frac{ \lpit 2t \alpha \gamma^2 p}{C_1 n}\left(\f{\lpit 1t}{\lpit 2t}  - \frac{C_1^2}{ \gamma^{2}} \right)\right)\\&\qquad \times \exp\left(2 C_1 \alpha  \left(\lVert \mu\rVert^2+ 2\sqrt{p \log(n/\delta)} \right) \lpit 2t \cdot \f 1 n \summ i n \f{\lpit it}{\lpit 2t} \right) \\
&\overset{(i)} \leq A_t\exp\left(-\frac{ \lpit 2t \alpha \gamma^2 p}{C_1 n}\left(\f{\lpit 1t}{\lpit 2t}  - \frac{C_1^2}{ \gamma^{2}} \right)\right)\\ &\qquad \times \exp\left(2 \lpit 2t C_1 \alpha  \left(\lVert \mu\rVert^2+ 2\sqrt{p \log(n/\delta)} \right)  \cdot \max\left\{2, \frac{16 C_1^2}{ \gamma^{2}}\right\} \right) \\
&\overset{(ii)}= A_t\exp\left(- \lpit 2t \alpha \l[ \f{ \gamma^2 p}{C_1n} \l( \f{\lpit 1t}{\lpit 2t} - \frac{C_1^2}{ \gamma^{2}} \r) - \frac{32C_1^3}{\gamma^2}  \l(  \snorm{\mu}^2 +  2\sqrt{p \log(n/\delta)} \r) \r]  \right) \\
&\overset{(iii)}\leq   A_t\exp\left(-\lpit 2t \alpha \l[ \f{ C_1 p}{n} - \frac{32C_1^3}{\gamma^2} \l( \snorm{\mu}^2 + 2\sqrt{p \log(n/\delta)} \r) \r]  \right) \\
&\overset{(iv)}\leq A_t\leq \frac{8C_1^2}{ \gamma^{2}}.
\end{align*}
In $(i)$ we use the induction hypothesis that $A_t \leq  8 C_1^2/\gamma^2$ together with Fact~\ref{fact:logistic.loss.ratio}.  Equality $(ii)$ uses that $C_1>1$ and that $\gamma\leq 1$.  In $(iii)$, we use the Case 2 assumption that $\lpit 1t / \lpit 2t \geq 2 C_1^2/ \gamma^{2}$.  Finally, in $(iv)$, we use Assumption~\ref{a:dimension} so that we have $p \geq C n\lv \mu\rv^2\ge  \frac{128 C_1^2}{\gamma^2}  n  \snorm{\mu}^2$ and that $p \geq Cn^2 \log(n/\delta)\ge \left(\frac{128 C_1^2 }{\gamma^2} n  \sqrt{\log(n/\delta)}\right)^2$ and also the fact that $\lpit 2t\ge 0$.   

This completes the induction that for all times $t \geq 0$, the ratio of the exponential losses is at most $8C_1^2/\gamma^{2}$.  Using~\eqref{eq:loss.ratio.logistic.exponential} completes the proof. 
\end{proof}

\subsection{Proof of Lemma~\ref{lemma:margin.test.sample}}\label{ss:margin_increase_lemma}
Let us restate and prove the lemma. 
\marginincrease*
\begin{proof}
Using Lemma~\ref{lemma:nn.smooth}, if we define the quantity
\[ \xi_{i, z, s} := \f 1 m \summ j m \phi'(\sip{\wt s_j}{x_i}) \phi'(\sip{\wt s_j}{\mu+z}), \]
then recalling the notation $\lpit is := -\ell'\big(y_i f(x_i;\Wt s)\big) \in (0,1)$ we have,
\begin{align*}
    \E[ (f(\mu+z; \Wt {s+1}) - f(\mu+z;\Wt s))] &\geq \f \alpha n \summ i n \lpit is \l[ \E[\xi_{i,z,s} \sip{y_ix_i}{\mu+z}] - \f{ H C_1 p \alpha}{2 \sqrt m} \E[\snorm{\mu+z}^2] \r]. \numberthis \label{eq:margin.increase.time.s.mu.cluster}
\end{align*}

As for the $-f(-\mu+z;\Wt t)$ term, again Lemma~\ref{lemma:nn.smooth} states that if we define the quantity
\[ \xi_{i, z, s}' := \f 1 m \summ j m \phi'(\sip{\wt s_j}{x_i}) \phi'(\sip{\wt s_j}{-\mu+z}), \]
then
\begin{align*}
    \E[ -f(-\mu+z; \Wt {s+1}) - (-f(-\mu+z;\Wt s)))] &\geq \f \alpha n \summ i n \lpit is \l[ \E[ \xi_{i,z,s}' \sip{y_ix_i}{\mu-z}] - \f{ H C_1 p \alpha}{2 \sqrt m} \E[\snorm{\mu-z}^2] \r]. \numberthis \label{eq:margin.increase.time.s.minusmu.cluster}
\end{align*}

We first show the bound for $x=\mu+z$. Our goal is to show that the contribution of the terms involving $\xi_{i,z,s}
$ will be large and positive and will dominate the terms involving the $\snorm{\mu-z}^2$. In particular, using linearity of expectation and bi-linearity of the inner product,
\begin{align*}
    &\qquad \f \alpha n \summ i n \lpit is \E[\xi_{i,z,s} \sip{y_i x_i}{\mu+z} ] \\
    &=\f \alpha n \summ i n \lpit is \E \l[ \f 1 m \summ j m \phi'(\sip{\wt s_j}{x_i}) \phi'(\sip{\wt s_j}{\mu+z}) \sip{y_i x_i}{\mu+z} \r]\\
    &= \f \alpha {nm} \summ j m\ip{  \summ i n \lpit is \phi'(\sip{\wt s_j}{x_i}) y_i x_i}{\E[\phi'(\sip {\wt s_j}{\mu+z})( \mu+z)]} \\
    &= \f \alpha {nm} \summ j m \Bigg[ \ip{  \summ i n \lpit is \phi'(\sip{\wt s_j}{x_i}) y_i x_i}{\mu \E[\phi'(\sip {\wt s_j}{ \mu + z})] }\\
    &\qquad \qquad + \ip{  \summ i n \lpit is \phi'(\sip{\wt s_j}{x_i}) y_i x_i}{\E[\phi'(\sip {\wt s_j}{ \mu + z}) z]} \Bigg] \numberthis \label{eq:xi.izt.term.decomposition}
\end{align*}
The last equality uses that $z$ does not depend on $\mu$. 
We will show that the first term appearing in the brackets is large and positive, while the second term in the brackets is small in absolute value relative to this term.  For the latter term, we will use the near-orthogonality of the $\{y_i x_i\}$.  In particular we have,
\begin{align*}
    &\qquad\norm{ \summ i n \lpit is \phi'(\sip{\wt s_j}{x_i}) y_i x_i}^2 \\
    &= \summ i n \snorm{\lpit is \phi'(\sip{\wt s_j}{x_i}) y_i x_i}^2  + \sum_{i\neq k} \ip{\lpit is \phi'(\sip{\wt s_j}{x_i}) y_i x_i}{\lpit ks \phi'(\sip{\wt s_j}{x_k}) y_k x_k} \\
    &\overset{(i)} \leq \summ i n (\lpit is)^2 \snorm{x_i}^2 + \sum_{i\neq k} \lpit is \lpit ks |\sip{x_i}{x_k}| \\
    &\overset{(ii)} \leq \summ i n (\lpit is)^2 \cdot C_1 p + \sum_{i\neq k} \lpit is \lpit ks C_1 \l( \snorm{\mu}^2 + \sqrt{p\log(n/\delta)} \r) \\
    &\overset{(iii)} \leq C_1 C_r^2 n p \hat G(\Wt s)^2 + C_1 C_r^2 n^2 \l( \snorm \mu^2 + \sqrt{p\log(n/\delta)}\r) \hat G(\Wt s)^2 \\
    &\overset{(iv)} \leq 2C_1 C_r^2 n p \hat G(\Wt s)^2. \numberthis \label{eq:normsq.sumyixi.term.ub}
\end{align*}
The first inequality uses that $|\phi'|\leq 1$.  The second uses Lemma~\ref{lemma:sample.facts}.  Inequality $(iii)$ uses that by Lemma~\ref{lemma:loss.ratio},
\begin{equation} \label{eq:any.loss.vs.min.loss}
\text{for all $k$,\, }\lpit ks \leq \max_i \lpit is \leq C_r \min_i \lpit is \leq C_r \hat G(\Wt s). 
\end{equation}
The final inequality $(iv)$ uses that $p \geq C(n\snorm \mu^2 \vee n^2 \log(n/\delta))$ by~\ref{a:dimension} and by taking $C$ large enough. This allows for us to bound
\begin{align*}
    &\qquad \l|\ip{  \summ i n \lpit is \phi'(\sip{\wt s_j}{x_i}) y_i x_i}{\E[\phi'(\sip {\wt s_j}{\mu + z})  z]}\r| \\
    &= \l| \E \l [ \ip{z \phi'(\sip {\wt s_j}{ \mu + z})}{ \summ i n \lpit is \phi'(\sip{\wt s_j}{x_i}) y_i x_i} \r] \r| \\
    &= \l| \E \l [\phi'(\sip {\wt s_j}{ \mu + z}) \ip{ z}{\summ i n \lpit is \phi'(\sip{\wt s_j}{x_i}) y_i x_i} \r] \r| \\
    &\leq  \E\l[ |\phi'(\sip {\wt s_j}{\mu +z})| \cdot \l| \ip{z}{ \summ i n \lpit is \phi'(\sip{\wt s_j}{x_i}) y_i x_i}\r| \r] \\
    &\overset{(i)} \leq \E\l[ \l| \ip{z}{\summ i n \lpit is \phi'(\sip{\wt s_j}{x_i}) y_i x_i}\r| \r]\\
    &\overset{(ii)} \leq C_0 \norm{\summ i n \lpit is \phi'(\sip{\wt s_j}{x_i}) y_i x_i}\\
    &\overset{(iii)} \leq  C_0 \sqrt{2C_1 C_r^2 n p} \hat G(\Wt s). \numberthis \label{eq:noise.term.for.generalization.ub}
\end{align*}
Inequality $(i)$ uses that $|\phi'(q)|\leq 1$, while inequality $(ii)$ uses that $\summ i n \lpit is \phi'(\sip{\wt s_j}{x_i}) y_i x_i$ is a quantity which does not depend on $z$, and that $z$ has sub-Gaussian norm at most one.  Inequality $(iii)$ uses~\eqref{eq:normsq.sumyixi.term.ub}.

The above calculation shows an upper bound for the absolute value of the second term of~\eqref{eq:xi.izt.term.decomposition}; we now show a lower bound for the first term.  We have,
\begin{align*}
     &\qquad\ip{  \summ i n \lpit is \phi'(\sip{\wt s_j}{x_i}) y_i x_i}{\E[\phi'(\sip {\wt s_j}{  \mu + z}) \mu]}\\
     &=   \summ i n \lpit is \phi'(\sip{\wt s_j}{x_i}) \ip{y_i x_i}{\E[\phi'(\sip {\wt s_j}{\mu + z}) \mu]} \\
     &\overset{(i)}=   \summ i n \lpit is \phi'(\sip{\wt s_j}{x_i}) \ip{y_i x_i}{\mu }\E[\phi'(\sip {\wt s_j}{\mu + z})]\\
     &=   \sum_{i\in \calC} \lpit is \phi'(\sip{\wt s_j}{x_i}) \ip{y_i x_i}{\mu }\E[\phi'(\sip {\wt s_j}{\mu + z})] +   \sum_{i\in \calN} \lpit is \phi'(\sip{\wt s_j}{x_i}) \ip{y_i x_i}{\mu }\E[\phi'(\sip {\wt s_j}{\mu + z})]\\
     &\overset{(ii)}\geq \sum_{i\in \calC} \gamma \cdot \f{ \snorm{\mu}^2}{2} \cdot \gamma \cdot \lpit is- \sum_{i\in \calN} 1 \cdot \f{ 3 \snorm \mu^2}{2} \cdot 1 \cdot \lpit is\\
     &= \f{ \gamma^2 \snorm{\mu}^2 }2 \l( \sum_{i\in \calC} \lpit is  - 3 \gamma^{-2}  \sum_{i\in \calN} \lpit is \r) \\
     &= \f{ \gamma^2 \snorm{\mu}^2 }2 \l( \summ i n \lpit is  - (3 \gamma^{-2}+1)  \sum_{i\in \calN} \lpit is \r) \\
     &\overset{(iii)}\geq \f{ \gamma^2 \snorm{\mu}^2 }2 \l( n \hat G(\Wt s)  - (3 \gamma^{-2} +1)|\calN| C_r \hat G(\Wt s) \r) \\
     &\overset{(iv)} \geq \f{ \gamma^2 \snorm \mu^2}{4} \cdot n \hat G(\Wt s). \numberthis \label{eq:xi.izt.term.1.lb}
\end{align*}
The equality $(i)$ uses that $z$ does not depend on $\mu$ and linearity of expectation, while $(ii)$ uses that $\gamma \leq \phi'(q)\leq 1$ for all $q$ and Lemma~\ref{lemma:sample.facts}. Inequality $(iii)$ uses~\eqref{eq:any.loss.vs.min.loss}, and inequality $(iv)$ uses Lemma~\ref{lemma:sample.facts} and by taking $C$ large enough in Assumption~\ref{a:noiserate}.

Putting~\eqref{eq:xi.izt.term.1.lb} and~\eqref{eq:noise.term.for.generalization.ub} into~\eqref{eq:xi.izt.term.decomposition} we get
\begin{align*}
    \f \alpha n \summ i n \lpit is \E[\xi_{i,z,s} \sip{y_i x_i}{\mu+z} ] &= \f \alpha {nm} \summ j m \Bigg[ \ip{  \summ i n \lpit is \phi'(\sip{\wt s_j}{x_i}) y_i x_i}{\E[\phi'(\sip {\wt s_j}{\tilde y \mu + z}) \mu]}\\
    &\qquad \qquad + \ip{  \summ i n \lpit is \phi'(\sip{\wt s_j}{x_i}) y_i x_i}{\E[\phi'(\sip {\wt s_j}{\tilde y \mu + z}) z]} \Bigg] \\
    &\geq \f{\alpha}{nm} \summ j m \l[ \f{ \gamma^2 \snorm \mu^2}{4} \cdot n \hat G(\Wt s) -  C_0 \sqrt{2C_1 C_r^2 n p} \hat G(\Wt s) \r]  \\
    &= \f{ \alpha \gamma^2 \snorm{\mu}^2}{4} \l( 1 -  4C_0 \gamma^{-2} \sqrt{ \f{ 2 C_1 C_r^2 p}{n\snorm \mu^4}} \r)\hat G(\Wt s). \numberthis \label{eq:margin.increase.time.s.mu.cluster.firstterm}
\end{align*}

Finally, the second term of~\eqref{eq:margin.increase.time.s.mu.cluster} can be bounded from above by using that $z$ has sub-Gaussian norm at most one, hence
\begin{align*}
    \E\snorm{\mu+z}^2 &= \snorm{\mu}^2 + 2\E\sip{\mu}{z} + \E\snorm{z}^2 =  \snorm{\mu}^2 + \E\snorm{z}^2 \leq \snorm \mu^2 + 3p \leq 4p,
\end{align*}
where the last inequality uses assumption~\ref{a:dimension}.  
This means
\begin{align*}
    \summ i n \lpit is \cdot \f{ H C_1 p \alpha}{2 \sqrt m} \E[\snorm{\mu+z}^2] \leq \f{2 H C_1 p^2 \alpha}{\sqrt m} \cdot n \hat G(\Wt s).
\end{align*}
Using this and~\eqref{eq:margin.increase.time.s.mu.cluster.firstterm}, we see that~\eqref{eq:margin.increase.time.s.mu.cluster} becomes
\begin{align*}
    &\qquad \E[ (f(\mu+z; \Wt {s+1}) - f(\mu+z;\Wt s))] \\
    &\geq \f \alpha n \summ i n \lpit is \l[ \E[\xi_{i,z,s} \sip{y_ix_i}{\mu+z}] - \f{ H C_1 p \alpha}{2 \sqrt m} \E[\snorm{\mu+z}^2] \r]\\
    &\geq \f{\alpha \gamma^2 \snorm{\mu}^2}{4} \l( 1 -  4C_0 \gamma^{-2} \sqrt{ \f{ 2 C_1 C_r^2 p}{n\snorm \mu^4}} \r)\hat G(\Wt s) -  \f{2 H C_1 p^2 \alpha^2}{\sqrt m} \hat G(\Wt s)\\
    &= \f{ \alpha \gamma^2 \snorm{\mu}^2}{4} \l( 1 -  4C_0 \gamma^{-2} \sqrt{ \f{ 2 C_1 C_r^2 p}{n\snorm \mu^4}} - \f{8 H C_1 p^2 \alpha}{ \gamma^2 \snorm{\mu}^2 \sqrt m} \r)\hat G(\Wt s)  \\
    &\overset{(i)} \geq \f{ \alpha \gamma^2 \snorm{\mu}^2}{8} \hat G(\Wt s).
\end{align*}
The final inequality follows by taking $\alpha$ small enough so that $8H C_1 p^2 \alpha / (2 \gamma^2\snorm{\mu}^2 \sqrt m) \leq 1/4$ via assumption~\ref{a:stepsize} and~\ref{a:norm.mu}, and by taking $C_\mu$ large enough in the lemma's assumption that $n \snorm \mu^4 \geq C_\mu p$.

It is clear that the same argument applies to the test example $-\mu+z$ as the only properties of $\xi_{i,z,s}$ that are used in the proof are that $\gamma \leq \phi'(q) \leq 1$ for all $q$, and hence swapping out $\xi_{i,z,s}$ for $\xi_{i, z, s}'$ will result in the same bounds. 
\end{proof}

\subsection{Proof of Lemma~\ref{l:refined_grad_norm_upper_bound}}\label{ss:refined_upper_bound}
We remind the reader of the statement of Lemma~\ref{l:refined_grad_norm_upper_bound}.
\parameternormbound*

\begin{proof} By the triangle inequality we have that
\begin{align*}
    \lv W^{(t)}\rv_F & = \left\lv W^{(0)} + \alpha \sum_{s=0}^{t-1} \nabla \hat L(\Wt s)\right\rv_F  \le \lv W^{(0)}\rv_F +  \alpha \sum_{s=0}^{t-1}\lv \nabla \hat L(\Wt s)\rv_F 
    . \numberthis \label{e:refined_upper_norm_bound_step1}
\end{align*}
Now observe that
\begin{align*}
    &\lv \nabla \hat{L}(W^{(s)})\rv_F^2\\&\qquad = \frac{1}{n^2} \left\lv \sum_{i=1}^n \lpit is y_i \nabla f(x_i;W^{(s)})\right\rv_F^2 \\
    & \qquad= \frac{1}{n^2} \left[\sum_{i=1}^n \left(\lpit is\right)^2 \left\lv \nabla f(x_i;W^{(s)}) \right\rv^2_F +\sum_{i\neq j \in [n]}\lpit is \lpit js y_iy_j\langle \nabla f(x_i;W^{(s)}),\nabla f(x_j;W^{(s)}) \rangle \right] \\
    & \qquad\le \frac{1}{n^2} \left[\sum_{i=1}^n \left(\lpit is \right)^2 \left\lv \nabla f(x_i;W^{(s)}) \right\rv^2_F +\sum_{i\neq j \in [n]} \lpit is \lpit js \left\lvert \langle \nabla f(x_i;W^{(s)}),\nabla f(x_j;W^{(s)}) \rangle\right\rvert  \right] \\
    &\qquad\overset{(i)}{\le} \frac{C_1}{n^2} \left[\sum_{i=1}^n \left(\lpit is \right)^2 p +\sum_{i\neq j \in [n]} \lpit is \lpit js \left(\lv \mu\rv^2 + \sqrt{p\log(n/\delta)}\right) \right] \\
    & \qquad\le  \frac{C_1}{n^2}\cdot  \max_{k \in [n]} \lpit ks \left[\sum_{i=1}^n \lpit is  p + n\sum_{i=1}^n \lpit is \left(\lv \mu\rv^2 + \sqrt{p\log(n/\delta)}\right) \right] \\
    & \qquad= \frac{C_1}{n^2} \left(p+n\lv \mu\rv^2+n\sqrt{p \log(n/\delta)}\right)\cdot \max_{k \in [n]} \lpit ks \left[\sum_{i=1}^n \lpit is  \right],
\end{align*}
where $(i)$ follows by Lemma~\ref{lemma:nn.grad.ip.identity}. Now note that since $p \ge Cn\lv \mu\rv^2$ and $p \ge Cn^2 \log(n/\delta)$ by Assumption~\ref{a:dimension}, we have that,
\begin{align*}
        \lv \nabla \hat{L}(W^{(s)})\rv_F^2 & \le \frac{3 C_1^2 p}{n} \left( \max_{k\in [n]}\lpit ks \right)\hat{G}(W^{(s)}).
\end{align*}
Next note that by the loss ratio bound in Lemma~\ref{lemma:loss.ratio} we have that
\begin{align*}
    \max_{k\in [n]}\lpit ks \le \frac{C_r}{n} \sum_{i=1}^n \lpit is = C_r \hat{G}(W^{(s)}).
\end{align*}
Plugging this into the previous inequality yields
\begin{align*}
    \lv \nabla \hat{L}(W^{(s)})\rv_F^2 \le \frac{3 C_1^2 C_r p}{n}\left(\hat{G}(W^{(s)})\right)^2.
\end{align*}
Finally, taking square roots, defining $C_2 := \sqrt{3 C_1^2 C_r}$ and applying this bound on the norm in Inequality~\eqref{e:refined_upper_norm_bound_step1} above we conclude that
\begin{align*}
    \lv W^{(t)}\rv_F \le \lv W^{(0)}\rv_F +C_2 \alpha \sqrt{\frac{p}{n}} \sum_{s=0}^{t-1}\hat{G}(W^{(s)}),
\end{align*}
establishing our claim.
\end{proof}

\subsection{Proof of Lemma~\ref{lemma:normalized.margin.lb}}\label{ss:normalized_margin_lb}
Let us restate the lemma for the reader's convenience.
\normalizedmarginbound*
\begin{proof}
Using the refined upper bound for the norm of the weights given in Lemma~\ref{l:refined_grad_norm_upper_bound}, we have that,
\begin{align*}
\snorm{\Wt t}_F 
&\leq \snorm{\Wt 0}_F + C_2 \alpha \sqrt{\f{p}{n}}  \summm s 0 {t-1} \hat G(\Wt s).\numberthis \label{eq:wt.fro.norm.w1}
\end{align*}
To complete the proof, we want to put together the bound for the unnormalized margin on clean samples given by Lemma~\ref{lemma:margin.test.sample} with the upper bound on the norm given in~\eqref{eq:wt.fro.norm.w1}.  We'll consider the $+\mu$ case and the $-\mu$ case follows identically.   By Lemma~\ref{lemma:margin.test.sample}, we know that for non-negative integers $s$,
\begin{align*}
    \E[f(\mu+z;\Wt {s+1}) - f(\mu+z;\Wt s)] \geq \f{ \alpha \gamma^2 \snorm \mu^2}{8} \hat G(\Wt s).
\end{align*}
Summing this from $s=0$ to $s={t-1}$ we get
\begin{align*}\numberthis \label{eq:margin.at.t.minus.margin.at.0.prelim}
    \E[f(\mu+z;\Wt t)] - \E[f(\mu+z;\Wt 0)] \geq \f{ \alpha \gamma^2 \snorm \mu^2}{8} \summm s 0 {t-1} \hat G(\Wt s).
\end{align*}
Now we'd like to show that the margin term from $t=0$ can be ignored.  To this end,
we first write
\begin{align*}
    |\E[f(\mu+z;\Wt 0)]| &\leq \E |f(\mu+z;\Wt 0)| \\
    &= \E\l| \summ j m a_j \phi(\sip{\wt 0_j}{\mu+z}) \r|\\
    &\leq \f{1}{\sqrt m} \summ j m \E|\phi(\sip{\wt 0_j}{\mu+z})| \\
    &\overset{(i)}\leq \f{1}{\sqrt m} \summ j m \E |\sip{\wt 0_j}{\mu+z}| \\
    &\leq \f 1 {\sqrt m} \summ j m ( |\sip{\wt 0_j}{\mu}| + \E|\sip{\wt 0_j}{z}| ). \numberthis \label{eq:margin.at.0.ub.prelim}
\end{align*}
Inequality $(i)$ uses that $|\phi(q)| \leq |q|$ for all $q\in \R$.  For the first term above, we have
\begin{align*}
    \f 1 {\sqrt m} \summ j m |\sip{\wt 0_j}{\mu}| &\overset{(i)}\leq \sqrt{\summ j m \sip{\wt 0_j}{\mu}^2} \\
    &= \snorm{\Wt 0 \mu}_2 \\
    &\leq \snorm{\Wt 0}_2 \snorm{\mu} \\
    &\overset{(ii)} \leq C_0 \sinit(\sqrt m + \sqrt p) \snorm \mu\\
    &\overset{(iii)} \leq C_0 \alpha \l( \f{1}{\sqrt p} + \f 1{\sqrt m} \r) \snorm \mu. \numberthis \label{eq:wj0.mu.term}
\end{align*}
Inequality $(i)$ uses Cauchy--Schwarz.  Inequality $(ii)$ uses Lemma~\ref{lemma:initialization.norm}, while $(iii)$ uses Assumption~\ref{a:sinit}.  For the second term in~\eqref{eq:margin.at.0.ub.prelim}, we have
\begin{align*}
        \f 1 {\sqrt m} \summ j m \E|\sip{\wt 0_j}{z}| &\overset{(i)}\leq \sqrt{\summ j m (\E |\sip{\wt 0_j}{z}|)^2} \\
    &\overset{(ii)}\leq \sqrt{\summ j m \E[\sip{\wt 0_j}{z}^2]} \\
    &\overset{(iii)} \leq \sqrt{\summ j m 2\snorm{\wt 0_j}^2} \\
    &= \sqrt 2 \snorm{\Wt 0}_F\\
    &\overset{(iv)} \leq \sqrt 2 \sinit \sqrt{mp}\\
    &\overset{(v)} \leq \sqrt 2 \alpha. \numberthis 
\end{align*}
Inequality $(i)$ again uses Cauchy--Schwarz.  Inequality $(ii)$ uses Jensen's inequality, and $(iii)$ uses that $z$ is mean zero and has sub-Gaussian norm at most one.   Inequality $(iv)$ uses Lemma~\ref{lemma:initialization.norm}, and $(v)$ uses Assumption~\ref{a:sinit}.  Substituting the previous display and~\eqref{eq:wj0.mu.term} into~\eqref{eq:margin.at.0.ub.prelim} we get
\begin{align*}
    |\E[f(\mu+z;\Wt 0)]| \leq C_0 \alpha \snorm \mu + \sqrt 2 \alpha \leq 2 C_0 \alpha \snorm \mu,
\end{align*}
where we use assumption~\ref{a:norm.mu} and assume without loss of generality that $C_0>\sqrt 2$.  Using this in~\eqref{eq:margin.at.t.minus.margin.at.0.prelim} we get
\begin{align*}
     \E[f(\mu+z;\Wt t)] &\geq \f{ \alpha \gamma^2 \snorm \mu^2}{8} \summm s 0 {t-1} \hat G(\Wt s) - 2 C_0\alpha \snorm \mu \\
     &= \f{ \alpha \gamma^2 \snorm{\mu}^2}{8} \l( -\f{ 16 C_0\gamma^{-2}}{\snorm{\mu}} + \f 12 \hat G(\Wt 0) + \f 1 2 \hat G(\Wt 0) + \summm s 1 {t-1} \hat G(\Wt s) \r) \\
     &\overset{(i)}\geq \f{ \alpha \gamma^2 \snorm{\mu}^2}{8} \l( -\f{ 16 C_0\gamma^{-2}}{\sqrt {C \log(n/\delta)}} + \f 12 \hat G(\Wt 0) + \f 1 2 \hat G(\Wt 0) + \summm s 1 {t-1} \hat G(\Wt s) \r) \\
     &\overset{(ii)}\geq \f{ \alpha \gamma^2 \snorm{\mu}^2}{8} \l( \f 1 2 \hat G(\Wt 0) + \summm s 1 {t-1} \hat G(\Wt s) \r) \\
     &\geq \f{ \alpha \gamma^2 \snorm \mu^2}{16} \summm s 0 {t-1} \hat G(\Wt s). \numberthis \label{eq:unnormalized.margin.final}
\end{align*}
Inequality $(i)$ uses assumption~\ref{a:norm.mu}.  Inequality $(ii)$ follows by noting that $\hat G(\Wt 0)\geq 1/4$ and by taking $C$ to be a large enough absolute constant.  That $\hat G(\Wt 0)\geq 1/4$ follows by equation~\eqref{eq:network.output.bounded.by.one}: since $|f(x_i; \Wt 0)| \leq 1$ for all $i$ the definition of $\ell$ implies $-\ell'(q)\geq 1/4$ for $|q| \leq 1$.

We provide one final auxiliary calculation before showing the lower bound on the normalized margin.  By the previous paragraph's argument, we have $\hat G(\Wt 0) \geq 1/4$. 
Using this along with Lemma~\ref{lemma:initialization.norm}, we have that
\begin{equation}\label{eq:init.norm.vs.loss}
\quad \snorm{\Wt 0}_F \leq 2 \sinit \sqrt{mp} \leq 2 \alpha \leq \alpha \sqrt{C_1p/n} \hat G(\Wt 0),
\end{equation}
where we have used the assumption~\ref{a:sinit} that $\sinit \sqrt{mp}\leq \alpha$ and that Assumption~\ref{a:dimension} implies $p/n$ is larger than some fixed constant.  

With this in hand, we can calculate a lower bound on the normalized margin as follows. We consider two disjoint cases.

\paragraph{Case 1 ($\snorm{\Wt t}_F\leq 2 \snorm{\Wt 0}_F$):}   In this case, by using ~\eqref{eq:unnormalized.margin.final} we have that,
\begin{align*}
\f{ \E [f(\mu+z; \Wt t)]}{\snorm{\Wt t}_F} &\geq \f{ \alpha \gamma^2 \snorm{\mu}^2 \summm s 0 {t-1} \hat G(\Wt s)}{16 \cdot 2\snorm{\Wt 0}_F} \\
&\overset{(i)}{\geq} \f{ \alpha \gamma^2 \snorm{\mu}^2 \summm s 0 {t-1} \hat G(\Wt s)}{32\alpha \sqrt{C_1 p/n} \hat G(\Wt 0)} \\
&\overset{(ii)}{\geq} \f{\gamma^2 \snorm{\mu}^2 \sqrt n}{32\sqrt{C_1 p}}
\end{align*} 
where $(i)$ uses \eqref{eq:init.norm.vs.loss} and $(ii)$ uses that $\sum_{s=0}^{t-1}G(\Wt s) \ge G(\Wt 0)$. This completes the proof in this case.

\paragraph{Case 2 ($\snorm{\Wt t}_F> 2 \snorm{\Wt 0}_F$):}
   By~\eqref{eq:wt.fro.norm.w1}, we have the chain of inequalities,
\[ 2\snorm{\Wt 0}_F < \snorm{\Wt t}_F \leq \snorm{\Wt 0}_F + C_2 \alpha \sqrt{\f pn} \summm s 0 {t-1}\hat G(\Wt s).\]
In particular, we have $C_2 \alpha \sqrt{p/n} \summm s 0 {t-1} \hat G(\Wt s) > \snorm{\Wt 0}_F$, and so using the preceding inequality and~\eqref{eq:unnormalized.margin.final} we get,
\begin{align*}
    \f{ \E[f(\mu+z; \Wt t)]}{\snorm{\Wt t}_F} &\geq \f{ \alpha \gamma^2 \snorm{\mu}^2 \summm s 0 {t-1} \hat G(\Wt s)}{16\big(\snorm{\Wt 0}_F + C_2 \alpha \sqrt{p/n} \summm s 0 {t-1} \hat G(\Wt s)\big)} \\
    &\geq \f{ \alpha \gamma^2 \snorm{\mu}^2 \summm s 0 {t-1} \hat G(\Wt s)}{32 C_2 \alpha \sqrt{p/n} \summm s 0 {t-1} \hat G(\Wt s)} \\
    &= \f{\gamma^2 \snorm{\mu}^2 \sqrt n }{32 C_2\sqrt p},
\end{align*}
completing the proof.  
\end{proof}

\subsection{Proof of Lemma~\ref{lemma:optimization.guarantee}}\label{ss:optimization_guarantee}
\gradientnormlowerbound*
\begin{proof}
In order to show a lower bound for $\snorm{\nabla \hat L(\Wt t)}_F = \sup_{U: \lv U\rv_F=1}\langle -\nabla \hat L(\Wt t), U \rangle $, it suffices to construct a matrix $V$ with Frobenius norm at most one such that
$\sip{-\nabla \hat L(\Wt t)}{V}$ is bounded from below by a positive constant.  To this end, let $V\in \R^{m\times p}$ be the matrix with rows
\begin{equation}\label{eq:V.matrix}
 v_j =  a_j \mu / \snorm{\mu}.
\end{equation}
Then $\snorm{V}_F=1$ (since $a_j = \pm 1/\sqrt{m}$), and we have for any $W\in \R^{m\times d}$,
\begin{align}
    \sip{\nabla f(x_i; W)}{V} &= \summ jm a_j \phi'(\sip{w_j}{x}) \sip{v_j}{x} = \ip{\f{ \mu}{\snorm \mu}}{x} \f 1 m \summ i m \phi'(\sip{w_j}{x}).\label{eq:gradient.margin.intermediate}
\end{align}
Now, by Events~\eqref{eq:mu.dot.xk.clean} and~\eqref{eq:mu.dot.xk.noisy}, we have that
\begin{equation}
\begin{cases}y_i \sip{\mu}{x_i} \geq \f 12 \snorm{\mu}^2, & i\in \calC,\\
|\sip{\mu}{x_i}| \leq \f 32 \snorm{\mu}^2, &i\in \calN.\end{cases}
\end{equation}
Since $\phi'(z)\geq \gamma>0$ for all $z$,~\eqref{eq:gradient.margin.intermediate} implies we have the following lower bound for any $W\in \R^{m\times d}$,
\begin{equation}\label{eq:gradient.margin}
y_i \sip{\nabla f(x_i; W)}{V} \geq \begin{cases}  \f{\gamma}2 \snorm{\mu}, &\quad i\in \calC,\\ 
 -\f{3}{2} \snorm{\mu}, &\quad i\in \calN.
\end{cases} 
\end{equation} 
This allows for a lower bound on $\sip{-\hat \nabla L(\Wt s)}{V}$, since
\begin{align*}
\sip{-\hat \nabla L(\Wt s)}{V} &= \f 1 n \summ i n \lpit is y_i \sip{\nabla f(x_i; \Wt s)}{V} \\
&\overset{(i)}\geq \f 1 n \sum_{i\in \calC} \lpit is \cdot \f{\gamma}{2} \snorm{\mu} - \f 1 n \sum_{i\in \calN} \lpit is \cdot \f 32 \snorm{\mu} \\
&= \f {\gamma \snorm{\mu}}2 \l[ \hat G(\Wt s) - \l( 1 + \frac{3}{\gamma} \r) \f 1 n \sum_{i\in \calN} \lpit is \r]\\
&\overset{(ii)}\geq \f {\gamma  \snorm{\mu}} 2 \l[ \hat G(\Wt s) - \left(1 + \frac{3}{\gamma}\right) \cdot 2C_r \eta \hat G(\Wt s) \r] \\
&\overset{(iii)}\geq \f {\gamma  \snorm{\mu}} 4 \hat G(\Wt s).\numberthis \label{eq:intermediate.norm.lower.bound}
\end{align*}
Inequality $(i)$ uses~\eqref{eq:gradient.margin}, while $(ii)$ uses Lemma~\ref{lemma:loss.ratio}, which implies for every $i\in [n]$, $\lpit it \leq C_r \hat G(\Wt t)$.  
Finally, inequality $(iii)$ above uses Assumption~\ref{a:noiserate} so that the noise rate satisfies $\eta \leq 1/C\le  [4C_r(1 + 3/\gamma)]^{-1}$.   We can therefore derive the following lower bound on the norm of the gradient,
\begin{align}\label{eq:proxy.pl}
    \text{for any $t \geq 0$,}\quad \snorm{\nabla \hat L(\Wt t)}_F &\geq \sip{\nabla \hat L(\Wt t)}{-V} \geq \f{ \gamma \norm{\mu} \hat G(\Wt t)}{4}.
\end{align}
We notice that the inequality of the form $\snorm{\hat \nabla L(W)}\geq c \hat G(W)$ is a proxy PL inequality, where the proxy loss function is $\hat G(W)$~\citep{frei2021proxyconvex}.  We can therefore mimic the smoothness-based proof of~\citet[Theorem 3.1]{frei2021proxyconvex} to show that $\hat G(\Wt {T-1}) \leq \eps$ for $T = \Omega(\eps^{-2})$.  By Lemma~\ref{lemma:loss.smooth}, the loss $\hat L(W)$ has $C_1p(1 + H/\sqrt m)$-Lipschitz gradients.  In particular, we have
\begin{equation}\label{eq:loss.smoothness.intermediate}
    \hat L(\Wt {t+1}) \leq \hat L(\Wt t) - \alpha \snorm{\nabla \hat L(\Wt t)}_F^2 + C_1p\max\left\{1,\frac{H}{\sqrt m}\right\} \alpha^2 \snorm{\nabla \hat L(\Wt t)}_F^2.
\end{equation}
In particular, since Assumption~\ref{a:stepsize} requires $\alpha \leq 1/\left(2 \max\left\{1,\frac{H}{\sqrt m}\right\} C_1^2 p^{2}\right)$, we have that
\begin{equation*}
\snorm{ \nabla \hat L(\Wt t)}_F^2 \leq \f 2{\alpha} \l[ \hat L(\Wt {t}) - \hat L(\Wt {t+1})\r].
\end{equation*}
Telescoping the above sum and scaling both sides by $1/T$, we get for any $T\geq 1$,
\begin{align*} \numberthis \label{eq:intermediate.regret.bound.cale}
\f{\gamma^2 \snorm \mu^2}{16}  \f 1 {T} \summm t 0 {T-1} \hat G(\Wt t)^2 &\overset{(i)}\leq \f 1 {T} \summm t 0 {T-1} \snorm{\nabla \hat L(\Wt t)}_F^2 \leq \f{ 2\hat L(\Wt 0)}{\alpha T},
\end{align*}
where 
inequality $(i)$ uses the proxy PL inequality~\eqref{eq:proxy.pl}.  
Finally, note that by Lemma~\ref{lemma:loss.ratio}, for all $t\geq 0$ we have $\max_{i,j} \nicefrac {\lpit it}{\lpit jt}\leq C_r$.  Thus by Lemma~\ref{lemma:nn.interpolates.time1} part (b), we know that the unnormalized margin $y_k f(x_k;\Wt t)$ is an increasing function of $t$ for each $k\in [n]$.  Since $g = -\ell'$ is decreasing, this implies 
the loss $\hat G(\Wt t)$ is a decreasing function of $t$, and hence $\hat G(\Wt t)^2$ is a decreasing function of $t$.  Therefore, by~\eqref{eq:intermediate.regret.bound.cale},
\begin{equation}
\hat G(\Wt {T-1})^2 = \min_{t<T} \hat G(\Wt t)^2 \leq \f 1 T \summm t 0 {T-1} \hat G(\Wt t)^2 \leq \f{32 \hat L(\Wt 0)}{\gamma^2 \snorm{\mu}^2 \alpha T} \leq \eps^2/4,
\end{equation}
where in the last inequality we use that $T \geq 128 \hat L(\Wt 0)/\left( \gamma^2 \snorm{\mu}^2 \alpha \eps^{2}\right)$.  The proof is completed by noting that $\ind(z \leq 0)\leq  -2\ell'(z)$ and that $\hat G(\Wt {T-1})$ is positive. 
\end{proof}
\section{Non-NTK results, Proof of Proposition~\ref{proposition:non.ntk}}\label{app:prop_proof}
For the reader's convenience, we restate Proposition~\ref{proposition:non.ntk} here.
\ntklower*
\begin{proof}
We construct a lower bound on $\snorm{\Wt 1 -\Wt 0}_F$ using the variational formula for the norm, namely $\snorm{\Wt 1 - \Wt 0}_F \geq \sip{\Wt 1 - \Wt 0}{V}$ for any matrix $V$ with Frobenius norm at most 1.  By definition,
\[ \sip{\Wt 1 - \Wt 0}{V} = \alpha \sip{-\nabla \hat L(\Wt 0)}{V}.\]
By Lemmas~\ref{lemma:initialization.norm} and~\ref{lemma:sample.facts}, a good run occurs with probability at least $1-2\delta$.  On a good run we can use the results in Lemma~\ref{lemma:optimization.guarantee}.  In particular, with the choice of $V$ given in eq.~\eqref{eq:V.matrix}, we have,
\begin{align*}
    \snorm{\Wt 1 - \Wt 0}_F &\geq \sip{\Wt 1 - \Wt 0}{V} \\
    &= \alpha \sip{-\nabla \hat L(\Wt 0)}{V} \\
    &\overset{(i)}\geq \f{ \alpha \gamma \snorm{\mu}}{4} \hat G(\Wt 0) \\
    &\overset{(ii)}\geq \f{\alpha \gamma \snorm{\mu}}{24},
\end{align*}
where inequality $(i)$ uses eq.~\eqref{eq:proxy.pl} and the last inequality $(ii)$ uses that $\hat G(\Wt 0)\geq 1/6$ due to $|f(x_i;\Wt 0)| \leq 1$ (see~\eqref{eq:network.output.bounded.by.one}) and properties of $\ell'$.  Thus, by Lemma~\ref{lemma:initialization.norm}, we have
\begin{align*}
    \f{ \snorm{\Wt 1 - \Wt 0}_F}{\snorm{\Wt 0}_F} \geq \f{ \alpha \gamma \snorm{\mu}}{48 \sinit \sqrt{mp}} \geq \f{\gamma \snorm{\mu}}{48},
\end{align*}
where the last inequality uses Assumption~\ref{a:sinit}. 
\end{proof}

\printbibliography

@article{vershynin,
  author    = {Roman Vershynin},
  title     = {Introduction to the non-asymptotic analysis of random matrices},
  journal   = {arXiv preprint},
  volume    = {arXiv:1011.3027},
  year      = {2010}
}

@book{wainwright,
  title={High-Dimensional Statistics: A Non-Asymptotic Viewpoint},
  author={Wainwright, M.J.},
  isbn={9781108498029},
  lccn={2018043475},
  series={Cambridge Series in Statistical and Probabilistic Mathematics},
  url={https://books.google.com/books?id=8C8nuQEACAAJ},
  year={2019},
  publisher={Cambridge University Press}
}

@inproceedings{jitelgarsky20.polylog,
  author    = {Ziwei Ji and
               Matus Telgarsky},
  title     = {Polylogarithmic width suffices for gradient descent to achieve arbitrarily
               small test error with shallow {ReLU} networks},
  year = {2020},
  booktitle = {International Conference on Learning Representations (ICLR)},
}

@article{zou2019gradient,
author={Zou, Difan
and Cao, Yuan
and Zhou, Dongruo
and Gu, Quanquan},
title={Gradient descent optimizes over-parameterized deep {ReLU} networks},
journal={Machine Learning},
year={2019}
}

@article{kearns.agnostic,
  title={Toward efficient agnostic learning},
  author={Kearns, Michael J and Schapire, Robert E and Sellie, Linda M},
  journal={Machine Learning},
  volume={17},
  number={2-3},
  pages={115--141},
  year={1994},
  publisher={Springer}
}

@inproceedings{frei2019resnet,
  title={Algorithm-Dependent Generalization Bounds for Overparameterized Deep Residual Networks},
  author={Frei, Spencer and Cao, Yuan and Gu, Quanquan},
  booktitle={Advances in Neural Information Processing Systems (NeurIPS)},
  year={2019}
}

@inproceedings{
brutzkus2018sgd,
title={{SGD} Learns Over-parameterized Networks that Provably Generalize on Linearly Separable Data},
author={Alon Brutzkus and Amir Globerson and Eran Malach and Shai Shalev-Shwartz},
booktitle={International Conference on Learning Representations (ICLR)},
year={2018},
}

@inproceedings{zhang2017rethinkinggeneralization,
      title={Understanding deep learning requires rethinking generalization}, 
      author={Chiyuan Zhang and Samy Bengio and Moritz Hardt and Benjamin Recht and Oriol Vinyals},
      year={2017},
      booktitle={International Conference on Learning Representations (ICLR)}
}

@inproceedings{jacot2018ntk,
      title={Neural Tangent Kernel: Convergence and Generalization in Neural Networks}, 
      author={Arthur Jacot and Franck Gabriel and Clément Hongler},
      year={2018},
      booktitle={Advances in Neural Information Processing Systems (NeurIPS)}
}

@inproceedings{allenzhu2019convergence,
      title={A Convergence Theory for Deep Learning via Over-Parameterization}, 
      author={Zeyuan Allen-Zhu and Yuanzhi Li and Zhao Song},
      year={2019},
      booktitle={International Conference on Machine Learning (ICML)}
}

@inproceedings{du2019-1layer,
  author    = {Simon S. Du and
               Xiyu Zhai and
               Barnab{\'{a}}s P{\'{o}}czos and
               Aarti Singh},
  title     = {Gradient Descent Provably Optimizes Over-parameterized Neural Networks},
  year = {2019},
  booktitle = {International Conference on Learning Representations (ICLR)}
}

@inproceedings{arora2019exact,
  title={On exact computation with an infinitely wide neural net},
  author={Arora, Sanjeev and Du, Simon S and Hu, Wei and Li, Zhiyuan and Salakhutdinov, Ruslan and Wang, Ruosong},
  booktitle={Advances in Neural Information Processing Systems (NeurIPS)},
  year={2019}
}

@inproceedings{cao2019generalization,
  title={Generalization Error Bounds of Gradient Descent for Learning Over-parameterized Deep {ReLU} Networks},
  author={Cao, Yuan and Gu, Quanquan},
  booktitle={the Thirty-Fourth AAAI Conference on Artificial Intelligence},
  year={2020}
}

@inproceedings{jitelgarsky2019implicit,
author    = {Ziwei Ji and
               Matus Telgarsky},
      title={The implicit bias of gradient descent on nonseparable data},
      year={2019},
      booktitle = {Conference on Learning Theory (COLT)}
}

@article{soudry2018implicitbias,
  author  = {Daniel Soudry and Elad Hoffer and Mor Shpigel Nacson and Suriya Gunasekar and Nathan Srebro},
  title   = {The Implicit Bias of Gradient Descent on Separable Data},
  journal = {Journal of Machine Learning Research (JMLR)},
  year    = {2018},
  volume  = {19},
  number  = {70},
  pages   = {1-57},
}

@article{chatterji2020linearnoise,
      title={Finite-sample analysis of interpolating linear classifiers in the overparameterized regime}, 
      author={Niladri S. Chatterji and Philip M. Long},
       journal = {Journal of Machine Learning Research},
  year    = {2021},
  volume  = {22},
  number  = {129},
  pages   = {1-30},
}

@inproceedings{frei2021provable,
      title={Provable Generalization of {SGD}-trained Neural Networks of Any Width in the Presence of Adversarial Label Noise}, 
      author={Spencer Frei and Yuan Cao and Quanquan Gu},
      year={2021},
      booktitle={International Conference on Machine Learning (ICML)}

}

@inproceedings{frei2021proxyconvex,
      title={Proxy Convexity: A Unified Framework for the Analysis of Neural Networks Trained by Gradient Descent}, 
      author={Spencer Frei and Quanquan Gu},
      year={2021},
      booktitle = {Advances in Neural Information Processing Systems (NeurIPS)}
      
}

@article{bartlett2020.benignoverfitting.pnas,
  author = {Peter L. Bartlett and Philip M. Long and G\'{a}bor Lugosi
and Alexander Tsigler},
  title = {Benign Overfitting in Linear Regression},
  journal = {Proceedings of the National Academy of Sciences},
  year = {2020},
  volume = {117},
  number = {48},
  pages = {30063--30070},
}

@inproceedings{belkin2018overfittingperfectfitting,
 author = {Belkin, Mikhail and Hsu, Daniel J and Mitra, Partha},
 booktitle = {Advances in Neural Information Processing Systems (NeurIPS)},
 title = {Overfitting or perfect fitting? {R}isk bounds for classification and regression rules that interpolate},
 year = {2018}
}

@article{muthukumar2021classification,
  author = "Muthukumar, Vidya and Narang, Adhyyan and Subramanian, Vignesh and Belkin, Mikhail and Hsu, Daniel and Sahai, Anant",
  journal = "Journal of Machine Learning Research",
  title = "Classification vs regression in overparameterized regimes: Does the loss function matter?",
  number = "222",
  volume = "22",
  pages = "1-69",
  year = "2021"
}

@inproceedings{cao2021benign,
      title={Risk Bounds for Over-parameterized Maximum Margin Classification on Sub-Gaussian Mixtures}, 
      author={Yuan Cao and Quanquan Gu and Mikhail Belkin},
      year={2021},
      booktitle={Advances in Neural Information Processing Systems (NeurIPS)}
}

@article{wang2021binary,
      title={Binary Classification of Gaussian Mixtures: Abundance of Support Vectors, Benign Overfitting and Regularization}, 
      author={Ke Wang and Christos Thrampoulidis},
      year={2021},
      journal={Preprint, arXiv:2011.09148},
}

@article{soltanolkotabi2017overparameterized,
  title     = {Theoretical Insights Into the Optimization Landscape of Over-Parameterized
               Shallow Neural Networks},
  author = {Mahdi Soltanolkotabi and
               Adel Javanmard and
               Jason D. Lee},
  journal   = {{IEEE} Transactions on Information Theory},
  volume    = {65},
  number    = {2},
  pages     = {742--769},
  year      = {2019},
}

@article{tsigler2020benign,
      title={Benign overfitting in ridge regression}, 
      author={A. Tsigler and P. L. Bartlett},
      year={2020},
      journal={Preprint, arXiv:2009.14286},
}

@article{montanari2021interpolationphasetransition,
      title={The Interpolation Phase Transition in Neural Networks: Memorization and Generalization under Lazy Training}, 
      author={Andrea Montanari and Yiqiao Zhong},
      year={2021},
      journal={Preprint, arXiv:2007.12826},
}

@InProceedings{liang2020multipledescent,
  title = 	 {On the Multiple Descent of Minimum-Norm Interpolants and Restricted Lower Isometry of Kernels},
  author =       {Liang, Tengyuan and Rakhlin, Alexander and Zhai, Xiyu},
  booktitle = 	 {Conference on Learning Theory (COLT)},
  year = 	 {2020},
}

@inproceedings{yang2021feature,
      title={Feature Learning in Infinite-Width Neural Networks}, 
      author={Greg Yang and Edward J. Hu},
      year={2021},
      booktitle = {International Conference on Machine Learning (ICML)}
}

@book{ledoux2001concentration,
  title={The Concentration of Measure Phenomenon},
  author={Ledoux, M.},
  isbn={9780821837924},
  series={Mathematical surveys and monographs},
  year={2001},
  publisher={American Mathematical Society}
}

@inproceedings{ji2021earlystopped,
      title={Early-stopped neural networks are consistent}, 
      author={Ziwei Ji and Justin D. Li and Matus Telgarsky},
      year={2021},
      booktitle={Advances in Neural Information Processing Systems (NeurIPS)}
}

@article{liang2021interpolating,
  title={Interpolating classifiers make few mistakes},
  author={Liang, Tengyuan and Recht, Benjamin},
  journal={arXiv preprint arXiv:2101.11815},
  year={2021}
}

@article{wang2021importance,
  title={Is Importance Weighting Incompatible with Interpolating Classifiers?},
  author={Wang, Ke Alexander and Chatterji, Niladri S and Haque, Saminul and Hashimoto, Tatsunori},
  journal={arXiv preprint arXiv:2112.12986},
  year={2021}
}

@inproceedings{hsu2021proliferation,
  title={On the proliferation of support vectors in high dimensions},
  author={Hsu, Daniel and Muthukumar, Vidya and Xu, Ji},
  booktitle={International Conference on Artificial Intelligence and Statistics (AISTATS)},
  pages={91--99},
  year={2021},
}

@article{hastie2019surprises,
  title={Surprises in high-dimensional ridgeless least squares interpolation},
  author={Hastie, Trevor and Montanari, Andrea and Rosset, Saharon and Tibshirani, Ryan J},
  journal={arXiv preprint arXiv:1903.08560},
  year={2019}
}

@article{muthukumar2020harmless,
  title={Harmless interpolation of noisy data in regression},
  author={Muthukumar, Vidya and Vodrahalli, Kailas and Subramanian, Vignesh and Sahai, Anant},
  journal={IEEE Journal on Selected Areas in Information Theory},
  year={2020}
}

@inproceedings{negrea2020defense,
  title={In defense of uniform convergence: Generalization via derandomization with an application to interpolating predictors},
  author={Negrea, Jeffrey and Dziugaite, Gintare Karolina and Roy, Daniel},
  booktitle={International Conference on Machine Learning},
  pages={7263--7272},
  year={2020}
  }

@article{chinot2020robustness_min,
  title={On the robustness of minimum-norm interpolators},
  author={Chinot, Geoffrey and L{\"o}ffler, Matthias and van de Geer, Sara},
  journal={arXiv preprint arXiv:2012.00807},
  year={2020}
}

@article{koehler2021uniform,
  title={Uniform convergence of interpolators: Gaussian width, norm bounds, and benign overfitting},
  author={Koehler, Frederic and Zhou, Lijia and Sutherland, Danica J and Srebro, Nathan},
  journal={arXiv preprint arXiv:2106.09276},
  year={2021}
}

@article{chatterji2021interplay,
  title={The Interplay Between Implicit Bias and Benign Overfitting in Two-Layer Linear Networks},
  author={Chatterji, Niladri S and Long, Philip M and Bartlett, Peter L},
  journal={arXiv preprint arXiv:2108.11489},
  year={2021}
}

@article{chatterji2021foolish,
  title={Foolish Crowds Support Benign Overfitting},
  author={Chatterji, Niladri S and Long, Philip M},
  journal={arXiv preprint arXiv:2110.02914},
  year={2021}
}

@article{wang2021tight,
  title={Tight bounds for minimum l1-norm interpolation of noisy data},
  author={Wang, Guillaume and Donhauser, Konstantin and Yang, Fanny},
  journal={arXiv preprint arXiv:2111.05987},
  year={2021}
}

@article{liang2020just,
  title={Just interpolate: Kernel ``ridgeless" regression can generalize},
  author={Liang, Tengyuan and Rakhlin, Alexander},
  journal={Annals of Statistics},
  volume={48},
  number={3},
  pages={1329--1347},
  year={2020}
}

@article{mei2019generalization,
  title={The generalization error of random features regression: Precise asymptotics and the double descent curve},
  author={Mei, Song and Montanari, Andrea},
  journal={Communications on Pure and Applied Mathematics},
  year={2019}
}

@article{montanari2019generalization,
  title={The generalization error of max-margin linear classifiers: High-dimensional asymptotics in the overparametrized regime},
  author={Montanari, Andrea and Ruan, Feng and Sohn, Youngtak and Yan, Jun},
  journal={arXiv preprint arXiv:1911.01544},
  year={2019}
}

@article{liang2020precise,
  title={A precise high-dimensional asymptotic theory for boosting and min-$\ell_1$-norm interpolated classifiers},
  author={Liang, Tengyuan and Sur, Pragya},
  journal={arXiv preprint arXiv:2002.01586},
  year={2020}
}

@article{li2021minimum,
  title={Minimum $\ell_1$-norm interpolators: Precise asymptotics and multiple descent},
  author={Li, Yue and Wei, Yuting},
  journal={arXiv preprint arXiv:2110.09502},
  year={2021}
}

@article{belkin2019reconciling,
  title={Reconciling modern machine-learning practice and the classical bias--variance trade-off},
  author={Belkin, Mikhail and Hsu, Daniel J and Ma, Siyuan and Mandal, Soumik},
  journal={Proceedings of the National Academy of Sciences},
  volume={116},
  number={32},
  pages={15849--15854},
  year={2019}
}

@book{vapnik1999nature,
  title={The nature of statistical learning theory},
  author={Vapnik, Vladimir},
  year={1999},
  publisher={Springer}
}

@article{shawe1998structural,
  title={Structural risk minimization over data-dependent hierarchies},
  author={Shawe-Taylor, John and Bartlett, Peter L and Williamson, Robert C and Anthony, Martin},
  journal={IEEE transactions on Information Theory},
  volume={44},
  number={5},
  pages={1926--1940},
  year={1998}
}

@inproceedings{wang2021benignmulticlass,
      title={Benign Overfitting in Multiclass Classification: All Roads Lead to Interpolation}, 
      author={Ke Wang and Vidya Muthukumar and Christos Thrampoulidis},
      year={2021},
      booktitle={Advances in Neural Information Processing Systems (NeurIPS)}
}

@article{minsker2021minimaxinterpolation,
    title = {Minimax Supervised Clustering in the Anisotropic Gaussian Mixture Model: A New Take on Robust Interpolation},
    author = {Stanislav Minsker and Mohamed Ndaoud and Yiqiu Shen},
    year = {2021},
    journal = {Preprint, arXiv:2111.07041}
}

@article{cao2022benignconvolutional,
  author = {Cao, Yuan and Chen, Zixiang and Belkin, Mikhail and Gu, Quanquan},
  title = {Benign Overfitting in Two-layer Convolutional Neural Networks},
  year = {2022},
  journal = {Preprint, arXiv:2202.06526}
}

@article{giraud2019partial,
  title={Partial recovery bounds for clustering with the relaxed $ K $-means},
  author={Giraud, Christophe and Verzelen, Nicolas},
  journal={Mathematical Statistics and Learning},
  volume={1},
  number={3},
  pages={317--374},
  year={2019}
}

\end{document}